%$Id$
\def\full{1}
\ifnum\full=1
\documentclass[11pt]{article}
\else
\documentclass[10pt,twocolumn]{article}
\fi

\def\final{1}
\usepackage{times,color}
\ifnum\full=0 %shiva-are we allowed to do this?
% ads 8/2/08: using titlesec doesn't fit IEEE format...
%\usepackage[small,compact]{titlesec}
\else
\usepackage{pifont,xspace,epsfig,wrapfig}
\usepackage{fullpage}
\usepackage[small,it]{caption}
\usepackage{graphicx}
\fi
\usepackage{amsmath, amsthm, amssymb}
\usepackage{latexsym}
\usepackage{amsfonts}
\ifnum\full=1
\usepackage{hyperref}
\fi

\newenvironment{remark}{\noindent {\em Remark:}}{\smallskip}

\DeclareSymbolFont{AMSb}{U}{msb}{m}{n}
\DeclareMathSymbol{\N}{\mathbin}{AMSb}{"4E}
\DeclareMathSymbol{\Z}{\mathbin}{AMSb}{"5A}
\DeclareMathSymbol{\R}{\mathbin}{AMSb}{"52}
\DeclareMathSymbol{\Q}{\mathbin}{AMSb}{"51}
\DeclareMathSymbol{\erert}{\mathbin}{AMSb}{"50}
\DeclareMathSymbol{\I}{\mathbin}{AMSb}{"49}
\DeclareMathSymbol{\C}{\mathbin}{AMSb}{"43}

\newcommand{\pac}{\ensuremath{\text{\sf PAC}}}
\newcommand{\ppac}{\ensuremath{\text{\sf PPAC}}} %sofya: priv-PAC sounds worse to me
% We can make it "private-PAC" if we prefer
\newcommand{\parity}{\ensuremath{\text{\sf PARITY}}}
\newcommand{\mparity}{\ensuremath{\text{\sf MASKED-PARITY}}}
\newcommand{\sq}{\ensuremath{\text{\sf SQ}}}
\newcommand{\nasq}{\ensuremath{\text{\sf NASQ}}}
\newcommand{\li}{\ensuremath{\text{\sf LI}}}
\newcommand{\lni}{\ensuremath{\text{\sf LNI}}}

\ifnum\final=0
\newcommand{\mynote}[1]{\marginpar{\tiny\sf #1}}
\else
\newcommand{\mynote}[1]{}
\fi
\newcommand{\anote}[1]{\mynote{ Adam: {#1}}}
\newcommand{\snote}[1]{\mynote{Shiva: {#1}}}
\newcommand{\rnote}[1]{\mynote{Sofya: {#1}}}

\newcommand{\ie}{{\it i.e.,\ }}
\newcommand{\eg}{{\it e.g.,\ }}
\newcommand{\etal}{{\it et al.\,}}
\newcommand{\through}{,\ldots,}
\newcommand{\AAA}{\mathcal A}
\newcommand{\BBB}{\mathcal B}
\newcommand{\CCC}{\mathcal C}
\newcommand{\HHH}{\mathcal H}
\newcommand{\DDD}{\mathcal D}
\newcommand{\XXX}{\mathcal X}

\newcommand{\remove}[1]{}
\newcommand{\eps}{\epsilon}
\newcommand{\z}{\mathrm{z}}
\newcommand{\zo}{\{0,1\}}
\newcommand{\oo}{\{+1,-1\}}

\def\E{\mathop{\mathbb{E}}\displaylimits}
\def\poly{\mathop{\rm{poly}}\nolimits}
\def\Lap{\mathop{\rm{Lap}}\nolimits}

\newtheorem{theorem}{Theorem}[section]
\newtheorem{lemma}[theorem]{Lemma}
\newtheorem{prop}[theorem]{Proposition}

\newtheorem{claim}[theorem]{Claim}

\newtheorem{corollary}[theorem]{Corollary}
\newtheorem{definition}[theorem]{Definition}

\newcommand{\alg}[2]{\begin{center}\fbox{\begin{minipage}{0.99\columnwidth}{\begin{center}\underline{\textsc{#1}}\end{center}{#2}}\end{minipage}}\end{center}}
\newcommand{\thmref}[1]{Theorem~\ref{thm:#1}}
\newcommand{\lemref}[1]{Lemma~\ref{lem:#1}}

\newcommand{\figref}[1]{Figure~\ref{fig:#1}}

\newcommand{\secref}[1]{Section~\ref{sec:#1}}
\newcommand{\defref}[1]{Definition~\ref{def:#1}}

\newcommand{\trueerror}{{\text{\it err}}}

\begin{document}
\ifnum\full=0
\title{What Can We Learn Privately?\thanks{All omitted proofs and details appear in the full version~\cite{KLNRS08}.}}
\else
\title{What Can We Learn Privately?\thanks{A preliminary version of this paper appeared in 49th Annual IEEE Symposium on Foundations of Computer Science~\cite{KLNRSf08}.} }
\fi
\author{Shiva Prasad Kasiviswanathan\thanks{CCS-3, Los Alamos National Laboratory. Part of this work done while a student at Pennsylvania State University and supported by NSF award CCF-072891.} \and Homin K. Lee\thanks{Department of Computer Science, Columbia University, hkl7@columbia.edu} \and Kobbi Nissim\thanks{Department of Computer Science, Ben-Gurion University, kobbi@cs.bgu.ac.il. Supported in part by the Israel Science Foundation (grant 860/06), and by the Frankel Center for Computer Science.} \and Sofya Raskhodnikova\thanks{Department of Computer Science and Engineering, Pennsylvania State University, $\{$sofya,asmith$\}$@cse.psu.edu. S.R. and A.S. are supported in part by NSF award CCF-0729171.} \and Adam Smith\footnotemark[5]}

\maketitle

\begin{abstract}
Learning problems form an important category of computational tasks that generalizes many of the computations researchers apply to large real-life data sets. We ask: what concept classes can be learned privately, namely, by an algorithm whose output does not depend too heavily on any one input or specific training example? More precisely, we investigate  learning algorithms that satisfy {\em differential privacy}, a notion that provides strong confidentiality guarantees in contexts where aggregate information is released about a database containing sensitive information about individuals.
%shiva-previous line too long.
\ifnum\full=0
%shiva-added comma
We present several basic results that demonstrate general feasibility of private learning and relate several models
% ads 7/30/08: removed commas
previously studied separately in the contexts of privacy and standard learning.
% see http://owl.english.purdue.edu/handouts/grammar/g_comma.html	
% for info on using commas
%shiva - I added a comma before "and" bcos I thought the part before and after "and" are independent. But I really don't care. Thanks for the url.
\else

Our goal is a broad understanding of the resources required for private learning in terms of samples, computation time, and interaction. We demonstrate that, ignoring computational constraints, it is possible to privately agnostically learn any concept class using a sample size approximately logarithmic in the cardinality of the concept class. Therefore, almost anything learnable is learnable privately: specifically, if a concept class is learnable by a (non-private) algorithm with polynomial sample complexity and output size, then it can be learned privately using a polynomial number of samples. We also present a computationally efficient private PAC learner for the class of {\em parity} functions. This result dispels the similarity between learning with noise and private learning (both must be robust to small changes in inputs), since parity is thought to be very hard to learn given random classification noise.

{\em Local} (or {\em randomized response}) algorithms are a practical class of private algorithms that have received extensive investigation. We provide a precise characterization of local private learning algorithms. We show that a concept class is learnable by a local algorithm if and only if it is learnable in the {\em statistical query} (SQ) model. Therefore, for local private learning algorithms, the similarity to learning with noise is stronger: local learning is equivalent to SQ learning, and SQ algorithms include most known noise-tolerant learning algorithms.  Finally, we present a separation between the power of {\em interactive} and \emph{noninteractive} local learning algorithms. Because of the equivalence to SQ learning, this result also separates \emph{adaptive} and \emph{nonadaptive} SQ learning.
\fi
\end{abstract}

\section{Introduction}
The data privacy problem in modern databases is similar to that faced by  statistical agencies and medical researchers: to learn and publish global analyses of a population while maintaining the confidentiality of the participants in a survey. There is a vast body of work on this problem in statistics and computer science. However, until recently, most schemes proposed in the literature lacked rigorous analysis of privacy and utility.

A recent line of work%
\ifnum\full=1
~\cite{EGS03,DwNi04,BDMN05,DMNS06,DKMMN06,Dwork06,NRS07,DMT07,MT07,BCDKMT07,RHS07,BLR08,DY08}
\fi
, initiated by Dinur and Nissim~\cite{DiNi03} and called {\em private data analysis}, seeks to place data privacy on firmer theoretical foundations and has been  successful at formulating a strong, yet attainable privacy definition. The notion of {\em differential privacy}~\cite{DMNS06} that emerged from this line of work provides rigorous guarantees even in the presence of a malicious adversary with access to  arbitrary auxiliary information. It requires that whether an individual supplies her actual or fake information has almost no effect on the outcome of the analysis.

Given this definition, it is natural to ask: what computational tasks can be performed while maintaining privacy? Research on data privacy, to the extent that it formalizes precise goals,  has mostly focused on {\em function evaluation} (``what is the value of $f(\mathrm z)$?''), namely, how much privacy is possible if one wishes to release (an approximation to) a particular function $f$, evaluated on the database $\mathrm{z}$. (A notable exception is the recent work of McSherry and Talwar, using differential privacy in the design of auction mechanisms~\cite{MT07}). Our goal is to expand the utility of private protocols by examining which other computational tasks can be performed in a privacy-preserving manner.

\paragraph{Private Learning.}
\ifnum\full=1
Learning problems form an important category of computational tasks that generalizes many of the computations researchers apply to large real-life data sets.
\fi
In this work, we ask
what can be learned \emph{privately}, namely, by an algorithm whose output does not depend too heavily on any one input or specific training example.
% sofya: removed "?"
 Our goal is a broad understanding of the resources required for private learning in terms of samples, computation time, and interaction. We examine two basic notions from
\ifnum\full=1
computational learning theory:
\else
learning:
\fi
Valiant's probabilistically approximately correct (PAC) learning~\cite{Valiant84} model and Kearns' statistical query (SQ) model~\cite{Kearns98}.

Informally, a concept is a function from examples to labels, and a class of concepts is learnable if for any distribution $\DDD$ on examples, one can, given limited access to examples sampled from $\DDD$ labeled according to some target concept $c$, find a small circuit (hypothesis) which predicts $c$'s labels with high probability over
future examples taken from the same distribution. In the PAC model, a learning algorithm can access a polynomial number of labeled examples. In the SQ model, instead of accessing examples directly, the learner can specify some properties (i.e., predicates) on the examples, for which he is given an estimate, up to an additive polynomially small error, of the probability that a random example chosen from $\DDD $ satisfies the property. PAC learning is strictly stronger than the SQ learning ~\cite{Kearns98}.

We model a statistical database as a vector $\z =(z_1,\cdots,z_n)$, where each entry has been contributed by an individual. When analyzing how well a private algorithm learns a concept class, we assume that entries $z_i$ of the database are random examples generated i.i.d.\ from the underlying distribution $\DDD$ and labeled by a target concept $c$. This is exactly how (not necessarily private) learners are analyzed. For instance, an example might consist of an individual's gender, age, and blood pressure history, and the label, whether this individual has had a heart attack. The algorithm has to learn to predict whether an individual has had a heart attack, based on gender, age, and blood pressure history, generated according to $\DDD$.

We require a private algorithm to keep entire examples (not only the labels) confidential. In the scenario above, it translates to not revealing each participant's gender, age, blood pressure history, and heart attack incidence. More precisely, the output of a private learner should not be significantly affected if a particular example $z_i$ is replaced with arbitrary $z'_i$, for all $z_i$ and $z'_i$. In contrast to correctness or utility, which is analyzed with respect to distribution $\DDD$, differential privacy is a worst-case notion. Hence, when we analyze the privacy of our learners we do not make any assumptions on the underlying distribution. Such assumptions are fragile and, in particular, would fall apart in the presence of auxiliary knowledge
\ifnum\full=1
(also called background knowledge or side information)
\fi
that the adversary might have: conditioned on the adversary's auxiliary knowledge, the distribution over examples might look very different from $\DDD$. %shiva-explain this point in the full version.

\subsection{Our Contributions} \label{sec:contribs}
We introduce and formulate private learning problems, as discussed above, and develop novel algorithmic tools and bounds on the sample size required by private learning algorithms. Our results paint a picture of the classes of learning problems that are solvable subject to privacy constraints. Specifically, we provide:
\newcommand{\resref}[1]{{\bf (\ref{res:#1})}}
\begin{list}{{\bf (\arabic{enumi})}}{\usecounter{enumi}
\setlength{\leftmargin}{\parindent}
\setlength{\listparindent}{\parindent}
\setlength{\parsep}{-1pt}}

\item \label{res:occam} {\bf A Private Version of Occam's Razor.}  We
  present a generic private learning algorithm. For any concept class
  $\CCC$, we give a distribution-free differentially-private agnostic
  PAC learner for $\CCC$ that uses a number of samples proportional to
  $\log|\CCC|$. This is a private analogue of the ``cardinality
  version'' of \emph{Occam's razor}, a basic sample complexity bound
  from (non-private) learning theory. The sample complexity of our
  version is similar to that of the original, although the private
  algorithm is very different. As in Occam's razor, the learning
  algorithm is not necessarily computationally efficient.~\anote{Point
    to compression here?}

\item \label{res:parity} {\bf An Efficient Private Learner for Parity.}
We give a computationally efficient, distribution-free differentially private PAC learner for the class of {\em parity} functions\footnote{While the generic learning result \resref{occam} extends easily to ``agnostic'' learning (defined below), the learner for parity does not. The limitation is not surprising, since even non-private agnostic learning of parity is at least as hard as learning parity with random noise.} over $\zo^{d}$. The sample and time complexity are comparable to that of the best non-private learner.

\item \label{res:sq-local} {\bf Equivalence of Local (``Randomized Response'') and SQ Learning.}
We precisely characterize the power of {\em local}, or {\em randomized response}, private learning algorithms. Local algorithms are a special (practical) class of private algorithms and are popular in the data mining and statistics literature%
\ifnum\full=1
~\cite{W65,AS00,AA01,AH05,HH02,EGS03,MS06,HB08}%
\fi
. They add randomness to each individual's data independently before processing the input. We show that a concept class is learnable by a local differentially private algorithm if and only if it is learnable in the {\em statistical query} (SQ) model. This equivalence relates notions that were conceived in very different contexts.

\item \label{res:li-lni} {\bf Separation of Interactive and Noninteractive Local Learning.}
Local algorithms can be {\em noninteractive}, that is, using one round of interaction with individuals holding the data, or {\em interactive}, that is, using more than one round (and in each receiving randomized responses from individuals).
We construct a concept class, called {\em masked-parity}, that is efficiently learnable by {\em interactive}  local algorithms under the uniform distribution on examples, but requires an exponential (in the dimension) number of samples to be learned by a {\em noninteractive} local algorithm. The equivalence~\resref{sq-local} of local and SQ learning shows that interaction in local algorithms corresponds to {\em adaptivity} in SQ algorithms. The {\em masked-parity} class thus also separates adaptive and nonadaptive SQ learning.
\end{list}

\subsubsection{Implications}
\paragraph{``Anything'' learnable is privately learnable using few samples.} The generic agnostic learner \resref{occam} has an important consequence: if some concept class $\CCC$ is learnable by any algorithm, not necessarily a private one, whose output length in bits is polynomially bounded, then $\CCC$ is learnable privately using a polynomial number of samples (possibly in exponential time). This result establishes the basic feasibility of private learning: it was not clear {\em a priori} how severely privacy affects sample complexity, even ignoring computation time.

\paragraph{Learning with noise is different from private learning.}
There is an intuitively appealing similarity between learning from noisy examples and private learning: algorithms for both problems must be robust to small variations in the data. This apparent similarity is strengthened by a result of
Blum, Dwork, McSherry and Nissim~\cite{BDMN05} showing that any algorithm in Kearns' {\em statistical query} (SQ) model~\cite{Kearns98} can be implemented in a differentially private manner.
SQ was introduced to capture a class of noise-resistant learning algorithms. These algorithms access their input only through a sequence of approximate averaging queries. One can {\em privately} approximate the average of a function with values in $[0,1]$ over the data set of $n$ individuals to within additive error $O(1/n)$ (Dwork and Nissim~\cite{DwNi04}). Thus, one can simulate the behavior of an SQ algorithm privately, query by query.

Our  efficient private learner for parity \resref{parity}  dispels the similarity between learning with noise and private learning.
First, SQ algorithms provably require exponentially many (in the dimension)  queries to learn parity~\cite{Kearns98}. More compellingly, learning parity with noise is thought to be computationally hard, and has been used as the basis of several cryptographic primitives (\eg~\cite{BKW03,HB01,Alekhnovich03,Regev05parity}).

\paragraph{Limitations of local (``randomized response'') algorithms.}
%\hnote{Maybe we should move this paragraph to 1 or 1.1 and include the casual definition of interactive/noninteractive there.}\anote{Done, I think...}
 {\em Local} algorithms (also referred to as {\em randomized response}, {\em input perturbation}, {\em Post Randomization Method ({\small PRAM})}, and
{\em  Framework for High-Accuracy Strict-Privacy Preserving Mining ({\small FRAPP})})
have been studied extensively in the context of privacy-preserving data mining, both in statistics and computer science (\eg~\cite{W65,AS00,AA01,AH05,HH02,EGS03,MS06,HB08}).
Roughly, a local algorithm accesses each individual's data via independent randomization operators.
\ifnum\full=1
See \figref{models}, p.~\pageref{fig:models}.

\begin{figure*}
\vspace*{-2cm}
\begin{center}
\includegraphics[width=15cm]{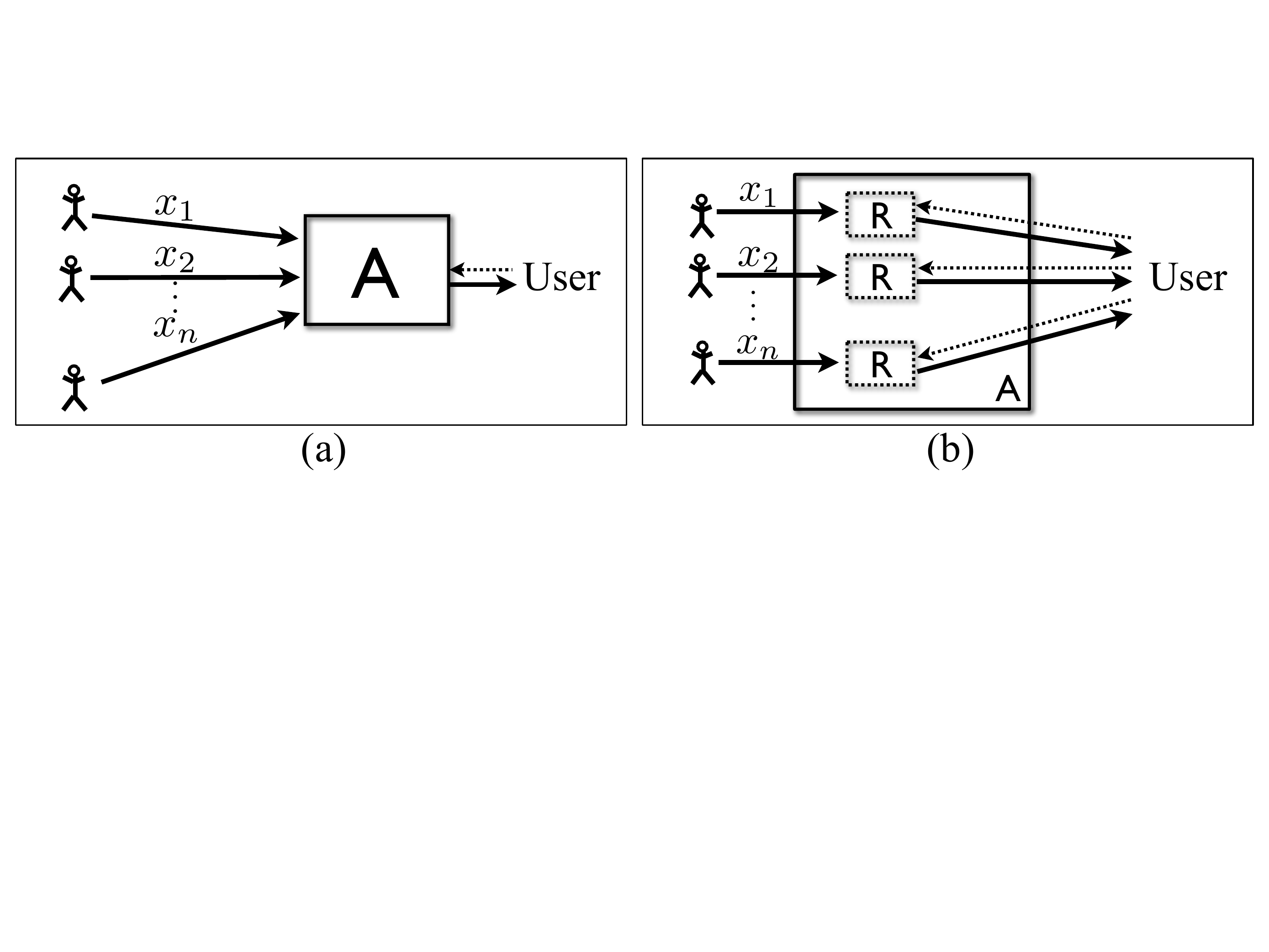}
\vspace*{-2.3in}
\caption{Two basic models for database privacy: (a) the \emph{centralized} model, in which data is collected by a trusted agency that publishes aggregate statistics or answers users' queries; (b) the \emph{local} model, in which users retain their data and run a randomization procedure locally to produce output which is safe for publication. The dotted arrows from users to data holders indicate that protocols may be completely noninteractive: in this case there is a single publication, without feedback from users.}
\label{fig:models}
\end{center}
\end{figure*}

\fi
Local algorithms were introduced to encourage truthfulness in surveys: respondents who know that their data will be randomized are more likely to answer honestly%
\ifnum\full=0
~(Warner \cite{W65}).
\else
. For example, Warner \cite{W65} famously considered a survey technique in which respondents are asked to give the correct answer to a sensitive (true/false) question with probability $2/3$ and the incorrect answer with probability $1/3$, in the hopes that the added uncertainty would encourage them to answer honestly. The proportion of ``true'' answers in the population is then estimated using a standard, non-private deconvolution.
\fi
The accepted privacy requirement for local algorithms is equivalent to imposing differential privacy on each randomization operator~\cite{EGS03}. Local algorithms are popular because they are easy to understand and implement. In the extreme case, users can retain their data and apply the randomization operator themselves, using %either
a physical device~\cite{W65,MN06} or a cryptographic protocol~\cite{AJL04}.

The equivalence between local and SQ algorithms \resref{sq-local} is a powerful tool that allows us to apply results from learning theory. In particular, since parity is not learnable with a small number of SQ queries~\cite{Kearns98} but is PAC learnable
%sofya: removed ref to centralized model
%in the {\em centralized} (i.e., non-local) model,
privately \resref{parity},
we get that
local algorithms require exponentially more data for some learning tasks than do general private algorithms. Our results also imply that local algorithms are strictly {\em less} powerful than (non-private) algorithms for learning with classification noise because subexponential (non-private) algorithms can learn parity with noise~\cite{BKW03}.

\paragraph{Adaptivity in SQ algorithms is important.}
%sofya: changed the wording
%ads 8/3/08 interaction in local algs now explained in bullet (3).
%Local algorithms can be {\em interactive}, meaning that the randomization operators they apply might depend on previous responses. Similarly,
Just as local algorithms can be interactive,
SQ algorithms can be {\em adaptive}, that is,
the averaging queries they make may depend on answers to previous queries.
The equivalence of SQ and local algorithms \resref{sq-local} preserves interaction/adaptivity: a concept class is nonadaptively SQ learnable if and only if it is noninteractively locally learnable. The  masked parity class \resref{li-lni} shows that interaction (resp., adaptivity) adds considerable power to local (resp., SQ) algorithms.

Most of the reasons that local algorithms are so attractive in practice, and have received such attention, apply only to noninteractive algorithms (interaction can be costly, complicated, or even impossible---for instance, when statistical information is collected by an interviewer, or at a polling booth).

This suggests that further investigating the power of nonadaptive SQ learners is an important problem. For example, the SQ algorithm for learning conjunctions~\cite{Kearns94} is nonadaptive, but SQ formulations of the perceptron and $k$-means algorithms~\cite{BDMN05} seem to rely heavily on adaptivity.

%A proof that practical computations such as these  cannot be efficiently implemented in the nonadaptive SQ model would have significant repercussions for privacy-preserving data mining.
%\anote{I only thought of this last night... does anyone disagree? Is it better just not to state this as a question until we think about it more?}\rnote{What are we gaining by advertising this open question?}

\paragraph{Understanding the ``price'' of privacy for learning problems.} The SQ result of Blum \etal\cite{BDMN05} and our learner for parity \resref{parity}  provide efficient (i.e., polynomial time) private learners for essentially all the concept classes known (by us) to have efficient non-private distribution-free learners. Finding a concept class that can be learned efficiently, but not privately and efficiently, remains an interesting and important question.

Our results also lead to questions of optimal sample complexity for learning problems of practical importance. The private simulation of SQ algorithms
\ifnum\full=1
due to Blum \etal%
\else
in
\fi
\cite{BDMN05} uses a factor of approximately $\sqrt{t}/\eps$ more data points than the na{\"\i}ve non-private implementation, where $t$ is the number of SQ queries and $\eps$ is the parameter of differential privacy (typically a small constant). In contrast, the generic agnostic learner \resref{occam} uses a factor of at most $1/\eps$ more samples than the corresponding non-private learner. For parity, our private learner uses a factor of roughly $1/\eps$ more samples than, and about the same computation time as, the non-private learner.  What, then, is the additional cost of privacy when  learning practical concept classes (half-planes, low-dimensional curves, etc)? Can the theoretical sample bounds of \resref{occam} be matched by (more) efficient learners?

\subsubsection{Techniques}\label{sec:techniques}
Our generic private learner \resref{occam} adapts the exponential sampling technique of
McSherry and Talwar
~\cite{MT07}, developed in the context of auction design. Our use of the exponential mechanism inspired an elegant {\em subsequent} result of Blum, Liggett, and Roth~\cite{BLR08} (BLR) on simultaneously approximating many different functions.
\anote{Deleted some discussion here made irrelevant by simpler proof of VC bound.}
% Their result can, in turn, be used to derive a version of our statement for hypotheses of bounded VC dimension (rather than bounded cardinality), when the space of examples has bounded size%
% \ifnum\full=1
% \ (see \secref{vclearn})%
% \fi
% . The generic private learner from this paper and that implied by the BLR result are incomparable: roughly, our original result requires discretizing (quantizing) the set of hypotheses, whereas the BLR result requires discretizing the space of examples.  Neither
% achieves the generality of the original Vapnik-Chernovenkis bound (see~\cite{Kearns94}), which requires only bounded VC dimension and makes no  assumptions on the cardinality of either the hypothesis or example space.

The efficient private learner for parity \resref{parity} uses a very different technique, based on sampling, running a non-private learner, and occasionally refusing to answer based on delicately calibrated probabilities. Running a non-private learner on a random subset of examples is a very intuitive approach to building private algorithms, but it is not private in general. The private learner for parity illustrates both why this technique can leak private information and how it can sometimes be repaired based on special (in this case, algebraic) structure.

The interesting direction of the equivalence between SQ and local learners \resref{sq-local} is proved via a simulation of any local algorithm by a corresponding SQ algorithm. We found this simulation surprising since local protocols can, in general, have very complex structure (see, \eg~\cite{EGS03}). The SQ algorithm proceeds by a direct simulation of the output of the randomization operators. For a given input distribution $\DDD$ and any operator $R$, one can sample from the corresponding output distribution $R(\DDD)$ via rejection sampling.
We show that if $R$ is differentially private, the rejection probabilities can be approximated via low-accuracy SQ queries to $\DDD$.

Finally, the separation between adaptive and nonadaptive SQ \resref{li-lni} uses a Fourier analytic argument inspired by Kearns' SQ lower bound for parity~\cite{Kearns98}.

\ifnum\full=1
\subsubsection{Classes of Private Learning Algorithms}
\newcommand{\ineff}[1]{\ensuremath{{#1}^*}}
\newcommand{\ipac}{\ineff{\pac}}
\newcommand{\isq}{\ineff{\sq}}
\newcommand{\ippac}{\ineff{\ppac}}
\newcommand{\inasq}{\ineff{\nasq}}
\newcommand{\ilni}{\ineff{\lni}}
\newcommand{\ili}{\ineff{\li}}

\begin{figure}[ht]
\begin{center}
\includegraphics[height=1.3in]{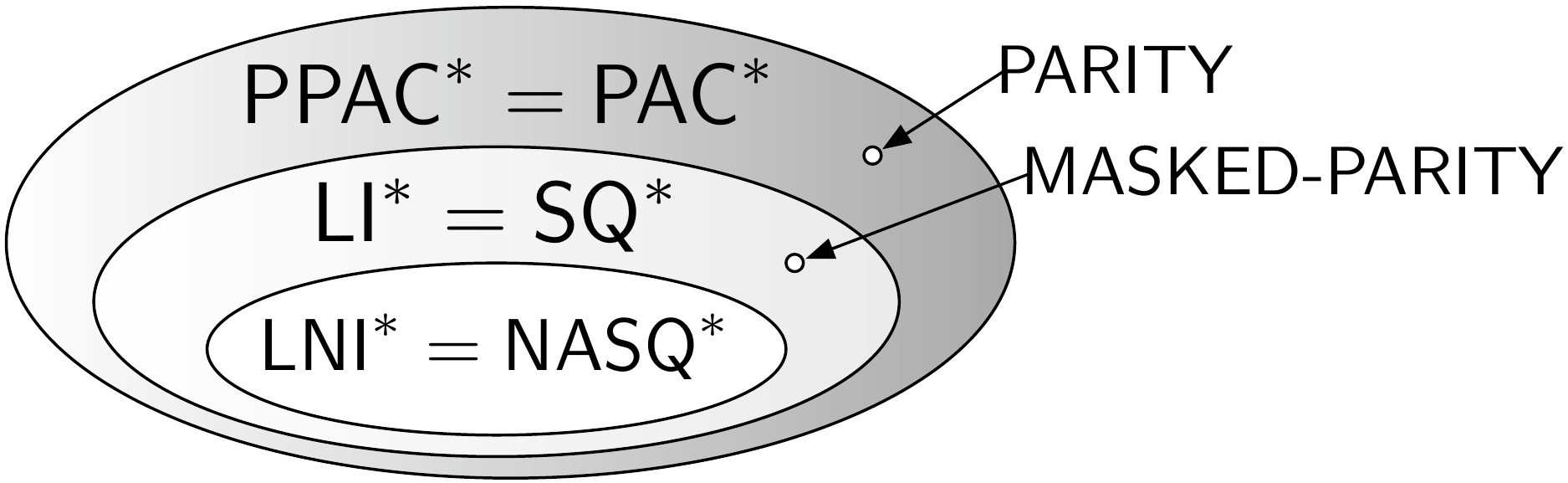}
\caption{\footnotesize Relationships  among learning classes taking into account sample complexity, but not computational efficiency.}
\label{fig:classes}
\end{center}
\end{figure}

We can summarize our results via a complexity-theoretic picture of learnable and privately learnable concept classes (more precisely, the members of the classes are pairs of concept classes and example distributions). In order to make asymptotic statements, we measure complexity in terms of the length $d$ of the binary description of examples.

We first consider learners that use a polynomial (in $d$) number of samples and output a hypothesis that is described using a polynomial number of bits, but have unlimited computation time. Let $\ipac$ denote the set of concept classes that are learnable by such algorithms ignoring privacy, and let $\ippac$ denote the subset of $\ipac$ learnable by differentially private\footnote{Differential privacy is quantified by a real parameter $\eps>0$. To make qualitative statements, we look at algorithms where $\eps\to0$ as $d\to\infty$. Taking $\eps=1/d^c$ for any constant $c>0$ would yield the same class.} algorithms.

Since we restrict the learner's output to a polynomial number of bits, the hypothesis classes of the algorithms are {\em de facto} limited to have size at most $\exp(poly(d))$.  Thus, the generic private learner (point \resref{occam} in the introduction) will use a polynomial number of samples, and  $\ipac=\ippac$.

We can similarly interpret the other results above. Within $\ipac$,  we can consider subsets of concepts learnable by SQ algorithms (\isq), nonadaptive SQ algorithms ($\inasq$),  local interactive algorithms ($\ili$) and  local noninteractive algorithms ($\ilni$). We obtain the following picture (see page~\pageref{fig:classes}):
$$\ilni=\inasq \ \ \ \subsetneq \ \ \ \ili=\isq \ \ \  \subsetneq \ \ \ \ippac=\ipac.
$$
The equality of $\ili$ and $\isq$, and of $\ilni$ and $\inasq$, follow from the SQ simulation of local algorithms (Theorem~\ref{thm:nasq-nilr}). The parity and masked-parity concept classes separate $\ippac$ from $\isq$ and $\isq$ from $\inasq$, respectively (Corollaries~\ref{cor:sep1} and~\ref{cor:sep2}). (\emph{Note:} The separation of $\ippac$ from $\isq$ holds even for distribution-free learning; in contrast, the separation of $\isq$ from $\inasq$ holds for learnability under a specific distribution on examples, since the adaptive SQ learner for $\mparity$ requires a uniform distribution on examples.)

When we take computational efficiency into account, the picture changes. The relation between local and SQ classes remain the same modulo a technical restriction on the randomization operators (\defref{transparent}). $\sq$ remains distinct from $\ppac$ since parity is efficiently learnable privately. However, it is an open question whether concept classes which can be efficiently learned can also be efficiently learned privately.

%Since known examples of efficiently learnable concept classes either lie in SQ or are closely related to parity, even finding a plausible candidate class where privacy changes the efficiency of learning remains an open question.
%We can summarize our results via a complexity-theoretic picture of learnable and privately learnable concept classes. The diagram at right presents the relationships between classes of learning algorithms, taking into account polynomial sample complexity and output size but ignoring computational efficiency. See \appref{classes} for definitions and discussion.\anote{Too brief???}
\fi

\subsection{Related Work}\label{sec:relwork}

Prior to this work, the
literature on differential privacy studied function approximation tasks (e.g.~~\cite{DiNi03,DwNi04,BDMN05,DMNS06,NRS07,BCDKMT07}), with the exception of
the work of McSherry and Talwar on mechanism design~\cite{MT07}.
%
% ads 3/5/09: This sentence was distractng from the main point
%Indeed, a wealth of techniques for private function evaluation in the centralized model have been presented in the last few years, where individual privacy is protected by the addition of low magnitude noise, carefully crafted to hide individual influence on the analysis' result~\cite{DiNi03,DwNi04,BDMN05,DMNS06,NRS07,BCDKMT07}.
%
Nevertheless, several of these  prior results have direct implications to machine learning-related problems.
Blum~\etal\cite{BDMN05} considered a particular class of learning algorithms (SQ), and showed that algorithms in the class could be simulated using noisy function evaluations. In an independent, unpublished work, Chaudhuri, Dwork, and Talwar considered a version of private learning in which privacy is afforded only to input labels, but not to examples.
Other works considered specific machine learning problems such as mining frequent itemsets~\cite{EGS03}, $k$-means clustering~\cite{BDMN05,NRS07}, learning decision trees~\cite{BDMN05}, and learning mixtures of Gaussians~\cite{NRS07}.

As mentioned above, a subsequent result of   Blum, Ligett and Roth~\cite{BLR08} on approximating classes of low-VC-dimension functions was inspired by our generic agnostic learner. We discuss their result further in Section~\ref{sec:pac-ppac}. Since the original version of our work, there have also been several results connecting differential privacy to more ``statistical'' notions of utility, such as consistency of point estimation and density estimation~\cite{Smith08,DL09,WZ08,ZLW09}.

Our separation of interactive and noninteractive protocols in the {\em local} model \resref{sq-local}  also has a precedent:  Dwork \etal\cite{DMNS06} separated interactive and noninteractive private protocols in the {\em centralized} model, where the user accesses the data via a server that runs differentially private algorithms on the database and sends back the answers. That separation has a very different flavor from the one in this work: any example of a computation that cannot be performed noninteractively in the centralized model must rely on the fact that the computational task is not defined until after the first answer from the server is received.
(Otherwise, the user can send an algorithm for that task to the server holding the data, thus obviating the need for interaction.) In contrast, we present a computational task that is hard for noninteractive local algorithms -- learning {\em masked parity} -- yet is defined in advance.
%our separation of interactive and noninteractive local protocols \resref{sq-local} is of a different nature: the computational task that is hard for noninteractive local algorithms -- learning {\em masked parity} -- is defined in advance.

In the machine learning literature, several notions similar to differential privacy have been explored under the rubric of ``algorithmic stability'' \cite{DevroyeWagner-1979,KearnsRon-1999,BousquetElisseeff-2002,KN02,EliEvgPon-2005,BPS07}. The most closely related notion is  \emph{change-one error stability}, which measures how much the generalization error changes when an input is changed (see the survey \cite{KN02}). In contrast, differential privacy measures how the distribution over the entire output changes---a more complex measure of stability (in particular, differential privacy implies change-one error stability).
A different notion, stability under resampling of the data from a given distribution~\cite{BLP06,BPS07}, is connected to the sample-and-aggregate method of \cite{NRS07} but is not directly relevant to the techniques considered here.
\ifnum\full=1
Finally, in a different vein, Freund, Mansour and Schapire ~\cite{FMS04} used a weighted averaging technique with the same weights as the sampler in our generic learner to reduce generalization error (see \secref{pac-ppac}).
\fi

\section{Preliminaries} \label{sec:defs}
%shiva-added this para
%sofya: rephrased to make shorter. I do not think most of it is necessary in the short version.
We use $[n]$ to denote the set $\{1,2,\dots,n\}$. Logarithms base $2$ and base $e$ are denoted by $\log$ and $\ln$, respectively.
\ifnum\full=1
$\Pr[\cdot]$ and $\E[\cdot]$ denote probability and expectation, respectively.
\fi
$\mathcal{A}(x)$ is the probability distribution over outputs of a randomized algorithm $\mathcal{A}$ on input $x$.  The {\em statistical difference} between
%probability measures
distributions $\erert$ and $\Q$ on a discrete space $D$ is defined as $\max_{S \subset D}|\erert(S)-\Q(S)|$.
\subsection{Differential Privacy}%Preliminaries from Databases and Privacy}

A statistical database is a vector $\z=(z_1,\dots,z_n)$  over a domain $D$, where each entry $z_i\in D$ represents information contributed by one individual. Databases $\z$  and $\mathrm{z'}$ are {\em neighbors} if $z_i\neq z'_i$ for exactly one $i\in[n]$ (i.e., the Hamming distance between $\z$ and $\mathrm{z'}$ is 1).
\ifnum\full=1
%sofya: Do we need this?
All our algorithms are symmetric, that is, they do not depend on the order of entries in the database $\z$. Thus, we could define a database as a multi-set in $D$, and use symmetric difference instead of the Hamming metric to measure distance. We adhere to the vector formulation for consistency with the previous works.
\fi

A (randomized) algorithm (in our context, this will usually be a learning algorithm) is private if neighboring databases induce nearby distributions on its outcomes:

\begin{definition}[$\epsilon$-differential privacy~\cite{DMNS06}] \label{def:eps-dp} A randomized algorithm $\mathcal{A}$ is $\epsilon$-differentially private if for all neighboring databases $\z ,\mathrm{z'}$, and for all sets $\mathcal{S}$ of %possible
outputs,
\ifnum\full=0
$\Pr[\mathcal{A}(\z ) \in \mathcal{S}] \leq \exp(\epsilon) \cdot \Pr[\mathcal{A}(\mathrm{z'}) \in \mathcal{S}]$.
\else
\begin{eqnarray*} &\Pr[\mathcal{A}(\z ) \in \mathcal{S}] \leq \exp(\epsilon) \cdot \Pr[\mathcal{A}(\mathrm{z'}) \in \mathcal{S}].  & \end{eqnarray*}
\fi
The probability is taken over the random coins of $\mathcal{A}$.
\end{definition}

\ifnum\full=1 In ~\cite{DMNS06}, the notion above was called
``indistinguishability''. The name ``differential privacy'' was
suggested by Mike Schroeder, and first appeared in
Dwork~\cite{Dwork06}.

Differential privacy composes well (see, \eg
\cite{DKMMN06,NRS07,MT07,KS08}):
\begin{claim}[Composition and Post-processing]\label{claim:composition}
\else
\begin{claim}[Composition and Post-processing~\cite{DKMMN06,NRS07,MT07,KS08}]\label{claim:composition}
\fi
If a randomized algorithm $\AAA$ runs $k$ algorithms $\AAA_{1},...,\AAA_{k}$, where each $\AAA_i$ is
$\eps_i$-differentially private, and outputs a function of the results (that is, $\AAA(\z) = g(\AAA_{1}(\z),\linebreak\AAA_{2}(\z),...,\AAA_{k}(\z))$ for some probabilistic algorithm $g$),  then $\AAA$ is $(\sum_{i=1}^{k}\eps_i)$-differentially private.
\end{claim}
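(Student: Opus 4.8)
The plan is to prove the statement by induction on the number of sub-algorithms $k$, with the key case being $k=2$, and then to argue that composing with the final post-processing step $g$ costs nothing in the privacy parameter. First I would set up notation: fix neighboring databases $\z,\mathrm{z'}$ and an arbitrary set $\SSS$ of outputs of $\AAA$. The outcome $\AAA(\z)$ is obtained by first running $\AAA_1(\z),\ldots,\AAA_k(\z)$ to get a tuple of intermediate results $(r_1,\ldots,r_k)$, then applying $g$. The clean way to handle this is to first prove that the \emph{tuple-valued} algorithm $\BBB(\z) = (\AAA_1(\z),\ldots,\AAA_k(\z))$ is $(\sum_i \eps_i)$-differentially private, and then observe that $\AAA(\z) = g(\BBB(\z))$ is a (possibly randomized) function of the output of $\BBB$ that does not look at $\z$ directly.

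For the post-processing part, I would argue as follows: for any fixed output set $\SSS$ of $\AAA$, write $\Pr[\AAA(\z)\in\SSS] = \sum_{r} \Pr[\BBB(\z) = r]\cdot \Pr[g(r)\in\SSS]$ (or an integral in the continuous case), where the second factor does not depend on $\z$. Since $\Pr[\BBB(\z)=r] \le \exp(\sum_i\eps_i)\Pr[\BBB(\mathrm{z'})=r]$ for every $r$ by the differential privacy of $\BBB$, multiplying through by the nonnegative weights $\Pr[g(r)\in\SSS]$ and summing preserves the inequality, giving $\Pr[\AAA(\z)\in\SSS] \le \exp(\sum_i\eps_i)\Pr[\AAA(\mathrm{z'})\in\SSS]$. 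So it remains to prove differential privacy of $\BBB$.

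For $\BBB$, I would induct on $k$. The base case $k=1$ is immediate. For the inductive step it suffices to do $k=2$: suppose $\BBB_1 = (\AAA_1,\ldots,\AAA_{k-1})$ is $(\sum_{i<k}\eps_i)$-private by induction, and $\AAA_k$ is $\eps_k$-private; we want $\BBB = (\BBB_1,\AAA_k)$ to be $(\sum_{i\le k}\eps_i)$-private. Here I would be slightly careful about whether $\AAA_k$ may depend on the output of $\BBB_1$ (since $g$ is allowed to be adaptive, but as stated the $\AAA_i$ all run on $\z$; I will treat them as running independently on $\z$, which is what the statement literally says, and note the adaptive case follows from the same argument applied conditionally). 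Fix a target set $T$ of pairs $(s_1,s_2)$, and write $T_{s_1} = \{s_2 : (s_1,s_2)\in T\}$ for its slice. Using independence of the internal coins of $\BBB_1$ and $\AAA_k$,
\begin{eqnarray*}
\Pr[\BBB(\z)\in T] &=& \sum_{s_1}\Pr[\BBB_1(\z) = s_1]\cdot\Pr[\AAA_k(\z)\in T_{s_1}]\\
&\le& \sum_{s_1} e^{\eps_1+\cdots+\eps_{k-1}}\Pr[\BBB_1(\mathrm{z'})=s_1]\cdot e^{\eps_k}\Pr[\AAA_k(\mathrm{z'})\in T_{s_1}]\\
&=& e^{\eps_1+\cdots+\eps_k}\Pr[\BBB(\mathrm{z'})\in T],
\end{eqnarray*}
where the middle step applies the two privacy hypotheses termwise (legitimate since all factors are nonnegative). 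This closes the induction.

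\textbf{Main obstacle.} The only delicate point is bookkeeping around \emph{adaptivity}: the statement writes $\AAA(\z) = g(\AAA_1(\z),\ldots,\AAA_k(\z))$ with all $\AAA_i$ fed the same $\z$, which suggests the $\AAA_i$ are non-adaptive and all randomness is then folded into $g$; but in typical uses (and as the paper later relies on) one wants each $\AAA_i$ to be chosen based on earlier outputs. I expect to handle this by proving the non-adaptive tuple version as above, and then noting that the adaptive version reduces to it by conditioning on the transcript of previous answers — for each fixed prefix the next mechanism is $\eps_i$-private, so the same termwise-summation argument applies with the prefix probabilities as the nonnegative weights. Everything else is routine nonnegative-weight manipulation of the differential-privacy inequality; no real estimates are needed.
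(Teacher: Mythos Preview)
The paper does not actually prove this claim; it is stated with citations to \cite{DKMMN06,NRS07,MT07,KS08} and used as a black box. Your proof is correct and is essentially the standard argument found in those references: reduce to the tuple-valued mechanism, handle post-processing by writing the output probability as a nonnegative-weighted sum over intermediate outputs, and prove composition by induction via the termwise product bound. Your remark on adaptivity is also appropriate and correctly handled by conditioning on the prefix.
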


One method for obtaining efficient differentially private algorithms for approximating real-valued functions is based on adding Laplacian noise to the true answer. %sofya: need to cite all papers that do that
Let $\Lap(\lambda)$ denote the Laplace probability distribution with mean $0$, standard deviation $\sqrt{2}\lambda$, and p.d.f.\ $f(x)=\frac{1}{2\lambda}e^{-|x|/\lambda}$.
\begin{theorem} [Dwork \etal\cite{DMNS06}]\label{thm:DMNS}
For a function $f: D^n \rightarrow \R$, define its global sensitivity $GS_f = \max_{\mathrm{z},\mathrm{z'}}|f(\mathrm{z})-f(\mathrm{z'})|$ where the maximum is over all neighboring databases $\mathrm{z},\mathrm{z'}$. Then, an algorithm that on input $\mathrm{z}$ returns $f(\mathrm{z})+\eta$ where $\eta \sim \Lap(GS_f/\eps)$ is $\eps$-differentially private.
\end{theorem}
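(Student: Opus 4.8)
The plan is to fix an arbitrary pair of neighboring databases $\z,\z'$ and compare, pointwise, the probability density functions of the two output distributions. Write $a=f(\z)$, $b=f(\z')$, and $\lambda=GS_f/\eps$; by the definition of global sensitivity, $|a-b|\le GS_f$. Since the algorithm outputs $a+\eta$ with $\eta\sim\Lap(\lambda)$, its output on $\z$ has density $p_a(t)=\frac{1}{2\lambda}e^{-|t-a|/\lambda}$ on $\R$, and similarly its output on $\z'$ has density $p_b(t)$.

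First I would bound the ratio of these two densities at every point $t\in\R$:
$$\frac{p_a(t)}{p_b(t)}=\exp\!\left(\frac{|t-b|-|t-a|}{\lambda}\right)\le\exp\!\left(\frac{|a-b|}{\lambda}\right)\le\exp\!\left(\frac{GS_f}{\lambda}\right)=\exp(\eps),$$
where the first inequality is the triangle inequality $|t-b|-|t-a|\le|a-b|$ and the second uses $|a-b|\le GS_f$ together with $\lambda=GS_f/\eps$. Then, for any (measurable) set $\SSS$ of outputs, I would integrate this pointwise bound:
$$\Pr[\AAA(\z)\in\SSS]=\int_{\SSS}p_a(t)\,dt\le\int_{\SSS}e^{\eps}p_b(t)\,dt=e^{\eps}\,\Pr[\AAA(\z')\in\SSS],$$
which is exactly the $\eps$-differential privacy condition of \defref{eps-dp}. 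Since the neighbor relation is symmetric, the same computation with the roles of $\z$ and $\z'$ interchanged gives the matching inequality, so nothing further is needed.

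There is essentially no obstacle here; the proof is a one-line density calculation. The only point requiring a little care is that the output space is $\R$ rather than a discrete set, so the quantification over ``all sets $\SSS$'' in the definition is handled by passing to densities and integrating (restricting to measurable $\SSS$, which loses nothing since the output distributions are absolutely continuous). If one wished to extend the statement to vector-valued $f:D^n\to\R^k$, the identical argument goes through with $|t-a|$ replaced by $\|t-a\|_1$ and $GS_f$ taken to be the $\ell_1$-sensitivity; but for $f:D^n\to\R$ as stated, the scalar version above suffices.
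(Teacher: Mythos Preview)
Your proof is correct and is exactly the standard density-ratio argument for the Laplace mechanism. The paper itself does not give a proof of this theorem; it is quoted as a known result from Dwork~\etal\cite{DMNS06} and used as a black box, so there is nothing to compare against, but what you wrote is the canonical proof from that reference.
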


\subsection{Preliminaries from Learning Theory} \label{sec:ppac}
A concept is a function that labels {\em examples} taken from the domain $X$ by the elements of the range~$Y$.
\ifnum\full=0 %ads 8/1/08
We focus on binary classification problems, where the range $Y$ is $\{0,1\}$ (or, equivalently,  $\{+1,-1\})$.
\fi
A~\emph{concept class} $\CCC$ is a set of concepts. It comes implicitly with a way to represent concepts;
$\mathrm{\it size}(c)$ is the size of the (smallest) representation of $c$ under the given representation scheme.
%Let $\CCC$ be a concept class over domain $X$.
% ads 8/1/08
\ifnum\full=0
The domain of the concepts in $\CCC$ is understood to be an ensemble $X= \{X_d\}_{d\in\N}$ %and $Y=\{Y_d\}_{d\in\N},$
where the representation of elements in $X_d$ %,Y_d$
is of size at most $d$.
\else
The domain and the range of the concepts in $\CCC$ are understood to be ensembles $X= \{X_d\}_{d\in\N}$ and $Y=\{Y_d\}_{d\in\N},$ where the representation of elements in $X_d,Y_d$
is of size at most $d$.
%shiva-changed Y_d from {0,1} to {-1,1} for consistency.
We focus on binary classification problems, in which the label space $Y_d$ is $\{0,1\}$ or $\{+1,-1\}$; the parameter $d$ thus measures the size of the examples in $X_d$.
\fi
(We use the parameter $d$ to formulate asymptotic complexity notions.) %, \eg polynomial time and sample complexity; however, all of our results have concrete analogues. %shiva-what analogues.
The concept classes are  ensembles $\CCC = \{\CCC_d\}_{d\in\N}$ where $\CCC_d$ is the class of concepts from $X_d$ to
%ads 8/1/08
\ifnum\full=1 $Y_d$.  \else $\{0,1\}$. \fi
%
%ads 8/1/08
%When the size parameter is clear from the context or not important, we write $c:X\rightarrow Y$ instead of $c:X_d\rightarrow Y_d$ for $c\in\CCC_d$.
When the size parameter is clear from the context or not important, we omit the subscript in
\ifnum\full=1 $X_d,Y_d,\CCC_d$. \else $X_d,\CCC_d$. \fi

Let $\DDD$ be a distribution over labeled examples in
\ifnum\full=1 $X_d\times Y_d$. \else $X_d\times \{0,1\}$. \fi
A learning algorithm is given access to $\DDD$ (the method for accessing $\DDD$ depends on the type of learning algorithm). It outputs a hypothesis $h: X_d \to
\ifnum\full=1 Y_d$ \else \{0,1\}$ \fi
from a hypothesis class $\HHH=\{\HHH_d\}_{d\in\N}$. The goal is to minimize the {\em misclassification error} of $h$ on $\DDD$, defined as \ifnum\full=0 $\trueerror(h)=\Pr_{(x,y) \sim \DDD} [h(x)\neq y]\,.$ \else \[\trueerror(h)=\Pr_{(x,y) \sim \DDD} [h(x)\neq y]\,.\] \fi
The success of a learning algorithm is quantified by parameters $\alpha$ and $\beta$, where $\alpha$ is the desired error and $\beta$ bounds the probability of failure to output a hypothesis with this error.
\ifnum\full=1
Error measures other than misclassification are considered in supervised learning (\eg $L_2^2$). We study only misclassification error here, since for binary labels it is equivalent to the other common error measures.
\fi

 A learning algorithm is usually given access to an oracle that produces i.i.d.\ samples from $\DDD$.  Equivalently, one can view the learning algorithm's input as a list of $n$ labeled examples, i.e., $\z\in D^n$ where $D=X_d\times
\ifnum\full=1 Y_d$. \else \{0,1\}$. \fi
\ifnum\full=1
PAC learning and agnostic learning are described in Definitions~\ref{def:PAC} and~\ref{def:agnostic}. Another common method of access to $\DDD$ is via ``statistical queries'', which return the approximate average of a function over the distribution. Algorithms that work in this model can be simulated given i.i.d.\ examples. See \secref{sqlocal}. %
\fi

PAC learning algorithms are frequently designed assuming a promise that the examples are labeled consistently with some {\em target} concept $c$ from a class $\CCC$: namely, $ c \in \CCC_d$ and $y=c(x)$ for all $(x,y)$ in the support of $\DDD$.
%shiva -old line changed below
%In that case, we can think of $\DDD$ as a distribution only over examples $X_d$ and the error of hypothesis $h$ becomes $\trueerror(h)=\Pr_{x \sim \DDD}[ h(x)\neq c(x)].$
In that case, we can think of $\DDD$ as a distribution only over examples $X_d$. To avoid ambiguity, we use $\XXX$ to denote a distribution over $X_d$. In the PAC setting, $\trueerror(h)=\Pr_{x \sim \XXX}[ h(x)\neq c(x)].$

\begin{definition}[PAC Learning] \label{def:PAC}
%Let $\CCC$ be a concept class over $X$.
A concept class $\CCC$ over $X$ is {\em PAC learnable using hypothesis class $\HHH$} if there exist an algorithm $\mathcal{A}$ and a polynomial $poly(\cdot,\cdot,\cdot)$ such that for all $d \in \N$, all concepts $c \in \CCC_d$, all distributions $\XXX$ on $X_d$, and all $\alpha,\beta \in (0,1/2)$, given inputs $\alpha, \beta$ and $\z =(z_1,\cdots,z_n),$ where $n=poly(d,1/\alpha,\log (1/\beta))$, $z_i=(x_i,c(x_i))$ and $x_i$ are drawn i.i.d.\ from $\XXX$ for $i\in[n]$, algorithm $\AAA$ outputs a hypothesis $h\in \HHH$ satisfying
\begin{eqnarray}\label{eqn:pac-error}
\Pr[\trueerror(h) \leq \alpha] \geq 1-\beta.
\end{eqnarray}
The probability is taken over the random choice of the examples $\z $ and the coin tosses of $\mathcal{A}$.

Class $\CCC$ is (inefficiently) \emph{PAC learnable} if there exists some hypothesis class $\mathcal{H}$ and a PAC learner $\AAA$ such that $\AAA$ PAC learns $\CCC$ using $\HHH$. Class $\CCC$ is {\em efficiently} PAC learnable if $\AAA$ runs it time polynomial in $d,1/\alpha$, and $\log (1/\beta).$
\end{definition}

\begin{remark}
Our definition deviates slightly from the standard one (see, \eg \cite{Kearns94}) in that we do not take into consideration the size of the concept $c$. This choice allows us to treat PAC learners and agnostic learners identically.
% ads 8/1/08
\ifnum\full=1
One can change Definition~\ref{def:PAC} so that the number of samples depends polynomially also on the size of $c$ without affecting any of our results significantly.
\fi
\end{remark}

Agnostic  learning~\cite{H92,KSS92} is an extension of PAC learning that removes assumptions about the target concept.  \ifnum\full=1
Roughly speaking, the goal of an agnostic learner for a concept class $\CCC$ is to output a hypothesis $h\in\HHH$ whose error with respect to the distribution is close to the optimal possible by a function from $\CCC$.
In the agnostic setting,
\else
In this setting,
\fi
$\trueerror(h) = \Pr_{(x,y) \sim \DDD}[h(x) \neq y]$.

\begin{definition}[Agnostic Learning]
\label{def:agnostic}
{\em (Efficiently) agnostically  learnable} is defined identically to (efficiently) PAC learnable with two exceptions: {\em(i)} the data are drawn from an arbitrary distribution $\DDD$ on \ifnum\full=1 $X_d\times Y_d$;  \else $X_d\times \{0,1\}$; \fi
{\em(ii)} instead of Equation~\ref{eqn:pac-error}, the output of $\AAA$ has to satisfy:
$$\Pr[\trueerror(h) \leq OPT+\alpha] \geq 1-\beta,$$
where  $OPT=\min_{f \in \CCC_d}\left\{\trueerror(f)\right\}.$
\ifnum\full=1
As before, the probability is taken over the random choice of $\mathrm{z}$, and the coin tosses of $\mathcal{A}$.
\fi
\end{definition}
%shiva-old line changed below
%Definitions~\ref{def:PAC} and~\ref{def:agnostic} capture {\em distribution-free} learning, in that they do not assume a particular form for the distribution $\DDD$ (beyond the consistency promise for non-agnostic learning).
Definitions~\ref{def:PAC} and~\ref{def:agnostic} capture {\em distribution-free} learning, in that they do not assume a particular form for the distributions $\XXX$ or $\DDD$. In \secref{mparity}, we also consider learning algorithms that assume  a {\em specific} distribution $\DDD$ on examples (but make no assumption on which concept in $\CCC$ labels the examples). When we discuss such algorithms, we specify  $\DDD$ explicitly; without qualification, ``learning'' refers to distribution-free learning.

\ifnum\full=1
\paragraph{Efficiency Measures.} The definitions above are sufficiently detailed to allow for exact complexity statements (\eg ``$\AAA$ learns $\CCC$ using $n(\alpha,\beta)$ examples and time $O(t)$''), and the upper and lower bounds in this paper are all stated in this language. However, we also focus on two broader measures to allow for qualitative statements: {\em (a)} {\em polynomial sample complexity} is the default notion in our definitions. With the novel restriction of privacy, it is not {\em a priori} clear which concept classes can be learned using few examples even if we ignore computation time. {\em (b)} We use the term {\em efficient} private learning to impose the additional restriction of {\em polynomial computation time} (which implies polynomial sample complexity).
\fi

\section{Private PAC and Agnostic Learning}
We define private PAC learners as algorithms that satisfy definitions of both differential privacy and PAC learning. We emphasize that these are qualitatively different requirements.  Learning must succeed on average over a set of examples drawn i.i.d.\ from $\DDD$ (often under the additional promise that $\DDD$ is consistent with a concept from a target class). Differential privacy,  in contrast, must hold in the worst case, with no assumptions on consistency.

\begin{definition}[Private PAC Learning]\label{def:private-general}
%shiva-8/3/08: eps \in (0,1] should be removed.
%ads 8/3/08: removed
Let $d,\alpha,\beta$ be as in Definition~\ref{def:PAC} and $\eps>0$.  Concept class $\CCC$ is (inefficiently) \emph{privately PAC learnable} using hypothesis class $\HHH$ if there exists an algorithm $\mathcal{A}$ that takes inputs $\eps,\alpha,\beta,\mathrm{z}$, where $n$, the number of labeled examples in $\mathrm{z}$, is polynomial in $1/\eps,d,1/\alpha,\log(1/\beta)$, and satisfies
%sofya: changed order for consistency with Parity statement and to save a line
\renewcommand{\labelenumi}{\alph{enumi}.}
\begin{enumerate}
\item {\ensuremath{\sf [Privacy]}} For all $\eps>0$, algorithm $\mathcal{A(\eps,\cdot,\cdot,\cdot)}$ is $\epsilon$-differentially private (Definition~\ref{def:eps-dp});
\item {\ensuremath{\sf [Utility]}}
\ifnum\full=1
Algorithm
\fi
$\mathcal{A}$ PAC learns $\CCC$ using $\HHH$ (Definition~\ref{def:PAC}).
\end{enumerate}
$\CCC$ is {\em efficiently} privately PAC learnable if $\AAA$ runs in time polynomial in $d,1/\eps,1/\alpha$, and $\log (1/\beta).$
\end{definition}

\begin{definition}[Private Agnostic Learning]\label{def:private-agnostic}
(Efficient) {\em private agnostic learning} is defined analogously to (efficient) private PAC learning with Definition~\ref{def:agnostic} replacing Definition~\ref{def:PAC} in the utility condition.
\end{definition}
Evaluating the quality of a particular hypothesis is easy: one can privately compute the fraction of the data it classifies correctly (enabling cross-validation) using the sum query framework of~\cite{BDMN05}. The difficulty of constructing private learners lies in finding a good hypothesis in what is typically an exponentially large space.

\subsection{A Generic Private Agnostic Learner} \label{sec:pac-ppac}
In this section, we present a private analogue of a basic consistent learning result, often called the cardinality version of Occam's razor\footnote{We discuss the relationship to the ``compression version'' of Occam's razor at the end of this section.}. This classical result shows that a PAC learner can weed out all {\em bad} hypotheses given a number of labeled examples that is logarithmic in the size of the hypothesis class (see~\cite[p.~35]{Kearns94}). Our generic private learner is based on the exponential mechanism of McSherry and Talwar \cite{MT07}.

Let $q: D^n \times \HHH_d \rightarrow \R$  take a database $\z$ and a candidate hypothesis $h$, and assign it a score
$q(\mathrm{z},h) = -|\{i \,: \, x_i \mbox{ is misclassified by } h, \mbox{ \ie } y_i \neq h(x_i) \}|\,.$ That is, the score is minus the number of points in $\z$ misclassified by $h$. The classic Occam's razor argument assumes a learner that selects a hypothesis with maximum score (that is, minimum empirical error). Instead, our private learner $\mathcal{A}^{\epsilon}_q$ is defined to sample a random hypothesis with probability dependent on its score:
\ifnum\full=0
\begin{eqnarray*}
\mathcal{A}^{\epsilon}_q(\z):& \text{Output hypothesis $h\in \HHH_d $ with probability}\\
&\text{proportional to $\exp \left (\frac 1 2{\epsilon q(\z,h)}\right )$ }.
\end{eqnarray*}
\else
$$\mathcal{A}^{\epsilon}_q(\z) \quad : \quad  \mbox {Output hypothesis $h\in \HHH_d $ with probability proportional to $\exp \left (\frac{\epsilon q(\z,h)}{2}\right )$\,}.$$
\fi
Since the score ranges from $-n$ to 0, hypotheses with low empirical error are exponentially more likely to be selected than ones with high error.

Algorithm $\mathcal{A}^{\epsilon}_q$
fits the framework of McSherry and Talwar, and so is $\eps$-differentially private. This follows from the fact that changing one entry $z_i$ in the database $\z$ can change the score by at most 1.
\begin{lemma}[following \cite{MT07}] \label{lem:mtt}
The algorithm $\mathcal{A}^{\epsilon}_q$ is $\epsilon$-differentially private.
\end{lemma}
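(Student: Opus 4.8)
The plan is to verify the standard exponential-mechanism privacy argument specialized to the score function $q$, whose key property is that it has sensitivity $1$: for any two neighboring databases $\z,\z'$ (differing in exactly one example $z_i$) and any hypothesis $h\in\HHH_d$, we have $|q(\z,h)-q(\z',h)|\le 1$. This is immediate from the definition of $q$ as (minus) a count of misclassified points: changing one example changes the count by at most one. First I would fix neighboring $\z,\z'$ and a hypothesis $h$, and write out the probability that $\mathcal{A}^\epsilon_q$ outputs $h$ as the ratio $\exp(\epsilon q(\z,h)/2) / \sum_{h'\in\HHH_d}\exp(\epsilon q(\z,h')/2)$, and similarly for $\z'$.

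Next I would bound the ratio $\Pr[\mathcal{A}^\epsilon_q(\z)=h]\big/\Pr[\mathcal{A}^\epsilon_q(\z')=h]$ by splitting it into the ratio of numerators and the (reciprocal) ratio of normalizing sums. For the numerators, $\exp(\epsilon q(\z,h)/2)/\exp(\epsilon q(\z',h)/2)=\exp(\tfrac{\epsilon}{2}(q(\z,h)-q(\z',h)))\le \exp(\epsilon/2)$ by the sensitivity bound. For the normalizing constants, the same sensitivity bound applied termwise gives $\exp(\epsilon q(\z',h')/2)\ge \exp(-\epsilon/2)\exp(\epsilon q(\z,h')/2)$ for every $h'$, so summing over $h'$ yields $\sum_{h'}\exp(\epsilon q(\z',h')/2)\ge \exp(-\epsilon/2)\sum_{h'}\exp(\epsilon q(\z,h')/2)$, i.e.\ the ratio of the $\z$-sum to the $\z'$-sum is at most $\exp(\epsilon/2)$. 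Multiplying the two contributions gives an overall bound of $\exp(\epsilon/2)\cdot\exp(\epsilon/2)=\exp(\epsilon)$ on the ratio of output probabilities for any single hypothesis $h$. Since the output space $\HHH_d$ is discrete, the pointwise bound extends to arbitrary output sets $\SSS$ by summing, giving $\Pr[\mathcal{A}^\epsilon_q(\z)\in\SSS]\le\exp(\epsilon)\Pr[\mathcal{A}^\epsilon_q(\z')\in\SSS]$, which is exactly $\epsilon$-differential privacy per Definition~\ref{def:eps-dp}.

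There is no real obstacle here — the only thing to be careful about is the factor of $2$: the mechanism uses $\exp(\epsilon q/2)$ precisely so that the two multiplicative $\exp(\epsilon/2)$ contributions (one from the numerator, one from the denominator) combine to $\exp(\epsilon)$ rather than $\exp(2\epsilon)$. I would state this as the crux: because the sensitivity of $q$ is $1$, halving the coefficient in the exponent is exactly what calibrates the mechanism to the target privacy parameter $\epsilon$. One should also note in passing that $q$ is well-defined and the sum $\sum_{h'\in\HHH_d}\exp(\epsilon q(\z,h')/2)$ is finite whenever $\HHH_d$ is finite (which holds in our setting, since we only apply this with a finite hypothesis class), so the output distribution is well-defined. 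This is really just Lemma~\ref{lem:mtt} restated as an instance of \cite{MT07}'s exponential mechanism with score function $q$ of sensitivity $1$ and ``privacy'' parameter set to $\epsilon$, so the proof amounts to checking the sensitivity claim and then quoting (or re-deriving in two lines) the generic guarantee.
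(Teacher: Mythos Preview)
Your proposal is correct and matches the paper's approach: the paper simply notes that $q$ has sensitivity $1$ (changing one entry changes the misclassification count by at most one) and then invokes the exponential mechanism guarantee of \cite{MT07}, which is precisely the two-line numerator/denominator argument you spell out. There is nothing to add; you have just written out in full what the paper leaves as a citation.
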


A similar exponential weighting algorithm was considered by Freund, Mansour and Schapire~\cite{FMS04} for constructing binary classifiers with good generalization error bounds. We are not aware of any direct connection between the two results. Also note that, except for the case where $|\HHH_d|$ is  polynomial, the exponential mechanism $\mathcal{A}^{\epsilon}_q(\mathrm{z})$ does not necessarily yield a polynomial time algorithm.
%sofya: commented out -- see last sent. in the second paragraph of sec 3.1
%\ifnum\full=1 %shiva-added this line for the full version.
%The idea is that under the distribution $\mathcal{A}^{\epsilon}_q$ samples from, the good hypotheses (ones that %perform well on the training data) appear with exponential more probability than the bad ones.
%\fi

\begin{theorem}[Generic Private Learner]\label{thm:PACvsPPAC} For all $d \in \N$, any concept class $\CCC_d$ whose cardinality is at most $\exp({\poly(d)})$ is privately agnostically learnable using $\HHH_d=\CCC_d$. More precisely, the learner uses $n=O((\ln |\HHH_d  | +\ln \frac 1 {\beta})\cdot \max\{\frac{1}{\epsilon\alpha},\frac{1}{\alpha^2}\})$ labeled examples from $\DDD$, where $\eps, \alpha$, and $\beta$ are parameters of the private learner. (The learner might not be efficient.)
\end{theorem}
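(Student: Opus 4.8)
The plan is to combine the privacy guarantee already in hand (Lemma~\ref{lem:mtt}) with two standard utility arguments: the tail bound for the exponential mechanism and a uniform-convergence (``Occam's razor'') bound. Privacy is settled, so everything below concerns only the utility condition of Definition~\ref{def:private-agnostic}; in particular we make no structural assumption on $\DDD$ beyond that it generates the i.i.d.\ sample $\z$. Write $\widehat{err}(h) = -q(\z,h)/n$ for the empirical error of $h$ on $\z$, and $\widehat{err}_{\min} = \min_{h\in\HHH_d}\widehat{err}(h)$; note the sampling distribution of $\mathcal{A}^{\epsilon}_q(\z)$ is well defined since $\HHH_d = \CCC_d$ is finite.

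First I would record the exponential-mechanism tail bound. The normalizing constant of $\mathcal{A}^{\epsilon}_q(\z)$ is at least $\exp(\epsilon q_{\max}/2)$ where $q_{\max} = -n\,\widehat{err}_{\min}$, while the total weight placed on hypotheses of score at most $q_{\max}-s$ is at most $|\HHH_d|\exp(\epsilon(q_{\max}-s)/2)$. Hence the output $h$ satisfies $\Pr[\,q_{\max}-q(\z,h)\ge s\,]\le |\HHH_d|\exp(-\epsilon s/2)$, which is at most $\beta/2$ once $s\ge \tfrac{2}{\epsilon}(\ln|\HHH_d|+\ln(2/\beta))$. Equivalently, with probability at least $1-\beta/2$ over the coins of $\mathcal{A}^{\epsilon}_q$,
$$\widehat{err}(h)\ \le\ \widehat{err}_{\min}\ +\ \tfrac{2}{\epsilon n}\bigl(\ln|\HHH_d|+\ln(2/\beta)\bigr).$$

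Next I would invoke uniform convergence. For any fixed $h$, $n\,\widehat{err}(h)$ is a sum of $n$ i.i.d.\ indicator variables with mean $\trueerror(h)$, so Hoeffding's bound gives $\Pr[\,|\widehat{err}(h)-\trueerror(h)|>\alpha/4\,]\le 2\exp(-n\alpha^2/8)$; a union bound over the finite class $\HHH_d=\CCC_d$ shows that for $n = \Omega((\ln|\HHH_d|+\ln(1/\beta))/\alpha^2)$, with probability at least $1-\beta/2$ over the draw of $\z$ we have $|\widehat{err}(h)-\trueerror(h)|\le\alpha/4$ simultaneously for all $h\in\HHH_d$. This is the conceptual crux of the argument: because the hypothesis returned by the exponential mechanism depends on $\z$, we cannot bound its true error directly and must control the empirical-to-true gap \emph{uniformly} over the whole class --- exactly the role of Occam's razor in the non-private proof. (That $|\CCC_d|\le\exp(\poly(d))$ is what keeps this sample size polynomial.)

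Finally I would assemble the pieces. Condition on both good events, which by a union bound co-occur with probability at least $1-\beta$. Let $f^\ast\in\CCC_d$ attain $\trueerror(f^\ast)=OPT$; then $\widehat{err}_{\min}\le\widehat{err}(f^\ast)\le OPT+\alpha/4$, so the returned $h$ obeys
$$\trueerror(h)\ \le\ \widehat{err}(h)+\tfrac{\alpha}{4}\ \le\ \widehat{err}_{\min}+\tfrac{2}{\epsilon n}\bigl(\ln|\HHH_d|+\ln(2/\beta)\bigr)+\tfrac{\alpha}{4}\ \le\ OPT+\tfrac{\alpha}{2}+\tfrac{2}{\epsilon n}\bigl(\ln|\HHH_d|+\ln(2/\beta)\bigr).$$
Choosing $n\ge\tfrac{4}{\epsilon\alpha}(\ln|\HHH_d|+\ln(2/\beta))$ forces the last term below $\alpha/2$, yielding $\trueerror(h)\le OPT+\alpha$; taking $n=O((\ln|\HHH_d|+\ln(1/\beta))\cdot\max\{1/(\epsilon\alpha),1/\alpha^2\})$ satisfies both sample-size requirements at once. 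Since $\mathcal{A}^{\epsilon}_q$ is $\epsilon$-differentially private, PAC/agnostically learns $\CCC$ using $\HHH_d=\CCC_d$, and uses $\poly(d,1/\epsilon,1/\alpha,\log(1/\beta))$ examples, it is a private agnostic learner (not necessarily efficient, since sampling from the exponential distribution over $\HHH_d$ can take time $|\HHH_d|$). The only real obstacle is bookkeeping --- ensuring the constants coming from the two sample bounds are mutually compatible --- which is routine.
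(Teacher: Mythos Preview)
Your proposal is correct and follows essentially the same approach as the paper: combine the exponential-mechanism tail bound (the paper phrases it as $\Pr[\trueerror_T(h)\ge OPT+2\rho]\le|\HHH_d|\exp(-\epsilon n\rho/2)$ after conditioning on uniform convergence) with a Hoeffding-plus-union-bound uniform-convergence argument over the finite class $\HHH_d=\CCC_d$. The only differences are cosmetic --- the paper conditions on uniform convergence first and uses $\rho=\alpha/3$ where you split into $\alpha/4$ pieces --- and do not affect the structure or the final sample bound.
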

\begin{proof}
Let $\mathcal{A}^{\epsilon}_q$ be as defined above. The privacy condition in Definition~\ref{def:private-general} is satisfied by \lemref{mtt}.

We now show that the utility condition is also satisfied. Consider the event
$E=\{\mathcal{A}^{\epsilon}_q (\mathrm{z})= h \;\mathrm{ with }\ \trueerror(h) > \alpha + OPT\}.$
We want to prove that $\Pr[E] \leq \beta$. Define the training error of $h$ as $$\trueerror_T(h) = \big|\{i \in [n]\,|\,h(x_i) \neq y_i\}\big|/n = -q(\z,h)/n\,.$$ By Chernoff-Hoeffding bounds (see Theorem~\ref{thm:hoeff} in Appendix~\ref{sec:appchern}),
$$\Pr\big[|\trueerror(h)-\trueerror_T(h)| \geq \rho\big] \leq 2\exp(-2n\rho^2)$$ for all hypotheses $h\in\HHH_d$.
Hence,
\begin{eqnarray*}
\Pr\big[\mbox{$|\trueerror(h)-\trueerror_T(h)| \geq \rho$ for some $h \in \HHH_d$}\big]
\ifnum\full=0
\\
\fi
\leq 2 |\HHH_d| \exp(-2n\rho^2).
\end{eqnarray*}

We now analyze $\mathcal{A}^{\epsilon}_q (\mathrm{z})$ conditioned on the event that for all $h\in\HHH_d$, $|\trueerror(h)-\trueerror_T(h)| < \rho$.
For every $h \in \HHH_d,$ the probability that $\mathcal{A}^{\epsilon}_q (\mathrm{z})= h$ is
\begin{eqnarray*}
\ifnum\full=0
&&
\fi
\frac{\exp(-\frac{\epsilon}{2}\cdot n \cdot \trueerror_T(h))}{\sum_{h'\in\HHH_d} \exp(-\frac{\epsilon}{2}\cdot n \cdot \trueerror_T(h'))}
\ifnum\full=0
\\
\fi
& \leq &
\frac{\exp \left (-\frac{\epsilon}{2}\cdot n \cdot \trueerror_T(h) \right )}{\max_{h'\in\HHH_d} \exp(-\frac{\epsilon}{2}\cdot n \cdot \trueerror_T(h'))} \\
& = & \exp\left (-\frac{\epsilon}{2}\cdot n \cdot (\trueerror_T(h) - \min_{h'\in\HHH_d} \trueerror_T(h'))\right ) \\
& \leq & \exp\left (-\frac{\epsilon}{2}\cdot n \cdot (\trueerror_T(h) - (OPT + \rho))\right ).
\end{eqnarray*}
Hence, the probability that $\mathcal{A}^{\epsilon}_q (\mathrm{z})$ outputs a hypothesis $h\in\HHH_d $ such that $\trueerror_T(h) \geq OPT + 2\rho$ is at most $|\HHH_d|\exp(-\epsilon n \rho/2)$.

%To conclude the analysis,
Now set $\rho = \alpha/3$. If $\trueerror(h) \geq OPT + \alpha$ then $|\trueerror(h)-\trueerror_T(h)| \geq \alpha/3$ or $\trueerror_T(h) \geq OPT + 2\alpha/3$. Thus $\Pr[E] \leq |\HHH_d|(2\exp(-2n\alpha^2/9) + \exp(-\epsilon n \alpha/6)) \leq \beta$ where the last inequality holds for
%sofya: incorrect use of O-notation
$n %= O
\geq 6
\left((\ln | \HHH_d | + \ln \frac{1}{\beta}) \cdot \max\{\frac{1}{\epsilon\alpha},\frac{1}{\alpha^2}\}\right)$.
\end{proof}

\ifnum\full=1
\begin{remark}
In the non-private agnostic case, the standard Occam's razor bound  %(see, \eg~\cite{Kearns94})
 guarantees that $O((\log|\CCC_d|+\log (1/\beta))/\alpha^2)$ labeled examples suffice to agnostically learn a concept class $\CCC_d$. The bound of \thmref{PACvsPPAC} %shows that (Occam's razor) upper bounds on the sample size of private and non-private agnostic learners only differ
differs
 by a factor of $O(\frac{\alpha}{\eps})$ if $\alpha > \eps$, and does not differ at all
%if the same asymptotic complexity
 otherwise. For (non-agnostic) PAC learning, the dependence on $\alpha$ in the sample size for both the private and non-private versions improves to $1/\alpha$. In that case the upper bounds for private and non-private learners differ by  a factor of $O(1/\eps)$.   Finally, the theorem can be extended to settings where $\HHH_d \neq \CCC_d$, but in this case using the same sample complexity the learner outputs a hypothesis whose error is close to the best error attainable  by a function in $\HHH_d$.
\end{remark}

%\subsection{Implications of Private Agnostic Learner}\label{sec:implication}
\paragraph{Implications of the Private Agnostic Learner}

The private agnostic learner has the following important consequence: If some concept class $\CCC_d$ is learnable by any algorithm $\AAA$, not necessarily a private one, and $\AAA$'s output length in bits is polynomially bounded, then there is a (possibly exponential time) private algorithm that learns $\CCC_d$ using a polynomial number of samples. Since $\AAA$'s output is polynomially long, $\AAA$'s hypothesis class $\HHH_d$ must have size at most $2^{\poly(d)}$. Since $\AAA$ learns $\CCC_d$ using $\HHH_d$, class $\HHH_d$ must contain a good hypothesis. Thus, our private learner will learn $\CCC_d$ using $\HHH_d$  with  sample complexity linear in $\log|\HHH_d|$.

\paragraph{The ``compression version'' of Occam's razor} \anote{This text added to address referee's complaint.}

It is most natural to state our result as an analogue of the
cardinality version of Occam's razor, which bounds generalization
error in terms of the size of the hypothesis class. However, our
result can be extended to the compression version, which captures
the general relationship between compression and learning (we borrow the  ``cardinality version'' terminology from~\cite{Kearns94}). This latter
version states that any algorithm which ``compresses'' the data set,
in the sense that it finds a consistent hypothesis which has a short
description relative to the number of samples seen so far, is a good
learner (see \cite{BEHW87} and \cite[p.~34]{Kearns94}).

Compression by itself does not imply privacy, because the
compression algorithm's output might encode a few examples in the
clear (for example, the hyperplane output by a support vector machine is
defined via a small number of actual data points).
However, Theorem~\ref{thm:PACvsPPAC} can be extended to provide a
private analogue of the compression version of Occam's razor. If there
exists an algorithm that compresses, in the sense above, then there
also exists a private PAC learner which does not have fixed sample
complexity, but uses an expected number of samples similar to that of
the compression algorithm. The private learner proceeds in rounds: at
each round it requests twice as many examples as in the previous
round, and uses a restricted hypothesis class consisting of
sufficiently concise hypotheses from the original class $\HHH$. We
omit the straightforward details.

\subsection{Private Learning with VC dimension Sample Bounds} \label{sec:vclearn}

\anote{Rewrote this section completely based on referee's observation...}
In the non-private case one can also bound the sample size of a PAC
learner in terms of the Vapnik-Chervonenkis (VC) dimension of the concept class. % Let us recall the
% definition of the VC dimension (see, e.g., \cite{Kearns94}).
\begin{definition}[VC dimension] A set $S\subseteq X_d$ is \emph{shattered}
by a concept class $\CCC_d$ if $\CCC_d$ restricted to $S$ contains all $2^{|S|}$
possible functions from $S$ to $\{0,1\}$. The \emph{VC dimension} of $\CCC_d$,
denoted $VCDIM(\CCC_d)$, is the cardinality
of a largest set $S$ shattered by $\CCC_d$.
\end{definition}

We can extend \thmref{PACvsPPAC} to classes with finite VC dimension,
but the resulting sample complexity also depends logarithmically on
the size of the domain from which examples are drawn. Recent results
of Beimel \etal~\cite{BKN10} show that for ``proper'' learning, the
dependency is in fact necessary; that is, the VC dimension alone is not sufficient to bound the sample complexity of proper private
learning. It is unclear if the dependency is necessary in
general. \anote{Shiva/Kobbi: please confirm correctness.} \snote{Yes, it is correct.}

\begin{corollary} \label{thm:vcoccam}
%For any $d \in \N$, the class of concepts
Every concept class $\CCC_d$ is privately agnostically learnable using hypothesis class $\HHH_d=\CCC_d$ with
$n=O((VCDIM(\CCC_d)\cdot \ln |X_d  | +\ln \frac 1 {\beta})\cdot \max\{\frac{1}{\epsilon\alpha},\frac{1}{\alpha^2}\})$ labeled examples from $\DDD$. Here, $\eps, \alpha$, and $\beta$ are parameters of the private agnostic learner,
 and $VCDIM(\CCC_d)$ is the VC dimension of $\CCC_d$.
(The learner is not necessarily efficient.)
\end{corollary}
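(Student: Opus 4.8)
The plan is to reduce Corollary~\ref{thm:vcoccam} to \thmref{PACvsPPAC} by passing from the (possibly infinite) concept class $\CCC_d$ to a finite subclass that ``covers'' $\CCC_d$ on the particular sample the learner sees, and then bounding the size of that subclass via the Sauer--Shelah lemma. The key observation is that the generic private learner $\mathcal{A}^\epsilon_q$ only needs the hypothesis class $\HHH_d$ to (i) be finite, so the exponential mechanism is well-defined and \lemref{mtt} applies, and (ii) contain a hypothesis with empirical error close to $OPT$. Over a domain $X_d$ of size $|X_d|$, the number of distinct labelings that concepts in $\CCC_d$ induce on all of $X_d$ is at most $\left(|X_d|\right)^{VCDIM(\CCC_d)}$ by Sauer--Shelah (or, more crudely, $|X_d|^{VCDIM(\CCC_d)}$ suffices for our purposes), since a class of VC dimension $v$ can produce at most $O(m^v)$ distinct patterns on any $m$ points.

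First I would define $\widehat\CCC_d \subseteq \CCC_d$ to be a maximal subset of concepts that are pairwise distinct \emph{as functions on $X_d$}; equivalently, $\widehat\CCC_d$ contains exactly one representative of each realizable labeling of $X_d$. Then $|\widehat\CCC_d| \le |X_d|^{VCDIM(\CCC_d)}$ by Sauer--Shelah, and for every $c \in \CCC_d$ there is some $\widehat c \in \widehat\CCC_d$ with $\widehat c(x) = c(x)$ for all $x \in X_d$, hence $\trueerror(\widehat c) = \trueerror(c)$ for every distribution $\DDD$ on $X_d \times \{0,1\}$. In particular $\min_{f \in \widehat\CCC_d} \trueerror(f) = \min_{f \in \CCC_d}\trueerror(f) = OPT$, so agnostically learning $\CCC_d$ using hypothesis class $\widehat\CCC_d$ is the same task as agnostically learning $\CCC_d$ using $\CCC_d$.

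Next I would apply \thmref{PACvsPPAC} with the concept/hypothesis class taken to be $\widehat\CCC_d$ in place of $\CCC_d$. This is legitimate because $\widehat\CCC_d$ is finite and (as noted in the remark following that theorem) the proof goes through for any finite hypothesis class, yielding a hypothesis whose error is within $\alpha$ of the best attainable by a function in $\widehat\CCC_d$, which equals $OPT$. Substituting $\ln|\widehat\CCC_d| \le VCDIM(\CCC_d)\cdot\ln|X_d|$ into the sample bound $n = O((\ln|\HHH_d| + \ln\frac1\beta)\cdot\max\{\frac1{\epsilon\alpha},\frac1{\alpha^2}\})$ gives exactly $n = O((VCDIM(\CCC_d)\cdot\ln|X_d| + \ln\frac1\beta)\cdot\max\{\frac1{\epsilon\alpha},\frac1{\alpha^2}\})$, and privacy is inherited verbatim from \lemref{mtt} since $\mathcal{A}^\epsilon_q$ over $\widehat\CCC_d$ still has the property that changing one example changes the score $q(\z,h)$ by at most $1$.

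The main thing to be careful about — rather than a deep obstacle — is that the covering subclass $\widehat\CCC_d$ depends only on the \emph{domain} $X_d$, not on the sample $\z$; this is what lets the learner fix $\HHH_d = \widehat\CCC_d$ in advance, preserving differential privacy (which must hold for worst-case, not sample-dependent, behavior). If one instead tried to use the labelings realized on the observed sample $\{x_1,\dots,x_n\}$, the hypothesis class would be data-dependent and the privacy argument would break; using the whole domain avoids this at the cost of the $\ln|X_d|$ factor, which is exactly the dependence on domain size flagged in the discussion before the corollary (and shown necessary for proper learning by Beimel \etal~\cite{BKN10}). A minor point is to state Sauer--Shelah in the form actually needed (bounding the number of distinct functions $\CCC_d$ induces on a finite set of size $|X_d|$ by $\sum_{i=0}^{VCDIM(\CCC_d)}\binom{|X_d|}{i} \le |X_d|^{VCDIM(\CCC_d)}$ for $|X_d| \ge 2$), which I would invoke as a standard fact.
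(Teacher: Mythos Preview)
Your proposal is correct and follows essentially the same approach as the paper: apply Sauer's lemma to bound the number of distinct labelings of $X_d$ by $O(|X_d|^{VCDIM(\CCC_d)})$, then run the generic learner of \thmref{PACvsPPAC} over a finite hypothesis class containing one representative per labeling. Your write-up is more careful than the paper's two-line proof in spelling out that $\widehat\CCC_d$ is fixed independently of the sample (so privacy is preserved) and that $OPT$ is unchanged, but the underlying argument is the same.
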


\begin{proof}
  Sauer's lemma~(see, \emph{e.g.}, \cite{Kearns94}) implies that there
  are  $O(|X_d|^{VCDIM(\CCC_d)})$ different labelings of
  $X_d$ by functions in $\CCC_d$. We can thus run the generic
  learner of the previous section with a hypothesis class of size
  $|\HHH_d| = O(|X_d|^{VCDIM(\CCC_d)})$. The statement follows directly.
\end{proof}

Our original proof of the corollary used a result of Blum,
Ligget and Roth~\cite{BLR08} (which was inspired, in turn, by our generic
learning algorithm) on generating synthetic data. The simpler proof
above was pointed out to us by an anonymous reviewer.

\paragraph{Remark: Computability Issues with Generic Learners}

In their full generality, the generic learning results of the previous sections (Theorems \ref{thm:PACvsPPAC} and \ref{thm:vcoccam})  produce well-defined randomized maps, but not necessarily ``algorithms'' in the sense of ``functions uniformly computable  by Turing machines''. This is because the concept class and example domain may themselves not be computable (nor even recognizable) uniformly (imagine, for example, a concept class indexed by elements of the halting problem). It is commonly assumed in the learning literature that elements of the concept class and domain can be computed/recognized by a Turing machine and some bound on the length of their binary representations is known. In this case, the generic learners can be implemented by randomized Turing machines with finite expected running time.

\section{An Efficient Private Learner for $\parity$}\label{sec:parity}
Let $\parity$ be the class of parity functions $c_{r}:\zo^{d}\rightarrow \zo$ indexed by $r\in\zo^d$, where $c_{r}(x)= r \odot x $ denotes the inner product modulo $2$.
In this section, we present an efficient private PAC learning algorithm for $\parity$. The main result is stated in Theorem~\ref{thm:parity-in-PPAC}.

The standard (non-private) PAC learner for $\parity$ \cite{helm,FischerSimon-1990} looks for the hidden vector $r$ by solving a system of linear equations imposed by examples $(x_i,c_r(x_i))$ that the algorithm sees. It outputs an arbitrary vector consistent with the examples, i.e., in the solution space of the system of linear equations. We want to design a private algorithm that emulates this behavior. A major difficulty is that the private learner's behavior must be  specified on {\em all} databases $\mathrm{z}$, even those which are not consistent with any single parity function. The standard PAC learner would simply fail in such a situation (we denote failure by the output $\perp$).  In contrast,  the probability that a private algorithm fails must be similar for all neighbors $\mathrm{z}$ and $\mathrm{z'}$.
% One important point is that the private algorithm has to be defined on all databases $\mathrm{z}$, even the ones which are not consistent with any parity function because the privacy guarantee has to hold for all neighboring databases $\mathrm{z}$ and $\mathrm{z'}$. In particular, we have to guarantee that the probability that the algorithm fails (does not find a consistent hypothesis and, in our notation, outputs $\perp$) is similar for all neighbors $\mathrm{z}$ and $\mathrm{z'}$. Observe that this is not true for the standard learning algorithm for $\parity$.

We first present a private algorithm $\AAA$ for learning $\parity$ that succeeds only with constant probability. %with failure probability $3/4$.
Later we amplify its success probability and get a private PAC learner~$\AAA^*$ for $\parity$. Intuitively, the reason \parity\ can be learned privately is that when a new example (corresponding to a new linear constraint) is added, the space of consistent hypotheses shrinks by at most a factor of 2. This holds unless the new constraint is inconsistent with previous constraints. In the latter case, the size of the space of consistent hypotheses goes to 0. Thus, the solution space changes drastically on neighboring inputs only when the algorithm fails (outputs $\perp$).
The fact that algorithm outputs $\perp$ on a database $\mathrm{z}$ and a valid (non $\perp$) hypothesis on a neighboring database $\mathrm{z'}$ might lead to privacy violations. To avoid this, our algorithm always outputs $\perp$ with probability at least $ 1/2$
on any input (Step~1).
\alg{A private learner for $\parity$, $\AAA(\mathrm{z},\eps)$}{
\begin{enumerate}
\item With probability $1/2$, output $\perp$ and terminate.

\item Construct a set
$S$ by picking each element of $[n]$ independently with probability $p=\eps/4$.

\item Use Gaussian elimination to solve the system of equations imposed by examples, indexed by $S$: namely, $\{x_i \odot r = c_r(x_i)\,: \, i\in S\}$. Let $V_{S}$ denote the resulting affine subspace.

\item Pick $r^*\in V_{S}$ uniformly
at random and output $c_{r^*}$; if $V_S = \emptyset$, output $\perp$.
\end{enumerate}
}

\ifnum\full=0
The (omitted)
proof of privacy is based on showing that the inclusion of any single point in the sample set $S$ increases the probability of a hypothesis being output by at most 2. The
(omitted)
proof of utility follows by considering all the possible situations in which the algorithm fails to satisfy the error bound, and by bounding the probabilities with which these situations occur.
\fi

\remove{
%shiva-old lemma changed below
%sofya: note that we do not need stochastic assumptions for privacy; changed the sentence order to make it clear.
\begin{lemma}[Privacy, Utility of $\AAA$]\label{lem:parity-utility} {\em (a)} Algorithm $\AAA$ is $\eps$-differentially private. {\em (b)} Let $\XXX$ be a distribution over $X=\{0,1\}^d$. Let $\mathrm{z}=(z_1,\dots,z_n)$, where every $z_i=(x_i,c(x_i))$ with $x_i$ drawn i.i.d.\ from  $\XXX$ and $c \in \parity$. If $n\geq \frac 8 {\eps \alpha}(d\ln 2+ \ln(1/\beta'))$ then  $\Pr[\AAA(\mathrm{z},\eps)=h \text{ with } \trueerror(h)\leq \alpha]\geq \frac 1 2 - \beta'$.
\end{lemma}
\ifnum\full=1
\begin{proof}
For simplicity we separate the utility and privacy parts.
%shiva-for the full version we should break the previous lemma into 2 parts-privacy+utility. Rather than using 2 claims inside the proof.
%shiva-changed the claim statement
\begin{claim}[Utility of $\AAA$]\label{claim:parity-utility}  Let $\XXX$ be a distribution over $X=\{0,1\}^d$. Let $\mathrm{z}=(z_1,\dots,z_n)$, where $z_i=(x_i,c(x_i))$ where the entries $x_i$ are drawn i.i.d.\ from  $\XXX$ for $i \in [n]$ and $c \in \parity$.  If $n\geq \frac 8 {\eps \alpha}\left(d\ln 2+ \ln\frac{1}{\beta'} \right)$ then \begin{eqnarray*}&\Pr[\AAA(\mathrm{z},\eps)=h \text{ with } \trueerror(h)\leq \alpha]\geq \frac 1 2 - \beta'.& \end{eqnarray*}
\end{claim}
\begin{proof}
By standard arguments in learning theory \cite{Kearns94}, $\displaystyle |S| \geq \frac{1}{\alpha} \left(d\ln 2 +  \ln\frac{1}{\beta'}\right)$ labeled examples are sufficient for learning \parity\ with error $\alpha$ and failure probability $\beta'$.
Since $\AAA$ adds each element of $[n]$ independently with probability $p$ to $S$,  the expected size of $S$ is  $pn=\eps n/4$. By the Chernoff bound (see~\ref{thm:chern} in Appendix~\ref{sec:appchern}), $|S|\geq \eps n/8$ with probability at least $1-e^{-\eps n/16}$.
We pick $n$ such that $\eps n/8 \geq \frac{1}{\alpha} \left(d\ln 2 + \ln \frac{1}{\beta'}\right)$.

We can now bound the overall success probability.
$\AAA(\mathrm{z},\eps)=h \text{ with } \trueerror(h)\leq \alpha$ unless one of the following bad events happens: (i) $\AAA$ terminates in Step 1, (ii) $\AAA$ proceeds to Step 2, but does not get enough examples: $|S|< \frac{1}{\alpha} \left(d\ln 2 +  \ln\frac{1}{\beta'}\right)$, (iii) $\AAA$ gets enough examples, but outputs a hypothesis with error greater than $\alpha$. The first bad event occurs with probability 1/2. If the lower bound on the database size, $n$, is satisfied then
%, by the Chernoff bound (Theorem~\ref{thm:chern}),
the second bad event occurs with probability at most $e^{-\eps n/16}/2\leq \beta' /2$. The last inequality follows from the bound on $n$ and the fact that $\alpha\leq 1/2$. Finally, by our choice of parameters, the last bad event occurs with probability at most $\beta'/2$, and the claimed bound on the success probability follows.
\end{proof}

\begin{claim} [Privacy of $\AAA$] \label{claim:parity-privacy}
Algorithm $\AAA$ is $\eps$-differentially private.
\end{claim}
\begin{proof}
To show that $\AAA$ is $\eps$-differentially private, it suffices to prove that any output of $\AAA$, either a valid hypothesis or $\perp$, appears with roughly the same probability on neighboring databases $\mathrm{z}$ and $\mathrm{z'}$. In the remainder of the proof we fix $\epsilon$, and write $\AAA(\z)$ as shorthand for $\AAA(\z,\eps)$.
\begin{eqnarray}
\label{eq:prob-h-bound}{\Pr[\AAA(\mathrm{z})=h]}\leq e^\eps \cdot {\Pr[\AAA(\mathrm{z'})=h]} & \text{for all neighbors $\mathrm{z},\mathrm{z'}\in D^n$} \text{ and all hypotheses $h\in \parity$;}\\
\label{eq:prob-perp-bound}{\Pr[\AAA(\mathrm{z})=\perp]}\leq e^\eps \cdot {\Pr[\AAA(\mathrm{z'})=\perp]}&\text{for all neighbors $\mathrm{z},\mathrm{z'}\in D^n$.}
\end{eqnarray}
Let $\mathrm{z}$ and $\mathrm{z'}$ be neighboring databases, and let $i$ denote the entry on which they differ. Recall that $\AAA$ adds $i$ to $S$ with probability $p$. Observe that since $\mathrm{z}$ and $\mathrm{z'}$ differ only in the $i^{th}$ entry, $\Pr[\AAA(\mathrm{z})=h \ | \  i\notin S]=\Pr[\AAA(\mathrm{z'})=h \ | \  i\notin S]$.

To see that Equation~\ref{eq:prob-h-bound} holds note first that if $\Pr[\AAA(\mathrm{z'})=h \ | \  i\notin S] =0$, then also $\Pr[\AAA(\mathrm{z})=h \ | \  i\notin S] =0$, and hence $\Pr[\AAA(\mathrm{z})=h] =0$ because adding a constraint does not add new vectors to the space of solutions.
Otherwise, $\Pr[\AAA(\mathrm{z'})=h \ | \  i\notin S] > 0$ and we rewrite the probability on $\mathrm{z}$ as follows:
$$\Pr[\AAA(\mathrm{z})=h]= p \cdot \Pr[\AAA(\mathrm{z})=h \ | \  i\in S] + (1-p) \cdot \Pr[\AAA(\mathrm{z})=h \ | \  i\notin S],$$
and apply the same transformation to the probability on $\mathrm{z'}$. Therefore,
\begin{eqnarray*}
\frac{\Pr[\AAA(\mathrm{z})=h]}{\Pr[\AAA(\mathrm{z'})=h]}
&=& \frac{p \cdot \Pr[\AAA(\mathrm{z})=h \ | \  i\in S] + (1-p) \cdot \Pr[\AAA(\mathrm{z})=h \ | \  i\notin S]}{p\cdot \Pr[\AAA(\mathrm{z'})=h \ | \  i\in S] + (1-p) \cdot \Pr[\AAA(\mathrm{z'})=h \ | \  i\notin S]} \\
&\leq& \frac{p \cdot \Pr[\AAA(\mathrm{z})=h \ | \  i\in S] + (1-p) \cdot \Pr[\AAA(\mathrm{z})=h \ | \  i\notin S]}{p\cdot 0 + (1-p) \cdot \Pr[\AAA(\mathrm{z'})=h \ | \  i\notin S]}\\
&=&\frac p {1-p}\cdot\frac{\Pr[\AAA(\mathrm{z})=h \ | \  i\in S]}{\Pr[\AAA(\mathrm{z})=h \ | \  i\notin S]}+1 \leq\frac{2p}{1-p} +1\leq \eps+1\leq e^\eps.
\end{eqnarray*}
In the last line, the first inequality follows from Claim~\ref{claim:ratio-bound}, and the second inequality holds since $p=\eps/4$ and $\eps\leq 1/2$.

The proof of Equation~\ref{eq:prob-perp-bound} is similar:
\begin{eqnarray*}
\frac{\Pr[\AAA(\mathrm{z})=\perp]}{\Pr[\AAA(\mathrm{z'})=\perp]}
&=& \frac{p \cdot \Pr[\AAA(\mathrm{z})=\perp \ | \  i\in S] + (1-p) \cdot \Pr[\AAA(\mathrm{z})=\perp \ | \  i\notin S]}{p\cdot \Pr[\AAA(\mathrm{z'})=\perp \ | \  i\in S] + (1-p) \cdot \Pr[\AAA(\mathrm{z'})=\perp \ | \  i\notin S]} \\
&\leq& \frac{p\cdot 1  + (1-p) \cdot \Pr[\AAA(\mathrm{z})=\perp \ | \  i\notin S]}{p\cdot 0 + (1-p) \cdot \Pr[\AAA(\mathrm{z'})=\perp \ | \  i\notin S]}\\
&=&\frac{p}{(1-p) \cdot \Pr[\AAA(\mathrm{z'})=\perp \ | \  i\notin S]}+1
\leq\frac{2p}{1-p} +1\leq \eps+1\leq e^\eps.
\end{eqnarray*}
In the last line, the first inequality follows from the fact that on any input, $\AAA$ outputs $\perp$ with probability at least $1/2$.
\end{proof}

\begin{claim} \label{claim:ratio-bound}
$\displaystyle\frac{\Pr[\AAA(\mathrm{z})=h \ | \  i\in S]}{\Pr[\AAA(\mathrm{z})=h \ | \  i\notin S]}\leq 2$, for all $\mathrm{z}\in D^n$ and all hypotheses $h\in \parity$.
\end{claim}
\begin{proof}
The left hand side
\begin{eqnarray*}
\frac{\Pr[\AAA(\mathrm{z})=h \ | \  i\in S]}{\Pr[\AAA(\mathrm{z})=h \ | \  i\notin S]}
&=&
\frac{\sum_{T \subseteq [n] \setminus \{i\}}\Pr[\AAA(\mathrm{z})=h \ | \ S =T\cup \{i\}]\cdot \Pr[\text{$\AAA$ selects $T$ from $[n]\setminus \{i\}$}]}{\sum_{T \subseteq [n] \setminus \{i\}}\Pr[\AAA(\mathrm{z})=h \ | \ S =T]\cdot \Pr[\text{$\AAA$ selects $T$ from $[n]\setminus \{i\}$}]}.
\end{eqnarray*}
To prove the claim, it is enough to show that
$\displaystyle\frac{ \Pr[\AAA(\mathrm{z})=h \ | \ S =T\cup \{i\}]}{\Pr[\AAA(\mathrm{z})=h \ | \ S =T]}\leq 2$
for each $T \subseteq [n] \setminus \{i\}$. Recall that $V_S$ is the space of solutions to the system of linear equations $\{\langle x_i, r \rangle = c_r(x_i)\,: \, i\in S\}$. Recall also that $\AAA$ picks $r^*\in V_S$ uniformly at random and outputs $h=c_{r^*}$. Therefore,
$$\Pr[\AAA(\mathrm{z})=c_{r^*} \ | \ S]= \left \{ \begin{array}{ll}
1/|V_S| & \mbox{ if } r^* \in V_S, \\
0 & \mbox{ otherwise. }
\end{array}
\right. $$
If $\Pr[\AAA(\mathrm{z})=h \ | \ S =T]=0$ then $\Pr[\AAA(\mathrm{z})=h \ | \ S =T\cup \{i\}]=0$ because a new constraint does not add new vectors to the space of solutions. If $\Pr[\AAA(\mathrm{z})=h \ | \ S =T\cup\{i\}]=0$, the required inequality holds.
If neither of the two probabilities is 0,
$$\frac{\Pr[\AAA(\mathrm{z})=h \ | \ S =T\cup\{i\}]}{\Pr[\AAA(\mathrm{z})=h \ | \ S =T]}
=\frac{1/|V_{T\cup\{i\}}|}{1/|V_T|}=\frac{|V_T|}{|V_{T\cup\{i\}}|}\leq 2.$$
The last inequality holds because in $\Z_2$, adding a consistent linear constraint either reduces the space of solutions by a factor of 2 (if the constraint is linearly independent from $V_T$) or does not change the solutions space (if it is linearly dependent on the previous constraints). The constraint indexed by $i$ has to be consistent with constraints indexed by $T$, since both probabilities are not~$0$.
\end{proof}
Putting together Claims~\ref{claim:parity-utility} and~\ref{claim:parity-privacy} completes the proof of Lemma~\ref{lem:parity-utility}.
\end{proof}
\fi

It remains to amplify the success probability of $\AAA$. To do so, we repeat it $\Theta(\log \frac 1 \beta)$ times, and output the answer returned by the first iteration that does not output $\perp$.
\ifnum\full=0
See~\cite{KLNRS08} for details. We obtain the following result.
\fi
\ifnum\full=1
\alg{Amplified private PAC learner for \parity,  $\AAA^*(\mathrm{z},\eps,\alpha,\beta)$}{
\begin{enumerate}
\item $\beta'=\frac \beta 2$;  $\eps'\gets \frac \eps {\log (1/\beta')}$.
\item For $i\gets 1 \text{ to } \log(1/\beta')$ \\
\hspace*{7mm}  $h\gets\AAA(\mathrm{z},\eps')$; if $h\neq \perp$, output $h$ and terminate.
\item Output $\perp$.
\end{enumerate}
}
Observe that by Claim~\ref{claim:composition}, $k$~runs of  an $\eps$-differentially private algorithm result in a $k \eps $-differentially private algorithm.  We obtain the following result.
\fi

\begin{theorem}\label{thm:parity-in-PPAC}
\parity\ is efficiently privately PAC learnable with $n=O\left(\frac{\log(1/\beta)}{\eps\alpha}(d+\log \frac{1}{\beta})\right)$  examples.
\end{theorem}
\ifnum\full=1
\begin{proof}
We first show that $\AAA^*$ satisfies the utility condition of Definition~\ref{def:private-general} if it is provided with at least $n= \frac 8 {\eps' \alpha}\left(d\ln 2+ \ln\frac 1 {\beta'}\right)=O\left(\frac{\log(1/\beta)}{\eps\alpha}(d+\log (1/\beta))\right)$ examples. By \lemref{parity-utility}, each invocation of $\AAA$ inside $\AAA^*$ fails to output a good hypothesis (with error less than $\alpha$) with probability at most $1/2+\beta'$. Recall that it fails in Step 1 with probability 1/2. Therefore, it fails to output a good hypothesis later with probability at most $\beta'$. Thus, the probability that $\AAA^*$ never gets past Step 1 in $\log(1/\beta')$ iterations of $\AAA$, is $\frac 1 {2^{\log (1/\beta')}}=\beta'$. If it does get past Step 1 on a particular iteration, in this iteration it fails to output a good hypothesis with probability at most $\beta'$. Thus, the total probability of failure is at most $2\beta'=\beta$, as required.

$\AAA^*$ satisfies the privacy condition of Definition~\ref{def:private-general} since $\AAA$ is private (\ref{claim:parity-privacy}) and differentially private algorithms compose securely (Claim~\ref{claim:composition}).
\end{proof}
\fi

\begin{remark} It is possible to remove the quadratic dependency on $\log(1/\beta)$ in the previous theorem statement, by running $\AAA$ with a slightly smaller value of $n$ (hence increasing the probability of outputting a bad hypothesis), and setting aside a small part of the data (a test set) to verify, using sum queries, how well the candidate hypotheses do. In this case, the upper bounds on the sample size and the running time of private and non-private $\parity$ learners only differ by a factor of $O(1/\eps)$.\end{remark}
}

\ifnum\full=0
\begin{lemma}[Privacy, Utility of $\AAA$]\label{lem:parity-utility} {\em (a)} Algorithm $\AAA$ is $\eps$-differentially private. {\em (b)} Let $\XXX$ be a distribution over $X=\{0,1\}^d$. Let $\mathrm{z}=(z_1,\dots,z_n)$, where every $z_i=(x_i,c(x_i))$ with $x_i$ drawn i.i.d.\ from  $\XXX$ and $c \in \parity$. If $n\geq \frac 8 {\eps \alpha}(d\ln 2+ \ln 4)$ then  $\Pr[\AAA(\mathrm{z},\eps)=h \text{ with } \trueerror(h)\leq \alpha]\geq \frac 1 4$.
\end{lemma}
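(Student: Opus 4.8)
The plan is to prove the two parts separately, beginning with privacy. Since the output range of $\AAA$ is the finite set $\parity\cup\{\perp\}$, it suffices to show, for every pair of neighboring databases $\z,\z'$ (say they differ in coordinate $i$), that $\Pr[\AAA(\z)=h]\le e^\eps\Pr[\AAA(\z')=h]$ for every parity hypothesis $h$, and also $\Pr[\AAA(\z)=\perp]\le e^\eps\Pr[\AAA(\z')=\perp]$. The engine of the argument is a structural claim: for any fixed database $\z$ and any $h\in\parity$, $\Pr[\AAA(\z)=h\mid i\in S]\le 2\,\Pr[\AAA(\z)=h\mid i\notin S]$. I would prove this by conditioning further on the exact selected set, comparing $S=T$ with $S=T\cup\{i\}$ for $T\subseteq[n]\setminus\{i\}$, and using that $\AAA$ outputs a uniformly random element of the affine solution space $V_S$: over $\Z_2$, adding one linear constraint either leaves $V_T$ unchanged or halves it (or, if inconsistent, empties it), so $|V_T|/|V_{T\cup\{i\}}|\le 2$ whenever both conditional probabilities are nonzero, and the inequality is trivial otherwise.

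With this in hand I would decompose $\Pr[\AAA(\z)=h]=p\,\Pr[\AAA(\z)=h\mid i\in S]+(1-p)\,\Pr[\AAA(\z)=h\mid i\notin S]$ with $p=\eps/4$, and use that $\AAA$ ignores $z_i$ when $i\notin S$, so $\Pr[\AAA(\z)=h\mid i\notin S]=\Pr[\AAA(\z')=h\mid i\notin S]$. Lower-bounding $\Pr[\AAA(\z')=h]$ by dropping its $i\in S$ term and applying the structural claim to the numerator yields $\Pr[\AAA(\z)=h]/\Pr[\AAA(\z')=h]\le 1+2p/(1-p)\le 1+\eps\le e^\eps$ since $\eps\le 1/2$; the degenerate case $\Pr[\AAA(\z')=h]=0$ forces $h\notin V_T$ for all $T$ (adding constraints never enlarges the solution space), hence $\Pr[\AAA(\z)=h]=0$ as well. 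The same decomposition handles the output $\perp$: now I bound $\Pr[\AAA(\z)=\perp\mid i\in S]\le 1$ trivially and use that $\Pr[\AAA(\z)=\perp\mid i\notin S]\ge 1/2$ always holds because of Step~1 — this is precisely the role of Step~1 — giving again $\Pr[\AAA(\z)=\perp]/\Pr[\AAA(\z')=\perp]\le 1+2p/(1-p)\le e^\eps$.

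For utility I would bound the three ways $\AAA$ can fail to output a hypothesis of error at most $\alpha$: (i) it halts in Step~1; (ii) it reaches Step~2 but $|S|<\frac1\alpha(d\ln 2+\ln 4)$; (iii) $|S|$ is large enough but the uniform $r^*\in V_S$ still has error exceeding $\alpha$. Event (i) has probability exactly $1/2$ and is independent of the rest. Each index enters $S$ independently with probability $p=\eps/4$, so $\E[|S|]=\eps n/4\ge \frac2\alpha(d\ln 2+\ln 4)$ under the hypothesis on $n$, and a Chernoff bound makes event (ii) unlikely. For event (iii), conditioning on $|S|=s$ makes the examples indexed by $S$ a set of $s$ i.i.d.\ draws from $\XXX$ labeled by $c$, and $r^*$ is then a uniformly random parity consistent with all of them; the standard Occam/union bound over the $2^d$ parities shows the probability that some consistent parity has error $>\alpha$ is at most $2^d(1-\alpha)^s\le 2^d e^{-\alpha s}$, which falls below $1/4$ once $s\ge \frac1\alpha(d\ln 2+\ln 4)$. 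Summing the three contributions and using the stated lower bound on $n$ shows $\Pr[\AAA\text{ fails}]\le 3/4$, i.e.\ $\AAA$ succeeds with probability at least $1/4$.

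The delicate part is the privacy analysis: unlike output-perturbation mechanisms, the output law here is a uniform draw from a solution space whose dimension can jump by one as the input changes, so the proof must lean on both the multiplicative structure of linear systems over $\Z_2$ (the $\times 2$ bound) and the deliberate $\perp$-padding in Step~1 to control the otherwise-uncontrolled change in $\Pr[\perp]$ on neighboring inputs. The utility half is comparatively routine bookkeeping of failure events layered on top of standard Chernoff and sample-complexity bounds for learning parity.
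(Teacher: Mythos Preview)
Your proposal is correct and mirrors the paper's proof essentially step for step: the privacy argument via the decomposition over $\{i\in S\}$ versus $\{i\notin S\}$, the structural claim $\Pr[\AAA(\z)=h\mid i\in S]\le 2\Pr[\AAA(\z)=h\mid i\notin S]$ proved by comparing $|V_T|$ with $|V_{T\cup\{i\}}|$, the role of Step~1 in bounding the $\perp$ ratio, and the three-bad-events utility analysis with Chernoff plus the Occam/union bound over $2^d$ parities. The only difference is cosmetic---the paper states the sample-complexity bound for consistent learners as a black box whereas you unpack it, and the paper tracks the $1/2$ factors from Step~1 slightly more explicitly in the utility bookkeeping---but the arguments are the same.
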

\else
The
proof of $\AAA$'s utility follows by considering all the possible situations in which the algorithm fails to satisfy the error bound, and by bounding the probabilities with which these situations occur.

\begin{lemma}[Utility of $\AAA$]\label{lem:parity-utility}  Let $\XXX$ be a distribution over $X=\{0,1\}^d$. Let $\mathrm{z}=(z_1,\dots,z_n)$, where for all $i \in [n]$, the entry $z_i=(x_i,c(x_i))$ with $x_i$ drawn i.i.d.\ from  $\XXX$  and $c \in \parity$.  If $n\geq \frac 8 {\eps \alpha}\left(d\ln 2+ \ln 4 \right)$ then $$\Pr[\AAA(\mathrm{z},\eps)=h \text{ with } \mathrm{ \it error}(h)\leq \alpha]\geq \frac 1 4\:.$$
\end{lemma}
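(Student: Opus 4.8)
The plan is to show that $\AAA(\mathrm{z},\eps)$ produces a hypothesis of error at most $\alpha$ unless one of three easily-bounded bad events occurs, and then to check that the three probabilities leave at least a $1/4$ chance of success. The starting point is the standard (non-private) sample complexity of learning $\parity$: if a learner sees $m$ examples all labeled consistently with some $c\in\parity$ and outputs \emph{any} vector $r^*$ satisfying all $m$ linear constraints, then each of the at most $2^d$ parities $c_{r'}$ with $\trueerror(c_{r'})>\alpha$ survives all $m$ constraints with probability at most $(1-\alpha)^m\le e^{-\alpha m}$, so by a union bound $\Pr[\trueerror(c_{r^*})>\alpha]\le 2^d e^{-\alpha m}$ (this is the ``standard arguments in learning theory'' bound, cf.~\cite{Kearns94}). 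Hence $m\ge\frac1\alpha\bigl(d\ln 2+\ln(1/\beta')\bigr)$ examples suffice to guarantee error at most $\alpha$ except with probability $\beta'$; I would fix $\beta'=\tfrac14$, so $m=\frac1\alpha(d\ln 2+\ln 4)$ suffices. A key observation is that in Step~3 the system is always consistent, since the true target $c$ satisfies every constraint; thus $V_S\neq\emptyset$, the uniformly random $r^*\in V_S$ is a valid consistent hypothesis, and the bound above applies with $m=|S|$.

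Next I would control the size of $S$. Since each index of $[n]$ is placed in $S$ independently with probability $p=\eps/4$, we have $\E[|S|]=\eps n/4$, and a multiplicative Chernoff bound (Appendix~\ref{sec:appchern}) gives $\Pr[\,|S|<\eps n/8\,]\le e^{-\eps n/16}$. Choosing $n\ge\frac{8}{\eps\alpha}(d\ln 2+\ln 4)$ — exactly the hypothesis of the lemma — makes $\eps n/8\ge\frac1\alpha(d\ln 2+\ln 4)=m$, so conditioned on $|S|\ge\eps n/8$ the learning bound of the previous paragraph yields a hypothesis of error at most $\alpha$ except with probability at most $2^d e^{-\alpha m}\le\beta'=\tfrac14$.

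Finally I would assemble the pieces. The output fails to be a hypothesis of error at most $\alpha$ only if: (i) $\AAA$ halts in Step~1, probability exactly $1/2$; or, conditioned on not halting, (ii) $|S|<m$, probability at most $e^{-\eps n/16}$; or (iii) $|S|\ge m$ but the random consistent hypothesis has error exceeding $\alpha$, probability at most $1/4$. From the bound on $n$ one checks $\eps n/8\ge\frac1\alpha(d\ln 2+\ln 4)\ge \ln 64$ (using $\alpha<1/2$, $d\ge1$), hence $e^{-\eps n/16}\le 1/8$, so the conditional failure probability given that Step~1 is passed is at most $1/8+1/4<1/2$; multiplying by the $1/2$ probability of passing Step~1 gives success probability at least $1/4$, as claimed. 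The only genuinely delicate point, and the one I would be most careful to get right, is this final bookkeeping: the constant $\ln 4$ in the sample bound has to be chosen large enough that the Chernoff term is comfortably below the slack left after accounting for the other two failure events, which is precisely why the lemma uses $\ln 4$ rather than a smaller constant.
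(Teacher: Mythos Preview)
Your proposal is correct and follows essentially the same approach as the paper's proof: both identify the same three bad events (termination in Step~1, too few samples in $S$, and a bad consistent hypothesis), bound them via the standard $\parity$ sample-complexity argument and a multiplicative Chernoff bound on $|S|$, and assemble the pieces to obtain the $1/4$ success probability. The only cosmetic difference is in the final arithmetic: the paper sums unconditional probabilities of the three (disjoint) bad events to get at most $1/2+1/8+1/8=3/4$, whereas you bound the conditional failure probability given that Step~1 is passed and then multiply by $1/2$; your explicit verification that $e^{-\eps n/16}\le 1/8$ and your remark that $V_S\neq\emptyset$ because the true target $c$ satisfies every constraint are slightly more detailed than what the paper writes.
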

\begin{proof}
By standard arguments in learning theory \cite{Kearns94}, $\displaystyle |S| \geq \frac{1}{\alpha} \left(d\ln 2 +  \ln\frac{1}{\beta}\right)$ labeled examples are sufficient for learning \parity\ with error $\alpha$ and failure probability $\beta$.
Since $\AAA$ adds each element of $[n]$ to $S$ independently with probability $p=\eps/4$,  the expected size of $S$ is  $pn=\eps n/4$. By the Chernoff bound  (Theorem~\ref{thm:chern}), $|S|\geq \eps n/8$ with probability at least $1-e^{-\eps n/16}$.
We set $\beta=\frac 1 4$ and pick $n$ such that $\eps n/8 \geq \frac{1}{\alpha} \left(d\ln 2 + \ln 4\right)$.

We now bound the overall success probability.
$\AAA(\mathrm{z},\eps)=h \text{ with } \trueerror(h)\leq \alpha$ unless one of the following bad events happens: (i) $\AAA$ terminates in Step 1, (ii) $\AAA$ proceeds to Step 2, but does not get enough examples: $|S|< \frac{1}{\alpha} \left(d\ln 2 +  \ln 4)\right)$, (iii) $\AAA$ gets enough examples, but outputs a hypothesis with error greater than $\alpha$. The first bad event occurs with probability 1/2. If the lower bound on the database size $n$ is satisfied then %, by the Chernoff bound  (Theorem~\ref{thm:chern}),
the second bad event occurs with probability at most $e^{-\eps n/16}/2\leq 1/8$. The last inequality follows from the bound on $n$ and the fact that $\alpha\leq 1/2$. Finally, by our choice of parameters, the last bad event occurs with probability at most $\beta/2=1/8$. The claimed bound on the success probability follows.
\end{proof}

\begin{lemma} [Privacy of $\AAA$]  \label{lem:parity-privacy}Algorithm $\AAA$ is $\eps$-differentially private.
\end{lemma}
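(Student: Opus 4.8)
The plan is to show that any fixed output $o$ of $\AAA$ — whether a hypothesis $c_{r^*}\in\parity$ or the symbol $\perp$ — occurs with nearly the same probability on neighboring databases $\z$ and $\z'$ that differ in a single coordinate $i$. Fix $\eps$ and write $\AAA(\z)$ for $\AAA(\z,\eps)$. The key observation is that the only source of dependence on coordinate $i$ is whether $i\in S$: since $\z$ and $\z'$ agree on every coordinate except $i$, we have $\Pr[\AAA(\z)=o\mid i\notin S]=\Pr[\AAA(\z')=o\mid i\notin S]$ for every output $o$. Writing $p=\eps/4$ and conditioning on the event $i\in S$ (probability $p$) versus $i\notin S$ (probability $1-p$), it therefore suffices to bound the ``cost'' of the $i\in S$ branch relative to the $i\notin S$ branch.

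First I would handle the case $o=c_{r^*}$ for a hypothesis $h=c_{r^*}\in\parity$. If $\Pr[\AAA(\z')=h\mid i\notin S]=0$, then monotonicity of the constraint system (adding constraints only shrinks $V_S$, never enlarges it) forces $\Pr[\AAA(\z)=h\mid i\notin S]=0$ and hence $\Pr[\AAA(\z)=h]=0$, so the ratio bound holds trivially. Otherwise I expand
\[
\frac{\Pr[\AAA(\z)=h]}{\Pr[\AAA(\z')=h]}
=\frac{p\,\Pr[\AAA(\z)=h\mid i\in S]+(1-p)\,\Pr[\AAA(\z)=h\mid i\notin S]}{p\,\Pr[\AAA(\z')=h\mid i\in S]+(1-p)\,\Pr[\AAA(\z')=h\mid i\notin S]},
\]
drop the nonnegative first term in the denominator, cancel the common $(1-p)\Pr[\cdot\mid i\notin S]$ factor, and reduce to showing $\Pr[\AAA(\z)=h\mid i\in S]\le 2\,\Pr[\AAA(\z)=h\mid i\notin S]$; this last inequality, plus $p/(1-p)\le\eps/2$ for $\eps\le 1/2$, yields a ratio at most $1+\eps\le e^{\eps}$.

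The main obstacle — and the heart of the argument — is the factor-$2$ claim $\Pr[\AAA(\z)=h\mid i\in S]\le 2\,\Pr[\AAA(\z)=h\mid i\notin S]$. I would prove it by averaging over the random subset $T\subseteq[n]\setminus\{i\}$ chosen among the non-$i$ coordinates, which is distributed identically in the two conditional experiments, and showing termwise that $\Pr[\AAA(\z)=h\mid S=T\cup\{i\}]\le 2\,\Pr[\AAA(\z)=h\mid S=T]$. Conditioned on $S$, algorithm $\AAA$ outputs a uniform element of the affine space $V_S$, so $\Pr[\AAA(\z)=c_{r^*}\mid S]=1/|V_S|$ if $r^*\in V_S$ and $0$ otherwise. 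Adding the single constraint indexed by $i$ to the system for $T$ either leaves $V_T$ unchanged (linearly dependent constraint), halves it (independent but consistent constraint), or empties it (inconsistent). The empty and unchanged cases give ratio $0$ or $1$; in the halving case the ratio is $|V_T|/|V_{T\cup\{i\}}|=2$. (If $\Pr[\AAA(\z)=h\mid S=T]=0$ while the numerator is positive, that cannot happen by monotonicity; if the numerator is $0$ the bound is immediate.)

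Finally, for $o=\perp$ the same expansion applies, but now I bound the numerator's $i\in S$ term crudely by $\Pr[\AAA(\z)=\perp\mid i\in S]\le 1$ and use the fact that Step~1 of $\AAA$ forces $\Pr[\AAA(\z')=\perp\mid i\notin S]\ge 1/2$ on every input; this gives
\[
\frac{\Pr[\AAA(\z)=\perp]}{\Pr[\AAA(\z')=\perp]}\le \frac{p\cdot 1+(1-p)\,\Pr[\AAA(\z)=\perp\mid i\notin S]}{(1-p)\,\Pr[\AAA(\z')=\perp\mid i\notin S]}\le \frac{2p}{1-p}+1\le \eps+1\le e^{\eps}.
\]
Combining the hypothesis case and the $\perp$ case establishes $\eps$-differential privacy of $\AAA$. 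The only subtlety worth double-checking is that ``$\AAA(\z)$ outputs $\perp$'' aggregates both the Step~1 coin landing on $\perp$ and the event $V_S=\emptyset$, but since these are disjoint and the first has probability exactly $1/2$ independent of everything, the lower bound $\Pr[\AAA(\cdot)=\perp\mid i\notin S]\ge 1/2$ holds unconditionally, which is all the argument needs.
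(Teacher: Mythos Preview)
Your proof is correct and follows essentially the same approach as the paper: the same decomposition into the hypothesis case and the $\perp$ case, the same factor-$2$ claim established termwise over subsets $T\subseteq[n]\setminus\{i\}$ via the linear-algebraic observation that one new constraint can at most halve the solution space, and the same use of the Step~1 coin flip to lower-bound $\Pr[\AAA(\cdot)=\perp]$ by $1/2$. One small wording nit: in the zero case, the equality $\Pr[\AAA(\z)=h\mid i\notin S]=\Pr[\AAA(\z')=h\mid i\notin S]=0$ follows simply from $\z$ and $\z'$ agreeing off $i$; monotonicity is what then gives $\Pr[\AAA(\z)=h\mid i\in S]=0$, and hence $\Pr[\AAA(\z)=h]=0$.
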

As mentioned above, the key observation in the following proof is that including of any single point in the sample set $S$ increases the probability of a hypothesis being output by at most 2.
\begin{proof}
To show that $\AAA$ is $\eps$-differentially private, it suffices to prove that any output of $\AAA$, either a valid hypothesis or $\perp$, appears with roughly the same probability on neighboring databases $\mathrm{z}$ and $\mathrm{z'}$. In the remainder of the proof we fix $\epsilon$, and write $\AAA(\z)$ as shorthand for $\AAA(\z,\eps)$. We have to show that
\begin{eqnarray}
\label{eq:prob-h-bound}{\Pr[\AAA(\mathrm{z})=h]}\leq e^\eps \cdot {\Pr[\AAA(\mathrm{z'})=h]} & \text{for all neighbors $\mathrm{z},\mathrm{z'}\in D^n$} \text{ and all hypotheses $h\in \parity$;}\\
\label{eq:prob-perp-bound}{\Pr[\AAA(\mathrm{z})=\perp]}\leq e^\eps \cdot {\Pr[\AAA(\mathrm{z'})=\perp]}&\text{for all neighbors $\mathrm{z},\mathrm{z'}\in D^n$.}
\end{eqnarray}
We prove the correctness of Eqn.~\eqref{eq:prob-h-bound} first.  Let $\mathrm{z}$ and $\mathrm{z'}$ be neighboring databases, and let $i$ denote the entry on which they differ. Recall that $\AAA$ adds $i$ to $S$ with probability $p$. Since $\mathrm{z}$ and $\mathrm{z'}$ differ only in the $i^{th}$ entry, $\Pr[\AAA(\mathrm{z})=h \ | \  i\notin S]=\Pr[\AAA(\mathrm{z'})=h \ | \  i\notin S]$.

%To see that Eqn.~\eqref{eq:prob-h-bound} holds note first
Note that if $\Pr[\AAA(\mathrm{z'})=h \ | \  i\notin S] =0$, then also $\Pr[\AAA(\mathrm{z})=h \ | \  i\notin S] =0$, and hence $\Pr[\AAA(\mathrm{z})=h] =0$ because adding a constraint does not add new vectors to the space of solutions.
Otherwise, $\Pr[\AAA(\mathrm{z'})=h \ | \  i\notin S] > 0$. In this case, we rewrite the probability on $\mathrm{z}$ as follows:
$$\Pr[\AAA(\mathrm{z})=h]= p \cdot \Pr[\AAA(\mathrm{z})=h \ | \  i\in S] + (1-p) \cdot \Pr[\AAA(\mathrm{z})=h \ | \  i\notin S],$$
and apply the same transformation to the probability on $\mathrm{z'}$.
Then
\begin{eqnarray}
\frac{\Pr[\AAA(\mathrm{z})=h]}{\Pr[\AAA(\mathrm{z'})=h]}
&=& \frac{p \cdot \Pr[\AAA(\mathrm{z})=h \ | \  i\in S] + (1-p) \cdot \Pr[\AAA(\mathrm{z})=h \ | \  i\notin S]}{p\cdot \Pr[\AAA(\mathrm{z'})=h \ | \  i\in S] + (1-p) \cdot \Pr[\AAA(\mathrm{z'})=h \ | \  i\notin S]} \nonumber \\
&\leq& \frac{p \cdot \Pr[\AAA(\mathrm{z})=h \ | \  i\in S] + (1-p) \cdot \Pr[\AAA(\mathrm{z})=h \ | \  i\notin S]}{p\cdot 0 + (1-p) \cdot \Pr[\AAA(\mathrm{z'})=h \ | \  i\notin S]} \nonumber \\
&=&\frac p {1-p}\cdot\frac{\Pr[\AAA(\mathrm{z})=h \ | \  i\in S]}{\Pr[\AAA(\mathrm{z})=h \ | \  i\notin S]}+1 \label{eqn:valid-hyp}
% \leq\frac{2p}{1-p} +1\leq \eps+1\leq e^\eps.
\end{eqnarray}

We need the following claim:

\begin{claim} \label{claim:ratio-bound}
$\displaystyle\frac{\Pr[\AAA(\mathrm{z})=h \ | \  i\in S]}{\Pr[\AAA(\mathrm{z})=h \ | \  i\notin S]}\leq 2$, for all $\mathrm{z}\in D^n$ and all hypotheses $h\in \parity$.
\end{claim}

This claim is proved below. For now, we can plug it into Eqn.~\eqref{eqn:valid-hyp} to get

$$ \frac{\Pr[\AAA(\mathrm{z})=h]}{\Pr[\AAA(\mathrm{z'})=h]}
\leq\frac{2p}{1-p} +1\leq \eps+1\leq e^\eps\,.$$

The first inequality holds since $p=\eps/4$ and $\eps\leq 1/2$. This establishes Eqn.~\eqref{eq:prob-h-bound}.
%In the last line, the first inequality follows from Claim~\ref{claim:ratio-bound}, proved below, and the second inequality holds since $p=\eps/4$ and $\eps\leq 1/2$.
The proof of  Eqn.~\eqref{eq:prob-perp-bound} is similar:
\begin{eqnarray*}
\frac{\Pr[\AAA(\mathrm{z})=\perp]}{\Pr[\AAA(\mathrm{z'})=\perp]}
&=& \frac{p \cdot \Pr[\AAA(\mathrm{z})=\perp \ | \  i\in S] + (1-p) \cdot \Pr[\AAA(\mathrm{z})=\perp \ | \  i\notin S]}{p\cdot \Pr[\AAA(\mathrm{z'})=\perp \ | \  i\in S] + (1-p) \cdot \Pr[\AAA(\mathrm{z'})=\perp \ | \  i\notin S]} \\
&\leq& \frac{p\cdot 1  + (1-p) \cdot \Pr[\AAA(\mathrm{z})=\perp \ | \  i\notin S]}{p\cdot 0 + (1-p) \cdot \Pr[\AAA(\mathrm{z'})=\perp \ | \  i\notin S]}\\
&=&\frac{p}{(1-p) \cdot \Pr[\AAA(\mathrm{z'})=\perp \ | \  i\notin S]}+1
\leq\frac{2p}{1-p} +1\leq \eps+1\leq e^\eps.
\end{eqnarray*}
In the last line, the first inequality follows from the fact that on any input, $\AAA$ outputs $\perp$ with probability at least $1/2$. This completes the proof of the lemma.
\end{proof}

We now prove Claim~\ref{claim:ratio-bound}.
% The following claim was used in the proof of \lemref{parity-privacy}.
% \begin{claim} \label{claim:ratio-bound}
% $\displaystyle\frac{\Pr[\AAA(\mathrm{z})=h \ | \  i\in S]}{\Pr[\AAA(\mathrm{z})=h \ | \  i\notin S]}\leq 2$, for all $\mathrm{z}\in D^n$ and all hypotheses $h\in \parity$.
% \end{claim}
\begin{proof}[Proof of Claim~\ref{claim:ratio-bound}]
The left hand side
\begin{eqnarray*}
\frac{\Pr[\AAA(\mathrm{z})=h \ | \  i\in S]}{\Pr[\AAA(\mathrm{z})=h \ | \  i\notin S]}
&=&
\frac{\sum_{T \subseteq [n] \setminus \{i\}}\Pr[\AAA(\mathrm{z})=h \ | \ S =T\cup \{i\}]\cdot \Pr[\text{$\AAA$ selects $T$ from $[n]\setminus \{i\}$}]}{\sum_{T \subseteq [n] \setminus \{i\}}\Pr[\AAA(\mathrm{z})=h \ | \ S =T]\cdot \Pr[\text{$\AAA$ selects $T$ from $[n]\setminus \{i\}$}]}.
\end{eqnarray*}
To prove the claim, it is enough to show that
$\displaystyle\frac{ \Pr[\AAA(\mathrm{z})=h \ | \ S =T\cup \{i\}]}{\Pr[\AAA(\mathrm{z})=h \ | \ S =T]}\leq 2$
for each $T \subseteq [n] \setminus \{i\}$. Recall that $V_S$ is the space of solutions to the system of linear equations $\{\langle x_i, r \rangle = c_r(x_i)\,: \, i\in S\}$. Recall also that $\AAA$ picks $r^*\in V_S$ uniformly at random and outputs $h=c_{r^*}$. Therefore,
$$\Pr[\AAA(\mathrm{z})=c_{r^*} \ | \ S]= \left \{ \begin{array}{ll}
1/|V_S| & \mbox{ if } r^* \in V_S, \\
0 & \mbox{ otherwise. }
\end{array}
\right. $$
If $\Pr[\AAA(\mathrm{z})=h \ | \ S =T]=0$ then $\Pr[\AAA(\mathrm{z})=h \ | \ S =T\cup \{i\}]=0$ because a new constraint does not add new vectors to the space of solutions. If $\Pr[\AAA(\mathrm{z})=h \ | \ S =T\cup\{i\}]=0$, the required inequality holds.
If neither of the two probabilities is 0,
$$\frac{\Pr[\AAA(\mathrm{z})=h \ | \ S =T\cup\{i\}]}{\Pr[\AAA(\mathrm{z})=h \ | \ S =T]}
=\frac{1/|V_{T\cup\{i\}}|}{1/|V_T|}=\frac{|V_T|}{|V_{T\cup\{i\}}|}\leq 2.$$
The last inequality holds because in $\Z_2$ (the finite field with 2 elements where arithmetic is performed modulo 2),
%sofya: need to define everything; this notation (Z_p) is used in math for the ring of p-adic integers.
 adding a consistent linear constraint either reduces the space of solutions by a factor of 2 (if the constraint is linearly independent from $V_T$) or does not change the solutions space (if it is linearly dependent on the previous constraints). The constraint indexed by $i$ has to be consistent with constraints indexed by $T$, since both probabilities are not~$0$.
\end{proof}
\fi

\newcommand{\perterror}{{\widehat{\text{\it err}}_T}}
\newcommand{\trainerror}{{\text{\it err}_T}}

It remains to amplify the success probability of $\AAA$. To do so, we construct a private version of the standard
(non-private) algorithm for amplifying a learner's success probability. The standard amplification algorithm generates a set of
hypotheses by invoking $\AAA$ multiple times on independent examples, and then outputs a hypothesis
from the set with the least training error as evaluated on a fresh test set (see~\cite{Kearns94} for details). Our
private amplification algorithm differs from the standard algorithm only in the last step: it adds Laplacian noise to the training
error to obtain a private version of the error, and then uses the perturbed training error instead of
the true training error to select the best hypothesis from the set.~\footnote{Alternatively, we could use the generic learner from \thmref{PACvsPPAC} to select among the candidate hypotheses; the resulting algorithm has the same asymptotic behavior as the algorithm we discuss here. We chose the algorithm that we felt was simplest.}\anote{Added footnote to address referee's comment.}
%Let $\mathrm{z}=(z_1,\dots,z_n)$,  where  every $z_i=(x_i,c(x_i))$ with $x_i$ drawn i.i.d.\ from $\mathcal{X}$ and $c \in \parity$.
%In the following discussion, we will assume that $n \geq Cd \log(1/\beta)/(\eps\alpha)$, for a constant $C$ to be specified.
%We split $\mathrm{z}$ of size $n$ into two parts, $\bar{\mathrm{z}}$ and $\hat{\mathrm{z}}$. We will use the $\hat{\mathrm{z}}$ as the test set.
Recall that $\Lap(\lambda)$ denotes the Laplace probability distribution with mean $0$, standard deviation $\sqrt{2}\lambda$, and p.d.f.\ $f(x)=\frac{1}{2\lambda}e^{-|x|/\lambda}$.
%Let $k = \ln (3/\beta), n' = cd/(\eps\alpha)$ (for some constant $c$), $s = c'\max\{1/\alpha,\alpha\eps\}\ln(3k /\beta)$, (for some constant $c'$), and let $n=kn'+s$.  We split $\mathrm{z}$ into two parts. Let $\bar{\mathrm{z}}=(z_1,\dots,z_{kn'})$ We divide $\bar{\mathrm{z}}$ into $k$ equal parts each of size $n'$. Let $\bar{\mathrm{z}}_j=(z_{(j-1)n'+1},\dots,z_{jn'})$ for $j \in [k]$. Let $\hat{\mathrm{z}}=(z_{kn'+1},\dots,z_{kn'+s})$.

\alg{Amplified private PAC learner for \parity,  $\AAA^*(\mathrm{z},\eps,\alpha,\beta)$}{
\begin{enumerate}
\item $\beta' \gets \frac \beta 2$; $\alpha'\gets \frac \alpha 5$;
$k \gets \left\lceil\log_{\frac 3 4} \left(\frac 1 {\beta'}\right)\right\rceil$; $n' \gets \frac {cd}{\eps\alpha'}$; $s \gets \frac{c'k}{\alpha'\eps}\log\left(\frac{k} {\beta'}\right)$
(where $c,c'$  are  constants).
\item If $n\leq kn'+s$, stop and return ``insufficient samples''.
%\anote{have to handle $n>kn'+s$.}
\item Divide $\z=(z_1,\dots,z_n)$ into two parts, training set $\bar{\mathrm{z}}=(z_1,\dots,z_{kn'})$ and test set $\hat{\mathrm{z}}=(z_{kn'+1},\dots,z_{kn'+s}).$
\item Divide $\bar{\mathrm{z}}$ into $k$ equal parts each of size $n'$, let $\bar{\mathrm{z}}_j=(z_{(j-1)n'+1},\dots,z_{jn'})$ for $j \in [k].$
\item\label{step:compute-hypotheses-and-perror} For $j \gets 1 \text{ to } k$ \\
\hspace*{7mm}  $\displaystyle h_j \gets\AAA(\bar{\mathrm{z}}_j,\eps)$; \\
\hspace*{7mm}  set perturbed training error of $h_j$ to $\displaystyle \perterror (h_j) = \frac {\big|\{z_i \in \hat{\mathrm{z}} \, :\, h_j(x_i) \neq c(x_i)\}\big|}{s}   +
\Lap\left(\frac k {s\eps}\right)$.
\item Output $h^*=h_{j^*}$ where %$\perterror(h^*)
$j^*= \operatorname{argmin}_{j \in [k]} \{\perterror(h_j)\}$.
\end{enumerate}
}

\begin{theorem}\label{thm:parity-in-PPAC}
Algorithm $\AAA^*$ efficiently and privately PAC learns \parity\ (according to Definition~\ref{def:private-general}) with $O\left(\frac{d \log(1/\beta)}{\eps\alpha}%+\frac{\log^2(1/\beta)}{\eps\alpha}
\right)$ samples.
\end{theorem}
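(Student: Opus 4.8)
The plan is to verify separately that $\AAA^*$ satisfies the privacy and the utility conditions of Definition~\ref{def:private-general}, and then check the sample bound. For privacy, note that $\AAA^*$ accesses $\mathrm{z}$ only through two disjoint kinds of computations: it runs $\AAA(\bar{\mathrm{z}}_j,\eps)$ on each of the $k$ disjoint training blocks $\bar{\mathrm{z}}_j$, and it computes the $k$ perturbed test errors $\perterror(h_j)$ from the disjoint test set $\hat{\mathrm{z}}$. Changing one entry $z_i$ of $\mathrm{z}$ affects exactly one of these pieces. If $z_i$ lies in some block $\bar{\mathrm{z}}_j$, only the single call $\AAA(\bar{\mathrm{z}}_j,\eps)$ is affected, which is $\eps$-differentially private by Lemma~\ref{lem:parity-privacy}; the subsequent computation of $h^*$ is post-processing of $(h_1,\dots,h_k)$ and $\hat{\mathrm{z}}$ (Claim~\ref{claim:composition}). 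If $z_i$ lies in $\hat{\mathrm{z}}$, it only affects the test-error estimates: the vector-valued function $\mathrm{z}\mapsto\big(|\{z_i\in\hat{\mathrm{z}}:h_j(x_i)\ne c(x_i)\}|/s\big)_{j\in[k]}$ has each coordinate of global sensitivity $1/s$, so adding independent $\Lap(k/(s\eps))$ noise to each of the $k$ coordinates makes the whole vector $\eps$-differentially private by Theorem~\ref{thm:DMNS} together with Claim~\ref{claim:composition} over the $k$ coordinates; the final $\operatorname{argmin}$ is again post-processing. Since no single entry touches both parts, $\AAA^*$ is $\eps$-differentially private overall.

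For utility, I would argue as follows. With $n\ge kn'+s$, each block $\bar{\mathrm{z}}_j$ has $n'=cd/(\eps\alpha')$ examples, so by Lemma~\ref{lem:parity-utility} (with the constant $c$ chosen so that $n'\ge\frac{8}{\eps\alpha'}(d\ln 2+\ln 4)$), each call returns an $h_j$ with $\trueerror(h_j)\le\alpha'$ with probability at least $1/4$. Hence with $k=\lceil\log_{3/4}(1/\beta')\rceil$ independent calls, with probability at least $1-\beta'$ at least one $h_{j_0}$ satisfies $\trueerror(h_{j_0})\le\alpha'$. Condition on this. Next, on the test set of size $s$, for each fixed $h_j$ a Chernoff–Hoeffding bound gives $|\trainerror(h_j)-\trueerror(h_j)|\le\alpha'$ except with probability $2\exp(-2s\alpha'^2)$, and each Laplace perturbation satisfies $|\Lap(k/(s\eps))|\le\alpha'$ except with probability $\exp(-s\eps\alpha'/k)$; taking a union bound over all $k$ hypotheses and choosing $s=\frac{c'k}{\alpha'\eps}\log(k/\beta')$ (with $c'$ large enough) makes the total failure probability at most $\beta'$. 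On the good event, $|\perterror(h_j)-\trueerror(h_j)|\le 2\alpha'$ for all $j$. Therefore $\perterror(h^*)\le\perterror(h_{j_0})\le\trueerror(h_{j_0})+2\alpha'\le 3\alpha'$, and so $\trueerror(h^*)\le\perterror(h^*)+2\alpha'\le 5\alpha'=\alpha$. Summing the two failure probabilities, $\AAA^*$ outputs a hypothesis with error at most $\alpha$ except with probability at most $2\beta'=\beta$, as required.

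Finally, the sample count: $n=kn'+s = O\!\big(\frac{d}{\eps\alpha'}\log\frac1{\beta'}\big) + O\!\big(\frac{1}{\alpha'\eps}\log^2\frac1{\beta'}\cdot\frac{1}{\log(1/\beta')}\cdot\log\frac1{\beta'}\big)$; after substituting $\alpha'=\alpha/5$, $\beta'=\beta/2$ and absorbing the (at most polylogarithmic) lower-order term, this is $O\!\big(\frac{d\log(1/\beta)}{\eps\alpha}\big)$, matching the claimed bound. Efficiency is immediate: each call to $\AAA$ runs in polynomial time (Gaussian elimination plus sampling), there are $k=O(\log(1/\beta))$ of them, and evaluating the $k$ perturbed test errors and taking an argmin is polynomial as well.

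I expect the main obstacle to be the bookkeeping in the utility argument — in particular making sure the three approximation errors (learning error $\alpha'$ per block, test-set sampling error $\alpha'$, Laplace-noise error $\alpha'$) and the two failure budgets ($\beta'$ from "no good hypothesis produced" and $\beta'$ from "test-set estimates off") compose correctly with the constants $c,c'$, and confirming that the $s$ term does not dominate the sample complexity. The privacy part is comparatively routine given Lemma~\ref{lem:parity-privacy}, Theorem~\ref{thm:DMNS} and Claim~\ref{claim:composition}; the only subtlety there is to notice that the training-block computations and the test-set computation touch disjoint coordinates of $\mathrm{z}$, so they do not "stack" in the privacy accounting.
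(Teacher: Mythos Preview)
Your proof follows essentially the same route as the paper's, which splits the argument into separate privacy and utility lemmas. The privacy part is correct and matches the paper's reasoning (disjoint training blocks each handled by \lemref{parity-privacy}, the $k$ noisy test errors on the disjoint test set handled by Theorem~\ref{thm:DMNS} plus composition, and the final argmin as post-processing).

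In the utility argument there is one technical slip. You invoke the additive Chernoff--Hoeffding bound to get $|\trainerror(h_j)-\trueerror(h_j)|\le\alpha'$ with failure probability $2\exp(-2s\alpha'^2)$, and then claim that the algorithm's value $s=\frac{c'k}{\alpha'\eps}\log(k/\beta')$ (with $c'$ a universal constant) makes this small enough. But $s\alpha'^2=\frac{c'k\alpha'}{\eps}\log(k/\beta')$, and for this exponent to be at least $\log(k/\beta')$ you would need $k\alpha'/\eps$ bounded below by a constant, which is not guaranteed when $\alpha$ is much smaller than $\eps/\log(1/\beta)$. With additive Hoeffding you would actually need $s=\Omega\big(\alpha'^{-2}\log(k/\beta')\big)$, and that extra $1/\alpha$ factor spoils the claimed $O\big(d\log(1/\beta)/(\eps\alpha)\big)$ sample bound whenever $\alpha<\eps/d$.

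The paper avoids this by using the \emph{multiplicative} Chernoff bound and arguing only about ``good'' hypotheses (those with $\trueerror\le\alpha'$, for which $\Pr[\trainerror\ge 2\alpha']\le\exp(-\Omega(s\alpha'))$) and ``bad'' hypotheses (those with $\trueerror\ge 5\alpha'$, for which $\Pr[\trainerror\le 4\alpha']\le\exp(-\Omega(s\alpha'))$); it never needs a uniform deviation bound for hypotheses with intermediate error. This yields exponent $s\alpha'$ rather than $s\alpha'^2$, so the algorithm's $s=\Theta\big(k\log(k/\beta')/(\alpha'\eps)\big)$ indeed suffices. With that separation in place, the conclusion is that the selected $h^*$ is not bad (i.e., $\trueerror(h^*)<5\alpha'=\alpha$), which is exactly what you need. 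The overall structure of your argument is right; swapping in the multiplicative bound and the good/bad dichotomy is the only fix required.
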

The theorem follows from Lemmas~\ref{lem:parity-amplified-privacy} and~\ref{lem:parity-amplified-utility} that, respectively, prove privacy and utility of $\AAA^*$.
\begin{lemma}[Privacy of $\AAA^*$]\label{lem:parity-amplified-privacy}
Algorithm $\AAA^*$ is $\eps$-differentially private.
\end{lemma}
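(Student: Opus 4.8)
The plan is to read $\AAA^*$ as a \emph{parallel}-composition of $\eps$-differentially private pieces: it partitions its input into the disjoint blocks $\bar{\mathrm{z}}_1,\dots,\bar{\mathrm{z}}_k$ (the $k$ training subsets) and $\hat{\mathrm{z}}$ (the test set), and every individual's record influences the final hypothesis through exactly one $\eps$-private sub-computation. So privacy of $\AAA^*$ should follow from privacy of $\AAA$ (Lemma~\ref{lem:parity-privacy}), the Laplace mechanism (\thmref{DMNS}), and the composition/post-processing properties of \claimref{composition}. I would fix neighboring databases $\mathrm{z},\mathrm{z}'$ that differ only in the $i$-th entry --- note they have the same length $n$, so they take the same branch in Step~2, and if $n\le kn'+s$ both output ``insufficient samples'' and there is nothing to prove --- and then split into cases according to where $i$ lies.

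\textbf{Case $i>kn'+s$.} The $i$-th entry is never read, so $\AAA^*(\mathrm{z})$ and $\AAA^*(\mathrm{z}')$ are identically distributed. \textbf{Case $i\le kn'$.} Then $z_i$ lies in exactly one training block $\bar{\mathrm{z}}_j$. The blocks are disjoint and $\AAA$ is invoked on them with independent coins, so the hypotheses $h_1,\dots,h_k$ are mutually independent; the marginals $h_{j'}$ with $j'\ne j$ have the same law under $\mathrm{z}$ and $\mathrm{z}'$, while $h_j=\AAA(\bar{\mathrm{z}}_j,\eps)$ versus $\AAA(\bar{\mathrm{z}}'_j,\eps)$ differ by at most a multiplicative $e^{\eps}$ on every event by Lemma~\ref{lem:parity-privacy}. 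Conditioning on the (identically distributed) other $h_{j'}$, this shows the joint tuple $(h_1,\dots,h_k)$ is $e^{\eps}$-indistinguishable between $\mathrm{z}$ and $\mathrm{z}'$; the remainder of $\AAA^*$ --- computing the perturbed errors $\perterror(h_j)$ from this tuple and the unchanged test set $\hat{\mathrm{z}}$, then returning the argmin --- is a randomized post-processing of the tuple that does not touch entry $i$, so \claimref{composition} yields $\eps$-differential privacy. \textbf{Case $kn'<i\le kn'+s$.} Now $\bar{\mathrm{z}}=\bar{\mathrm{z}}'$, so $(h_1,\dots,h_k)$ has the same distribution on both sides; condition on its value. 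Given the hypotheses, for each $j$ the quantity $\perterror(h_j)$ is the fraction of test examples misclassified by $h_j$ --- a function of $\hat{\mathrm{z}}$ with global sensitivity $1/s$ with respect to a single test entry --- perturbed by independent $\Lap(k/(s\eps))$ noise; since $k/(s\eps)=(1/s)/(\eps/k)$, \thmref{DMNS} makes this release $(\eps/k)$-differentially private in $\hat{\mathrm{z}}$. Releasing all $k$ of the $\perterror(h_j)$ is then $\eps$-differentially private by the composition part of \claimref{composition}, and the final argmin is post-processing. Averaging this inequality over the common distribution of $(h_1,\dots,h_k)$ preserves it. Combining the three cases shows $\AAA^*(\mathrm{z})$ and $\AAA^*(\mathrm{z}')$ are $e^{\eps}$-close, i.e.\ $\AAA^*$ is $\eps$-differentially private.

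The main obstacle is that this really is a parallel-composition argument --- distinct individuals feed \emph{disjoint} $\eps$-private computations, so the overall cost stays $\eps$ rather than blowing up to $k\eps$ --- whereas \claimref{composition} as stated gives only \emph{sequential} composition and post-processing. Consequently the step ``disjoint training blocks with independent coins $\Rightarrow$ the joint tuple of hypotheses is $e^{\eps}$-indistinguishable'' cannot be invoked as a black box and must be argued explicitly by conditioning on the coordinates that are identical between $\mathrm{z}$ and $\mathrm{z}'$. The only other point worth double-checking is the noise calibration: sensitivity $1/s$ at target privacy level $\eps/k$ demands noise $\Lap\big((1/s)/(\eps/k)\big)=\Lap(k/(s\eps))$, which is exactly the scale $\AAA^*$ uses, so the parameters are consistent and no slack is lost.
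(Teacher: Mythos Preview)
Your proof is correct and follows essentially the same approach as the paper: both rely on the disjointness of the training blocks $\bar{\mathrm{z}}_1,\dots,\bar{\mathrm{z}}_k$ and the test set $\hat{\mathrm{z}}$, invoke \lemref{parity-privacy} for the $h_j$'s, apply \thmref{DMNS} with sensitivity $1/s$ and noise scale $k/(s\eps)$ for each $\perterror(h_j)$, and use \claimref{composition} for the $k$ perturbed errors and for the final post-processing. The only organizational difference is that the paper argues that releasing the entire tuple $(h_1,\dots,h_k,\perterror(h_1),\dots,\perterror(h_k))$ is $\eps$-private and then invokes post-processing once, whereas you do an explicit case split on the location of the changed index; your version has the virtue of spelling out the parallel-composition step that the paper states without proof.
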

\begin{proof}
We prove that even if $\AAA^*$ released all hypotheses $h_j$, computed in Step~\ref{step:compute-hypotheses-and-perror}, together
with the corresponding perturbed error estimates $\perterror(h_j)$, it would still be $\eps$-differentially private. Since the output of $\AAA^*$
can be computed solely from this information, Claim~\ref{claim:composition} implies that $\AAA^*$ is $\eps$-differentially private.

By Lemma~\ref{lem:parity-privacy}, algorithm $\AAA$ is $\epsilon$-differentially private. Since $\AAA$ is invoked on disjoint parts of $\mathrm{z}$ to
compute hypotheses $h_j$,
releasing all these hypotheses would also be $\eps$-differentially private.

Define the training error of hypothesis $h_j$ on $\hat{\mathrm{z}}$ as $\trainerror(h_j) = |\{z_i \in \hat{\mathrm{z}} \, :\, h_j(x_i) \neq c(x_i)\}|/s$.
The global sensitivity of the ${\trainerror}$ function is $1/s$ because
$|\trainerror(\mathrm{z})-\trainerror(\mathrm{z'})| \leq 1/s$ for every
pair of neighboring databases $\mathrm{z},\mathrm{z'}$.  Therefore, by Theorem~\ref{thm:DMNS}, releasing $\perterror(h_j)$
for one $j$, would be $\eps/k$-differentially private, and by Claim~\ref{claim:composition}, releasing all $k$ of them
would be $\eps$-differentially private. Since hypotheses $h_j$ and their perturbed errors $\perterror(h_j)$ are
computed on disjoint parts of the database $\z$, releasing all that information would still be $\eps$-differentially
private.
\end{proof}
\begin{lemma}[Utility of $\AAA^*$]\label{lem:parity-amplified-utility}
$\AAA^*(\cdot,\eps,\cdot,\cdot)$  PAC learns $\parity$ with sample complexity $n = O(\frac {d \log(1/\beta)}{\eps\alpha})$.
\end{lemma}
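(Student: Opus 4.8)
The plan is to analyze the three sources of error in $\AAA^*$ and show that, for an appropriate choice of the constants $c$ and $c'$, the total probability of failure is at most $\beta$, given $n = O(kn' + s) = O\bigl(\frac{d\log(1/\beta)}{\eps\alpha}\bigr)$ samples. First I would argue that, by Lemma~\ref{lem:parity-utility} (utility of $\AAA$) applied with error parameter $\alpha'=\alpha/5$, each invocation $h_j \gets \AAA(\bar{\mathrm z}_j,\eps)$ on a fresh block of $n' = cd/(\eps\alpha')$ i.i.d.\ examples outputs a hypothesis with $\trueerror(h_j)\le\alpha'$ with probability at least $1/4$ (for a suitable constant $c$). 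Hence the probability that \emph{none} of the $k$ independent blocks yields a good hypothesis is at most $(3/4)^k \le \beta'$ by the choice $k=\lceil\log_{3/4}(1/\beta')\rceil$. Condition on the event $G$ that at least one $h_j$ has true error at most $\alpha'$.

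Next I would handle the test-set estimation step. For each fixed $h_j$, the quantity $\trainerror(h_j)$ is an average of $s$ i.i.d.\ Bernoulli indicators with mean $\trueerror(h_j)$ (the test set $\hat{\mathrm z}$ is independent of $\bar{\mathrm z}$ and hence of the $h_j$'s), so by a Chernoff--Hoeffding bound $|\trainerror(h_j)-\trueerror(h_j)|\le\alpha'$ except with probability $2\exp(-2s\alpha'^2)$. Separately, the Laplace noise satisfies $|\Lap(k/(s\eps))|\le\alpha'$ except with probability $\exp(-s\eps\alpha'/k)$, so $|\perterror(h_j)-\trueerror(h_j)|\le 2\alpha'$ except with probability $2\exp(-2s\alpha'^2)+\exp(-s\eps\alpha'/k)$. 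Taking a union bound over all $k$ hypotheses and choosing $s=\frac{c'k}{\alpha'\eps}\log(k/\beta')$ with $c'$ large enough makes the total failure probability of this step at most $\beta'$. Condition also on this event $G'$.

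On $G\cap G'$, let $h_{j_0}$ be a hypothesis with $\trueerror(h_{j_0})\le\alpha'$; then $\perterror(h_{j_0})\le 3\alpha'$, so the selected $h^*=h_{j^*}$ satisfies $\perterror(h^*)\le 3\alpha'$, and therefore $\trueerror(h^*)\le\perterror(h^*)+2\alpha'\le 5\alpha'=\alpha$. Since $G\cap G'$ fails with probability at most $2\beta'=\beta$, the utility condition of Definition~\ref{def:private-general} holds. Finally I would note the sample count: $n = kn' + s = O\bigl(\frac{d\log(1/\beta')}{\eps\alpha'}\bigr) + O\bigl(\frac{\log(1/\beta')}{\alpha'\eps}\log\bigl(\frac{\log(1/\beta')}{\beta'}\bigr)\log(1/\beta')\bigr)$; the first term dominates up to the stated asymptotics (absorbing the lower-order logarithmic factors, or more carefully the $\log\log$ terms), giving $n=O\bigl(\frac{d\log(1/\beta)}{\eps\alpha}\bigr)$, and since each step ($\AAA$, counting, sampling Laplace noise, taking an argmin) runs in time polynomial in $d,1/\eps,1/\alpha,\log(1/\beta)$, the learner is efficient.

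The main obstacle I anticipate is bookkeeping rather than conceptual: getting the constants $c,c'$ and the splitting of the failure budget ($\beta'$ for the ``no good hypothesis produced'' event, $\beta'$ for the ``test estimate off'' event) to line up cleanly, and confirming that the test-set size $s$ does not asymptotically dominate $kn'$ — i.e.\ that the $\log\log(1/\beta)$-type factor coming from the union bound over $k$ hypotheses is absorbed into the claimed $O\bigl(\frac{d\log(1/\beta)}{\eps\alpha}\bigr)$ bound (this is fine since $d$ multiplies the dominant term, but it must be stated carefully). One should also double-check the independence structure: the test set $\hat{\mathrm z}$ must be disjoint from every training block $\bar{\mathrm z}_j$, which it is by construction in Steps~3--4, so the Chernoff bound on $\trainerror(h_j)$ legitimately treats $h_j$ as fixed.
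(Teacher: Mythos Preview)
Your overall plan mirrors the paper's proof: argue that (i) some invocation of $\AAA$ produces a hypothesis with true error at most $\alpha'$, and (ii) the test set together with the Laplace noise allows such a hypothesis to be selected. There is, however, one step that does not go through as written.

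You control $|\trainerror(h_j)-\trueerror(h_j)|\le\alpha'$ via the \emph{additive} Hoeffding bound, yielding failure probability $2\exp(-2s\alpha'^2)$. For this to be at most $\beta'/(2k)$ you need $s=\Omega\bigl(\tfrac{1}{\alpha'^2}\log(k/\beta')\bigr)$. Your stated $s=\tfrac{c'k}{\alpha'\eps}\log(k/\beta')$ meets this only when $k\alpha'/\eps\gtrsim 1/\alpha'$, i.e.\ $k\alpha'^2\gtrsim\eps$, which is \emph{not} guaranteed (take $\alpha$ tiny, $\eps$ constant, $\beta$ constant). No fixed constant $c'$ rescues the bound across all parameter regimes. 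If you enlarge $s$ to $\Theta(\alpha^{-2}\log(k/\beta'))$ to cover the Hoeffding term, that contribution is not in general dominated by $kn'=O(d\log(1/\beta)/(\eps\alpha))$, and the claimed sample bound breaks.

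The paper sidesteps this by using the \emph{multiplicative} Chernoff bound together with a threshold argument: call $h_j$ \emph{good} if $\trueerror(h_j)\le\alpha'$ and \emph{bad} if $\trueerror(h_j)\ge 5\alpha'$. One then shows $\Pr[\trainerror(h_j)\ge 2\alpha'\mid h_j\text{ good}]\le\Pr[\text{Binomial}(s,\alpha')\ge 2\alpha's]\le\exp(-\Omega(s\alpha'))$, and similarly $\Pr[\trainerror(h_j)\le 4\alpha'\mid h_j\text{ bad}]\le\exp(-\Omega(s\alpha'))$. This needs only $s=\Omega\bigl(\tfrac{1}{\alpha'}\log(k/\beta')\bigr)$, which \emph{is} dominated by the Laplace requirement $s=\Omega\bigl(\tfrac{k}{\eps\alpha'}\log(k/\beta')\bigr)$. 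With the Laplace tail bound giving $|\perterror(h_j)-\trainerror(h_j)|<\alpha'$, every good hypothesis has $\perterror<3\alpha'$ and every bad one has $\perterror>3\alpha'$, so the argmin is not bad. Your final deduction can be adapted to this threshold form with no further changes.
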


\ifnum\full=1
\begin{proof}
Let $\XXX$ be a distribution over $X=\{0,1\}^d$. Recall that $\z=(z_1,\dots,z_n)$, where for all $i \in [n]$, the entry $z_i=(x_i,c(x_i))$ with $x_i$ drawn i.i.d.\ from $\XXX$ and $c \in \parity$.
Assume that $\beta<1/4$, and $n \geq C\frac {d \log(1/\beta)}{\eps\alpha}$ for a constant $C$ to be determined.
We wish to prove that
 $\Pr[\trueerror(h^*) \leq \alpha] \geq 1-\beta$, where $h^*$ is the hypothesis output by $\AAA^*$.

Consider the set of candidate hypotheses $\{h_1,...,h_k\}$ output by the invocations of $\AAA$ inside of $\AAA^*$.  We call a hypothesis $h$ {\em good} if $\trueerror({h}) \leq \frac \alpha 5 = \alpha'$. We call a hypothesis ${h}$ {\em bad} if $\trueerror({h}) \geq \alpha=5\alpha'$. Note that good and bad refer to a hypothesis' true error rate on the underlying distribution.

We will show:
\begin{enumerate}
\item With probability at least $1-\beta'$, one of the invocations of $\AAA$ outputs a {\em good} hypothesis. %The probability here is taken over the coins of the nvocations of $\AAA$, and over the choice of the first $kn'$ data points (that is, $\bar \z$).

\item Conditioned on any particular outcome  $\{h_1,...,h_k\}$ of the invocations of $\AAA$, with probability at least $1-\beta'$,  both:
  \begin{enumerate}
  \item Every \emph{good} hypothesis  $h_j$ in $\{h_1,...,h_k\}$ has training error $\trainerror(h_j)\leq
    2\alpha'$.
  \item Every  {\em bad} hypothesis $h_j$
    in $\{h_1,...,h_k\}$ has training error $\trainerror(h_j)\geq
    4\alpha'$.
  \end{enumerate}
\item Conditioned on any particular hypotheses  $\{h_1,...,h_k\}$ and training errors $\trainerror(h_1),...,\trainerror(h_k)$,  with probability at least $1-\beta'$, for all $j$ simultaneously, $|\perterror(h_j)-\trainerror(h_j)|< \alpha'$.
\end{enumerate}

Suppose the events described in the three claims above all occur. Then some good hypothesis has perturbed training error
less than $3\alpha'$, yet all bad hypotheses have perturbed training error greater than $3\alpha'$. Thus, the hypothesis $h_{j^*}$ with minimal perturbed error $\perterror(h_{j^*})$ is {\em not bad}, that is, has true error at most $\alpha$. By the claims above, the probability that all three events occur is at least $1-3\beta' =1-\beta$, and so the lemma holds. We now prove the claims.

First, by the utility guarantee of $\AAA$, each invocation of $\AAA$ inside $\AAA^*$ outputs a {\em good} hypothesis with probability at least $\frac 1 4$ as long as the constant $c> 8(\ln 2+\ln 4)$ (since in that case $n'$, the size of each $\bar{\mathrm{z}}_j$, is large enough to apply \lemref{parity-utility}). The  $k$ invocations of the algorithm $\AAA$ are
 on independent samples, so the probability that none of $h_1,\dots,h_k$ is good  is
 at most $\left(\frac 3 4\right)^k$. Setting $k\geq \log_{\frac 3 4} \frac 1{\beta'}$  ensures that with probability at least $1-\beta'$, at least one of $h_1,\dots,h_k$ has error at most $\alpha'$.

Second, fix a particular sequence of candidate hypotheses $h_1,...,h_k$. For each $j$, the training error $\trainerror(h_j)$ is the average of $s$ Bernouilli trials, each with success probability $\trueerror(h_j)$. (Crucially, the training set $\hat z$ is independent of the data $\bar z$ used to find the candidate hypotheses). To bound the training error, we apply the multiplicative Chernoff bound (Theorem~\ref{thm:chern}) with $n=s$ and $p=\trueerror(h_j)$. Here, $p\leq \alpha'$ if $h_j$ is \emph{good}, and $p\geq 5\alpha'$ if $h_j$ is \emph{bad}.

By the multiplicative Chernoff bound (Theorem~\ref{thm:chern}) if $s \geq \frac {c_1}{\alpha'} \ln \frac k{\beta'}$ (for appropriate constant $c_1$), then
\begin{align*}
  \Pr\left[\trainerror(h_j) \geq 2\alpha' \, \big| \, h_j \text{ is
      good}\right] & \leq \Pr[\text{Binomial}(s,\alpha') \geq 2\alpha's]\leq
  \frac {\beta'} k\, , \ \text{and}
  \\
  \Pr\left[\trainerror(h_j) \leq 4\alpha' \, \big| \, h_j \text{ is
      bad}\right]  & \leq \Pr[\text{Binomial}(s,5\alpha') \leq 4\alpha's] \leq
  \frac {\beta'} k.
\end{align*}
By a union bound, all the training errors are (simultaneously) approximately correct, with probability at least $1-k\cdot\frac{\beta'}{k} = 1-\beta'$.

Finally, we prove the third claim. Consider a particular candidate hypothesis $h_j$. If $s \geq \frac{c_2k}{\alpha'\eps} \ln \frac k{\beta'}$ (for appropriate constant $c_2$),
 then (by using the c.d.f.\footnote{The cumulative distribution function of the Laplacian distribution
 $\Lap(\lambda)$ is $F(x) = \frac 1 2 \exp\left(\frac x \lambda\right)$ if $x < 0$ and $1-\frac 1 2 \exp\left(-\frac x \lambda\right)$ if $x \geq 0$.}
 of the Laplacian distribution)
 $$\Pr\left[\left|\trainerror({h_j}) -\perterror({h_j})\right| < \alpha'\right] = \Pr\left[\text{Lap}\left(\frac k {s\eps}\right)\geq \alpha'\right]\leq
 \frac{\beta'}k.$$
By a union bound, all $k$ perturbed estimates are within $\alpha'$ of their correct value with probability at least $1-k\cdot\frac{\beta'}{k}=1-\beta'$. This probability is taken over the choice of Laplacian noise, and so the bound holds independently of the particular hypotheses or their training error estimates.
\end{proof}

\begin{remark} In the non-private case $O((d+\ln (1/\beta))/\alpha)$ labels are sufficient for
learning $\parity$. Theorem~\ref{thm:parity-in-PPAC} shows that the upper bounds on the sample size of private
and non-private learners differ only by a factor of $O(\ln(1/\beta)/\eps)$.\end{remark}

\section{Local Protocols and SQ learning}
\label{sec:sqlocal}
In this section, we relate private learning in the local model to  the SQ model of Kearns
~\cite{Kearns98}. We first define the two models precisely. We then prove their equivalence (\secref{equivalence}), and discuss the implications for learning (\secref{locallearn}). Finally, we define the concept class $\mparity$ and prove that it separates interactive from noninteractive local learning (\secref{mparity}).

\paragraph{Local Model.} We start by describing private computation in the local model. Informally, each individual holds her private information locally, and hands it to the learner after randomizing it. This is modeled %by considering a database $\mathrm{z}=(z_1,\dots,z_n) \in D^n$ containing $n$ entries from some domain $D$, and
by letting the local algorithm access each entry $z_i$ in the input database $\mathrm{z}=(z_1,\dots,z_n) \in D^n$ only via {\em local randomizers}.

\begin{definition}[Local Randomizer]
An {\em $\eps$-local randomizer} $R: D\rightarrow W$ is an $\eps$-differentially private algorithm
%sofya: added
that takes a database of size $n=1$.
That is, $\Pr[R(u)=w] \leq e^\eps \Pr[R(u')=w]$ for all $u,u' \in D$ and all $w \in W$. The probability is taken over the coins of $R$ (but {\em not} over the choice of the input).
\end{definition}
Note that since a local randomizer works on a data set of size $1$, $u$ and $u'$ are neighbors for all  $u,u' \in D$. Thus, this definition is consistent with our previous definition of differential privacy.

\begin{definition}[LR Oracle] Let $\mathrm{z}=(z_1,\dots,z_n) \in D^n$ be a database.
 An {\em LR oracle} $LR_{\mathrm{z}}(\cdot,\cdot)$ gets an index $i\in[n]$ and an $\eps$-local randomizer $R$, and outputs a random value $w\in W$ chosen according to the distribution $R(z_i)$. The distribution $R(z_i)$ depends only on the entry $z_{i}$ in $\mathrm{z}$.
\end{definition}

%sofya: added $\eps$- and changed some $i$'s to $j$'s to disambiguate indices and runs
\begin{definition}[Local algorithm]
An algorithm is {\em $\eps$-local} if it accesses the database $\mathrm{z}$ via the oracle $LR_{\mathrm{z}}$ with the following restriction:
% sofya: not clear why we needed the following line
%for all runs of the local algorithm and
for all $i\in[n]$, if $LR_{\mathrm{z}}(i,R_1),\ldots,LR_{\mathrm{z}}(i,R_k)$ are the algorithm's invocations of $LR_{\mathrm{z}}$ on index $i$, where each $R_j$  is an $\eps_j$-local randomizer, then  $\eps_1+\cdots+\eps_k \leq \eps$.

Local algorithms that prepare all their queries to $LR_{\mathrm{z}}$ before receiving any answers are called
\emph{noninteractive}; otherwise, they are  \emph{interactive}.
\end{definition}
%Since $\eps_1+\cdots+\eps_k \leq \eps$,
By Claim~\ref{claim:composition}, $\eps$-local algorithms are $\eps$-differentially private.

\paragraph{SQ Model.} In the statistical query (SQ) model, algorithms access statistical properties of a distribution rather than individual examples.
\begin{definition}[SQ Oracle]
Let $\DDD$ be a distribution over a domain $D$. An \emph{SQ oracle} $SQ_{\DDD}$ takes as input a function $g: D \rightarrow\oo$ and
%sofya: it is clear from the condition on \tau that it is a real number
%a real number
a tolerance parameter
$\tau \in (0,1)$; it outputs $v$ such that: $$|v-\E_{u\sim \DDD}[g(u)]|\leq \tau.$$
\end{definition}

\ifnum\full=1
The query function $g$ does not have to be Boolean. Bshouty and Feldman \cite{BshoutyFeldman-2001} showed that given access to an SQ oracle which accepts only boolean query functions, one can simulate an oracle that accepts real-valued functions $g:D \rightarrow [-b,b]$, and outputs $\E_{u \sim \DDD}[g(u)]\pm \tau$ using $O(\log (b/\tau))$ nonadaptive queries to the SQ oracle and similar processing time.
\fi

\begin{definition}[SQ algorithm]
An {\em SQ algorithm} accesses the distribution $\DDD$ via the SQ oracle $SQ_\DDD$.
SQ algorithms that prepare all their queries to $SQ_\DDD$ before receiving any answers are called \emph{nonadaptive}; otherwise, they are called {\em adaptive}.
\end{definition}
\ifnum\full=1
Note that we do not restrict $g()$ to be efficiently computable. We will distinguish later those algorithms that only make queries to efficiently computable functions $g()$.
\fi

\subsection{Equivalence of Local and SQ Models}
\label{sec:equivalence}

Both the SQ and local models restrict algorithms to access inputs in a particular manner. There is a significant difference though: an SQ oracle sees a distribution $\DDD$, whereas a local algorithm takes as input a fixed (arbitrary) database $\mathrm{z}$. Nevertheless, we show that if the entries of $\mathrm{z}$ are chosen i.i.d.\ according to $\DDD$, then the models are equivalent.  Specifically, an algorithm in one model can \emph{simulate} an algorithm in the other model.  Moreover,
%sofya: replaced samples sizes with query complexity, since that's what our results relate
%the ``sample sizes'' are
%shiva: now not correct as we talking about expected query complexity
%ads 8/3/08: fixed with ``expected'' query complexity
the expected query complexity is preserved up to polynomial factors. \ifnum\full = 0 In the full version of this paper~\cite{KLNRS08}, we also investigate the efficiency of these simulations. \fi
%, where "sample size" means the number of queries in the SQ model, and the number of database entries in the local model.
\ifnum\full=1 %ads 8/2/08
We first present the simulation of SQ algorithms by local algorithms (Section~\ref{sec:local-sq}). The simulation in the other direction is more delicate and is presented in Section~\ref{sec:sq-local}.
\fi

%shiva-line not needed any more
%We now summarize the main consequences of these simulations. Roughly, a local learner (resp.\ SQ learner) is an algorithm that is a valid PAC learner but accesses its input (resp.\ input distribution) according to the limitations of the local (resp.\ SQ) model.

\ifnum\full=1
\subsubsection{Simulation of SQ Algorithms by Local Algorithms} \label{sec:local-sq}
\else
\subsubsection{Local Simulation of SQ Algorithms} \label{sec:local-sq}
\fi
%Sofya: changed a) to remove "learning" from the Blum etal result; b) to remove refs to "centralized" model
Blum \etal\cite{BDMN05} used the fact that sum queries can be answered privately with little noise to show that any efficient SQ algorithm can be simulated privately and efficiently.  We show that
%this result also holds in the local model,
it can be simulated efficiently even by a local algorithm, albeit with slightly worse parameters.

Let $g:D\rightarrow$
\ifnum\full=0
$\{+1,-1\}$
\else
$[-b,b]$
\fi
be the SQ query we want to simulate. By Theorem~\ref{thm:DMNS}, since the global sensitivity of $g$ is $2b$,
the algorithm $R_g(u)=g(u)+\eta$ where $\eta \sim \Lap(2b/\eps)$ is an $\eps$-local randomizer. We construct a local algorithm $\AAA_g$ that, given $n$ and $\eps$, and access to a database $\z$ via oracle $LR_{\mathrm{z}}$,
invokes $LR_\z$ for every $i\in[n]$  with the
randomizer $R_g$  and outputs the average of the responses:
\alg{A local algorithm $\AAA_g(n, \eps, LR_\z)$ that simulates an SQ query $g:D\rightarrow[-b,b]$}
{\begin{enumerate}
\item Output $\frac 1 n \sum_{i=1}^n LR_\z(i, R_g)$ where $R_g (u) = g(u) + \eta$ and $\eta \sim Lap\left(\frac{2b}{\eps}\right)$.
\end{enumerate}}

Note that $\AAA_g$ outputs $\left(\frac1 n\sum_{i=1}^n g(z_i)\right) + \left(\frac1 n\sum_{i=1}^n\eta_i\right)$, where the $\eta_i$ are i.i.d. from $\Lap\left(\frac{2b}\eps\right)$.
This algorithm is $\eps$-local (since it applies a single $\eps$-local randomized to each entry of $\z$), and therefore $\eps$-differentially private. The following lemma
shows that when the input database $\z$ is large enough, $\AAA_g$ simulates the desired SQ query $g$ with small error probability.

\begin{lemma}\label{lem:sim1}
 \ifnum\full=0
 If database $\z$ has $n=\Omega(\log(1/\beta)\eps^{-2}\tau^{-2})$
\else
If, for sufficiently large constant $c$, database $\z$ has $n\geq c \cdot \frac{\log(1/\beta)b^2}{\eps^{2}\tau^{2}}$
\fi
entries sampled i.i.d.\ from a distribution $\DDD$ on $D$ then algorithm
$\AAA_g$ approximates $\E_{u\sim\DDD}[g(u)]$ within additive error $\pm \tau$ with probability at least $1-\beta$.
\end{lemma}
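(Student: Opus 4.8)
The plan is to show that $\AAA_g$'s output, which equals $\frac1n\sum_{i=1}^n g(z_i) + \frac1n\sum_{i=1}^n\eta_i$, is within $\pm\tau$ of $\E_{u\sim\DDD}[g(u)]$ with probability at least $1-\beta$, by controlling each of the two sums separately and combining via a union bound. First I would split the target error $\tau$ into two halves (say, $\tau/2$ for the sampling term and $\tau/2$ for the noise term), and aim to show each term exceeds its allotment with probability at most $\beta/2$.

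For the sampling term $\frac1n\sum_i g(z_i)$: since the $z_i$ are i.i.d.\ from $\DDD$ and $g$ takes values in $[-b,b]$, this is an average of $n$ independent bounded random variables with mean exactly $\E_{u\sim\DDD}[g(u)]$. By the (additive) Hoeffding bound, $\Pr\big[|\frac1n\sum_i g(z_i) - \E_{u\sim\DDD}[g(u)]| \geq \tau/2\big] \leq 2\exp\big(-\frac{n\tau^2}{8b^2}\big)$, which is at most $\beta/2$ once $n \geq \frac{8b^2}{\tau^2}\ln\frac 4\beta$.

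For the noise term $\frac1n\sum_i \eta_i$ with each $\eta_i\sim\Lap(2b/\eps)$ independent: I would use a concentration bound for sums of i.i.d.\ Laplace random variables. One clean route is to note $\E[\eta_i]=0$ and $\mathrm{Var}(\eta_i)=2(2b/\eps)^2 = 8b^2/\eps^2$, so $\mathrm{Var}\big(\frac1n\sum_i\eta_i\big) = \frac{8b^2}{n\eps^2}$; a Chebyshev bound then gives $\Pr[|\frac1n\sum_i\eta_i|\geq \tau/2] \leq \frac{32 b^2}{n\eps^2\tau^2}$, which is at most $\beta/2$ once $n \geq \frac{64 b^2}{\eps^2\tau^2\beta}$. (If one wants the cleaner logarithmic dependence on $1/\beta$ claimed in the lemma, one instead invokes a Bernstein-type tail bound for sums of sub-exponential Laplace variables, e.g.\ $\Pr[|\frac1n\sum_i\eta_i|\geq t] \leq 2\exp(-c\min\{n^2t^2\eps^2/b^2, nt\eps/b\})$; for $t=\tau/2\leq$ a constant this yields the bound $n = \Omega(\log(1/\beta)\,b^2/(\eps^2\tau^2))$ stated in the lemma.) I would then take $n$ to be the maximum of the two thresholds, which is subsumed by $n \geq c\cdot\frac{\log(1/\beta)b^2}{\eps^2\tau^2}$ for a suitable constant $c$, and conclude by a union bound over the two failure events.

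The only mildly delicate step is the tail bound for the Laplace sum: the most elementary argument (Chebyshev) gives a $1/\beta$ rather than $\log(1/\beta)$ dependence, so to match the lemma statement exactly one needs the standard sub-exponential concentration inequality for i.i.d.\ Laplace variables (equivalently, a Chernoff-style bound using the moment generating function $\E[e^{s\eta_i}] = (1-s^2(2b/\eps)^2)^{-1}$ for $|s|<\eps/(2b)$). Everything else is a routine application of Hoeffding's inequality plus a union bound.
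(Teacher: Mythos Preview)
Your proposal is correct and matches the paper's proof essentially line for line: the paper also splits the error as $\tau/2 + \tau/2$, applies the Chernoff--Hoeffding bound (Theorem~\ref{thm:hoeff}) to the sampling term, and then invokes an MGF-based tail bound for sums of i.i.d.\ Laplace variables (Lemma~\ref{lem:laplace}, with $\lambda = 2b/\eps$) to control the noise term with the $\log(1/\beta)$ dependence. Your remark that Chebyshev alone would only yield a $1/\beta$ dependence and that the Chernoff/MGF route is needed is exactly the point of the paper's Lemma~\ref{lem:laplace}.
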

\ifnum\full=0
\paragraph{Simulation.}  Consider an SQ algorithm making at most $t$ queries to $SQ_\DDD$. Our local algorithm simulates each query using \lemref{sim1} with parameters $\beta'=\beta/t$, $\tau$, and $\eps$, on a database $\mathrm{z}$ containing $O(t \log(1/\beta')\eps^{-2}\tau^{-2})$ entries sampled from $\DDD$. Each query is simulated with a fresh portion of $\mathrm{z}$, and hence privacy is preserved as each entry is subjected to a single application of the $\eps$-local randomizer $R$. By the union bound, the probability that any of the queries is not  approximated within additive error $\tau$ is at most~$\beta$.
\fi
\ifnum\full=1
\begin{proof}
Let $v=\E_{u \sim \DDD}[g(u)]$ denote the true mean. By the Chernoff-Hoeffding bound for real-valued variables  (Theorem~\ref{thm:hoeff}),
$$\textstyle\Pr \left [ \left |\frac{1}{n}\sum_{i=1}^n g(u_i) - v \right | \geq \frac\tau 2 \right ] \leq 2\exp\left(-\frac{\tau^2 n }{8b^2}\right).$$
Therefore, in the absence of additive Laplacian random noise, $O\left (\frac{\ln(1/\beta)b^2}{\tau^2}\right )$ examples are enough to approximate $\E_{u \sim \DDD}[g(u)]$ within additive error $\pm \frac \tau 2$ with probability at least $1-\frac\beta 2$.
(Note that the number of examples is smaller than the lower bound on $n$ in the lemma by a factor of $O(\eps^{-2})$).

%ads 3/5/09: moved lemma to appendix
%The following claim analyzes how the Laplacian noise affects the accuracy of the simulation. (The proof is standard, but we did not find the explicit statement we needed in the literature.)

The effect of the Laplace noise can also be bounded via a standard tail inequality: setting
%If the Laplace random variables $X_i$'s have parameter
$\lambda =\frac{2b}\eps$ in  \lemref{laplace}, we get that  $O\left (\frac{\ln(1/\beta)b^2}{\eps^{2}\tau^{2}}\right )$
samples are sufficient to ensure that the average of $\eta_i$'s lies outside $[-\frac\tau 2,\frac\tau 2]$ with
probability at most $\frac\beta 2$. It follows that $\AAA_g$ estimates $\E_{u \sim \DDD}[g(u)]$ within additive error
$\pm \tau$ with probability at least $1-\beta$.
\end{proof}

\paragraph{Simulation.} Lemma~\ref{lem:sim1} suggests a simple simulation of a nonadaptive (resp.\ adaptive) SQ algorithm by a noninteractive (resp.\ interactive) local algorithm as follows.
Assume the SQ algorithm makes at most $t$ queries to an SQ oracle $SQ_\DDD$. The local algorithm simulates each query $(g,\tau)$
by running $\AAA_g(n',\eps,LR_\z)$ with parameters $\beta'=\frac \beta t$ and
$n'= c \cdot \frac{\log(1/\beta')b^2}{\eps^{2}\tau^{2}}$ on a previously unused portion of the database
$\mathrm{z}$ containing $n'$ entries.

\begin{theorem}[Local simulation of SQ]\label{thm:local-sim-of-sq}
Let $\AAA_{\text{SQ}}$ be an SQ algorithm that makes at most $t$ queries to an SQ oracle $SQ_\DDD$, each with tolerance
at least $\tau$. The simulation above is $\eps$-differentially private.
If, for sufficiently large constant $c$, database $\z$ has $n\geq c \cdot \frac{t\log(t/\beta)b^2}{\eps^{2}\tau^{2}}$
entries sampled i.i.d.\ from the distribution $\DDD$ then the simulation above gives the same output as
$\AAA_{\text{SQ}}$  with probability at least $1-\beta$.

Furthermore, the
simulation is noninteractive if the original SQ algorithm $\AAA_{\text{SQ}}$ is nonadaptive.
The simulation is efficient if $\AAA_{\text{SQ}}$ is efficient.
\end{theorem}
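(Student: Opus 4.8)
The plan is to assemble the theorem from three pieces: the single-query guarantee of Lemma~\ref{lem:sim1}, a parallel-composition argument for privacy across disjoint slices of $\z$, and a union bound over the (at most) $t$ queries.

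For privacy, I would argue that each index $i\in[n]$ is touched by exactly one invocation of $\AAA_g$ — the one whose slice $i$ belongs to — and inside that invocation $z_i$ is passed to a single $\eps$-local randomizer $R_g$. Thus the entire view the simulation produces is a (post-processing) function of $n$ values $R_{g_{(i)}}(z_i)$, one per entry, with independent randomness and each $\eps$-differentially private in $z_i$. Changing one entry $z_i$ to $z_i'$ alters only the distribution of the corresponding randomizer output, and by at most a factor $e^{\eps}$; hence by Definition~\ref{def:eps-dp} (equivalently, by applying Claim~\ref{claim:composition} to the disjoint slices together with post-processing) the simulation is $\eps$-differentially private. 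This argument is oblivious to whether the queries are chosen adaptively, so it also covers the interactive case.

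For utility, I would apply Lemma~\ref{lem:sim1} to each query. Say query $j$ has query function $g_j\colon D\to[-b,b]$ and tolerance $\tau_j\ge\tau$; I run $\AAA_{g_j}$ on a previously unused slice of $n'=c\log(1/\beta')b^2/(\eps^2\tau^2)$ i.i.d.\ entries with $\beta'=\beta/t$, so that $n'=c\log(t/\beta)b^2/(\eps^2\tau^2)$ and the total sample size is $n=tn'=c\,t\log(t/\beta)b^2/(\eps^2\tau^2)$, as claimed. By Lemma~\ref{lem:sim1} the $j$-th answer lies within $\pm\tau\le\pm\tau_j$ of $\E_{u\sim\DDD}[g_j(u)]$ except with probability $\beta'$, so a union bound over the $t$ queries shows that with probability at least $1-t\beta'=1-\beta$ every answer handed back to $\AAA_{\text{SQ}}$ is an admissible response of $SQ_\DDD$. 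Conditioned on that event, the transcript of answers seen by $\AAA_{\text{SQ}}$ is one it could legitimately have received from a genuine SQ oracle, so under the natural coupling of the internal coins of $\AAA_{\text{SQ}}$ (and of the oracle's freedom to choose among valid responses) the simulation produces the same output as $\AAA_{\text{SQ}}$.

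The remaining claims are immediate. If $\AAA_{\text{SQ}}$ is nonadaptive it fixes all $t$ queries before seeing any answer, so all slices and all calls to $LR_\z$ — which within each $\AAA_{g_j}$ are themselves prepared in advance — can be issued in one round, and the simulation is noninteractive; an adaptive $\AAA_{\text{SQ}}$ yields an interactive simulation. For efficiency, if $\AAA_{\text{SQ}}$ runs in polynomial time and each $g_j$ is polynomial-time computable, then evaluating each $R_{g_j}(u)=g_j(u)+\eta$, averaging the responses, and the bookkeeping are all polynomial-time. The only mildly delicate point is the sense in which the simulation ``gives the same output'' as $\AAA_{\text{SQ}}$ — i.e.\ the coupling described above — but this is routine once one notes that any within-tolerance answer is a legal SQ-oracle response, so conditioning on the good event introduces no bias in $\AAA_{\text{SQ}}$'s behavior.
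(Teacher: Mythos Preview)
Your proposal is correct and follows essentially the same approach as the paper: privacy because each entry of $\z$ is hit by exactly one $\eps$-local randomizer (the slices are disjoint), utility by instantiating Lemma~\ref{lem:sim1} with $\beta'=\beta/t$ on each slice and taking a union bound, and noninteractivity/efficiency immediate from the structure of the simulation. The paper's proof is simply a terser version of exactly this argument.
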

\begin{proof}
Each query is simulated with a fresh portion of $\mathrm{z}$, and hence privacy is preserved as each entry
is subjected to a single application of the $\eps$-local randomizer $R$. By the union bound, the probability of
any of the queries not being approximated within additive error $\tau$ is bounded by $\beta$. If $\AAA_{\text{SQ}}$ is
nonadaptive, all queries to $LR_\z$ can be prepared in advance.
\end{proof}
\fi

\ifnum\full=1
\subsubsection{Simulation of Local Algorithms by SQ Algorithms} \label{sec:sq-local}
\else
\subsubsection{SQ Simulation of Local Algorithms} \label{sec:sq-local}
\fi
Let $\mathrm{z}$ be a database containing $n$ entries drawn i.i.d.\ from $\DDD$. Consider a local algorithm making $t$ queries to $LR_{\mathrm{z}}$. We show how to simulate any local randomizer invoked by this algorithm by using statistical queries to $SQ_{\DDD}$. Consider one such randomizer  $R: D \rightarrow W$ applied to database entry $z_i$.
%sofya: clarified; simulation of interactive algorithms is not explained in the short version anyway.
%To simulate $R$ we need to sample $w \in W$ with probability $p(w)= \Pr_{z_i \sim \DDD}[R(z_i) =~w \, |\, \mbox{conditioned on outputs given by}$ $\mbox{previous applications of randomizers to $z_i$}]$ taken over choice of $z_i \sim \DDD$ and random coins of $R$.
To simulate $R$ we need to sample $w \in W$ with probability $p(w)= \Pr_{z_i \sim \DDD}[R(z_i) =~w]$ taken over choice of $z_i \sim \DDD$ and random coins of $R$. (For interactive algorithms, it is more complicated, as the outputs of different randomizers applied to the same entry $z_i$ have to be correlated.)

\paragraph{A brief outline.} The idea behind the \ifnum\full=0 (omitted) \fi simulation is to
sample from a distribution $\widetilde{p}(\cdot)$ that is within small
statistical distance of $p(\cdot)$.
% This ensures a
%statistical distance of at most $\beta$ between the output
%distribution of the local algorithm and the distribution resulting
%from the simulation.
We start by applying $R$ to an arbitrary input (say, $\textbf{0}$) in the domain $D$
and obtaining a sample $w\sim R(\textbf{0})$. Let
$q(w)=\Pr[R(\textbf{0})=w]$ (where the probability is taken only over randomness in
$R$). Since $R$ is $\eps$-differentially private, $q(w)$ approximates
$p(w)$ within a multiplicative factor of $e^{\epsilon}$. To sample $w$
from $p(\cdot)$ we use the following rejection sampling algorithm: (i)
sample $w$ according to $q(\cdot)$; (ii) with probability
$\frac{p(w)}{q(w)e^\eps}$, output $w$; (iii) with the remaining probability, repeat from (i).

%Tocomplete the simulation, we show how statistical queries can be used to estimate $p(w)$.

To carry out this strategy, we must be able to estimate $p(w)$, which depends on the (unknown) distribution $\DDD$, using only SQ queries. The rough idea is to express $p(w)$ as the expectation, taken over $z\sim \DDD$, of the function $ h(z)=\Pr[R(z)=w]$ (where the probability is taken only over the coins of $R$). We can use $h$ as the basis of an SQ query. In fact, to get a sufficiently accurate approximation, we must rescale the function $h$ somewhat, and keep careful track of the error introduced by the SQ oracle. We present the details in the proof of the following lemma:

\begin{lemma} \label{lem:sim2-a}
%sofya: commented out D and n as they are not used; changed the query complexity of $\BBB$
%shiva-Reads wierd. So took half of old lemma and half of new lemma. Plus fixed the bug.
%Every local algorithm $\AAA$ with input $\mathrm{z}$, containing entries drawn i.i.d.\ from $\DDD$, can be simulated by an SQ algorithm $\BBB$. For every interactive (resp.\ noninteractive) local algorithm $\AAA$ making $t$ queries to $LR_{\mathrm{z}}$, the corresponding algorithm $\BBB$ is adaptive (resp.\ nonadaptive), in expectation uses $t\cdot e^{\eps}$ queries to $SQ_\DDD$ with tolerance $\tau = \Theta(\beta/t)$, and the statistical difference between $\BBB$'s and $\AAA$'s output distributions is at most $\beta$.
Let $\mathrm{z}$ be a database with entries drawn i.i.d.\ from a distribution $\DDD$.
For every noninteractive (resp.\ interactive) local algorithm $\AAA$ making $t$ queries to $LR_{\mathrm{z}}$,
there exists a nonadaptive (resp.\ adaptive) statistical query algorithm $\BBB$ that in expectation makes
$O(t\cdot e^{\eps})$ queries to $SQ_\DDD$ with accuracy $\tau = \Theta(\beta/(e^{2\eps}t))$, such that the
statistical difference between $\BBB$'s and $\AAA$'s output distributions is at most~$\beta$.
\end{lemma}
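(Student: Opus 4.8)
The plan is to simulate $\AAA$ call-by-call: run $\AAA$ verbatim, and whenever it invokes $LR_{\z}(i,R)$ for an $\eps$-local randomizer $R$, produce an answer whose distribution is (statistically close to) that of $R(z_i)$ with $z_i\sim\DDD$, using only calls to $SQ_\DDD$. Suppose first that entry $i$ has not been touched before. Then we must draw $w\in W$ with probability $p(w)=\E_{z\sim\DDD}[\rho_w(z)]$, where $\rho_w(z):=\Pr[R(z)=w]$ (over the coins of $R$). I would do this by rejection sampling against the \emph{reference distribution} $q(\cdot):=R(\mathbf 0)$: repeatedly draw $w\sim q$ and accept it with probability $p(w)/(e^\eps q(w))$. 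Since $R$ is $\eps$-differentially private, $e^{-\eps}q(w)\le\rho_w(z)\le e^\eps q(w)$ for every $z$, hence also $e^{-\eps}q(w)\le p(w)\le e^\eps q(w)$; so the acceptance probability lies in $[e^{-2\eps},1]$ and is legitimate, the idealized sampler outputs exactly $p$, and it accepts within $e^\eps$ iterations in expectation.

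The point is that $p(w)$, although it depends on the unknown $\DDD$, can be estimated with a single statistical query: the function $g(z)=\rho_w(z)/q(w)$ takes values in $[e^{-\eps},e^\eps]$, so a query to $SQ_\DDD$ with tolerance $\tau$ returns $\hat v$ with $|\hat v-\E_{z\sim\DDD}[g(z)]|\le\tau$, and $\hat p(w):=q(w)\hat v$ satisfies $|\hat p(w)-p(w)|\le\tau\,q(w)$. Running the rejection sampler with $\hat p(w)/(e^\eps q(w))$ in place of the true acceptance probability changes the distribution of one simulated answer by statistical distance $O(\tau)$ --- the $q(w)$ rescaling makes the absolute error a $\tau$-fraction of the proposal mass, and the renormalization implicit in rejection sampling costs only another constant factor --- while the acceptance probability stays $\Theta(e^{-\eps})$, so one call still costs $O(e^\eps)$ expected statistical queries. (Real-valued statistical queries are simulated by Boolean ones with a logarithmic overhead via \cite{BshoutyFeldman-2001}, which we fold in.) For a noninteractive $\AAA$ all proposals $w\sim R(\mathbf 0)$ can be drawn up front, since the reference distributions do not involve $\DDD$; the resulting SQ queries then do not depend on the oracle's answers, so the simulation is nonadaptive.

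The interactive case is the delicate one: a single entry $z_i$ may be fed to several randomizers $R^{(1)},R^{(2)},\dots$, and their outputs must look like one consistent draw of $z_i$. I would track, for every touched entry $i$, the likelihood $L_i(z):=\prod_k\rho^{(k)}_{w^{(k)}}(z)$ of the answers $w^{(k)}$ already produced on $i$; composition of the $\eps_k$-local randomizers with $\sum_k\eps_k\le\eps$ gives $e^{-\eps}L_i(\mathbf 0)\le L_i(z)\le e^\eps L_i(\mathbf 0)$. When $\AAA$ applies a new randomizer $R$ (with privacy parameter at most $\eps$) to $i$, the correct conditional output distribution is $p_i(w)=\E_z[L_i(z)\rho_w(z)]/\E_z[L_i(z)]$; the bounds on $\rho_w$ and $L_i$ again give $p_i(w)\le e^\eps q(w)$, so the same rejection sampler works, and $p_i(w)$ is a ratio of two expectations which, after dividing through by the \emph{known} constants $L_i(\mathbf 0)$ and $q(w)$, each lie in $[e^{-2\eps},e^{2\eps}]$ and are therefore each estimable by one statistical query of tolerance $\tau$. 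A quotient-of-estimates bound then yields $|\hat p_i(w)-p_i(w)|\le\poly(e^\eps)\cdot\tau\cdot q(w)$, which perturbs one simulated answer by $\poly(e^\eps)\cdot\tau$ in statistical distance; the acceptance probability stays $\Theta(e^{-\eps})$, keeping the per-call cost $O(e^\eps)$ statistical queries. Here the simulator must read oracle answers before issuing later queries, so it is adaptive --- matching the fact that $\AAA$ is interactive.

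Finally I would sum over the at most $t$ calls $\AAA$ makes: the expected total number of statistical queries is $O(t\,e^\eps)$, and by the triangle inequality the statistical distance between the output of $\BBB$ and that of $\AAA$ is at most $t\cdot\poly(e^\eps)\cdot\tau$, which is below $\beta$ once $\tau=\Theta(\beta/(e^{2\eps}t))$ (the polynomial in $e^\eps$ being absorbed into the choice of constants). The main obstacle is precisely this last bookkeeping in the interactive case: one must verify that the accumulated likelihoods $L_i$ stay within the claimed $e^{\pm\eps}$ window, that the quotient structure of $p_i(w)$ amplifies the SQ error by no more than a fixed power of $e^\eps$, and that --- unlike the true rejection sampler, which is exact --- the approximate sampler's errors across the $t$ calls only add, so that a tolerance $\tau$ only polynomially (in $e^\eps$) below $\beta/t$ still suffices.
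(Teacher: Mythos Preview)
Your proposal is correct and follows essentially the same approach as the paper: rejection sampling against the reference distribution $R(\mathbf 0)$, with the acceptance ratio $p(w)/q(w)$ estimated via a single (suitably rescaled) statistical query per proposal, and in the interactive case, tracking the accumulated likelihood $L_i(z)$ so that the conditional law is a ratio of two such expectations. The paper centers its query function to land in $[-1,1]$ (writing $g(z)=\frac{\Pr[R(z)=w]-q(w)}{q(w)(e^\eps-e^{-\eps})}$) rather than using your uncentered $g(z)=\rho_w(z)/q(w)\in[e^{-\eps},e^\eps]$, but this is cosmetic. Your observation that the additive error $|\hat p(w)-p(w)|\le\tau\,q(w)$ sums to $O(\tau)$ in total variation is exactly the bookkeeping the paper carries out, and your handling of the interactive case via the ratio $\E[L_i\rho_w]/\E[L_i]$ matches the paper's Claim for that case. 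One small point you leave implicit (and so does the paper): the acceptance probability $\hat p(w)/(e^\eps q(w))$ can slightly exceed $1$ because of the SQ error, so in a fully detailed proof one inserts a $(1+\phi)$ buffer in the denominator; this does not affect the asymptotics.
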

\ifnum\full=1
\begin{proof}
We split the simulation over Claims~\ref{claim:sim2-na} and \ref{claim:sim2-a}. In the first claim we simulate noninteractive local algorithms using nonadaptive SQ algorithms. In the second claim we simulate interactive local algorithms using adaptive SQ algorithms.

\begin{claim} \label{claim:sim2-na} For every noninteractive local algorithm $\AAA$ making $t$ nonadaptive queries to $LR_{\mathrm{z}}$,
there exists a nonadaptive statistical query algorithm $\BBB$ that in expectation makes $t\cdot e^{\eps}$ queries to $SQ_\DDD$ with accuracy $\tau = \Theta(\beta/(e^{2\eps}t))$, such that the statistical difference between $\BBB$'s and $\AAA$'s output distributions is at most $\beta$.
\end{claim}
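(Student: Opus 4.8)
The plan is to make the rejection-sampling sketch above precise while tracking all approximation errors. Since $\AAA$ is noninteractive, by Claim~\ref{claim:composition} I may assume without loss of generality that each entry $z_i$ touched by $\AAA$ is queried by exactly one $\eps$-local randomizer: if $\AAA$ applies $R_1,\dots,R_k$ with $\sum_j\eps_j\le\eps$ to a single entry, replace them by the $\eps$-local randomizer $R=(R_1,\dots,R_k)$ with output in $W_1\times\cdots\times W_k$. Thus $\AAA$'s transcript is $(w_1,\dots,w_m)$ with $m\le t$, where the $w_j$ are mutually independent with $w_j\sim R_j(\DDD)$, and $\AAA$'s output is obtained by applying some randomized post-processing $\Phi$ to the transcript. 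The simulator $\BBB$ will generate $(\tilde w_1,\dots,\tilde w_m)$, each $\tilde w_j$ drawn (approximately) from $R_j(\DDD)$ using only queries to $SQ_\DDD$, and then output $\Phi(\tilde w_1,\dots,\tilde w_m)$.

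Fix one randomizer $R:D\to W$ and write $p(w)=\Pr_{z\sim\DDD}[R(z)=w]$, $q(w)=\Pr[R(\mathbf 0)=w]$ (probabilities over $R$'s coins only), and $h_w(z)=\Pr[R(z)=w]$, so that $p(w)=\E_{z\sim\DDD}[h_w(z)]$. Because $R$ is $\eps$-differentially private, $e^{-\eps}q(w)\le h_w(z)\le e^{\eps}q(w)$ for all $z$, hence $e^{-\eps}q(w)\le p(w)\le e^{\eps}q(w)$ and the ideal acceptance probability $a(w):=p(w)/(e^{\eps}q(w))$ lies in $[e^{-2\eps},1]$. To sample $w\sim p$, $\BBB$ repeatedly (i) draws $w\sim q$ by running $R$ on $\mathbf 0$ (no data access), (ii) forms the bounded query function $g_w(z):=h_w(z)/(e^{\eps}q(w))\in[0,1]$ and calls $SQ_\DDD(g_w,\tau)$ to get $v$ with $|v-a(w)|\le\tau$ (note $\E_{z\sim\DDD}[g_w]=a(w)$), and (iii) accepts $w$ with probability $\tilde a(w):=\min\{1,\max\{0,v\}\}$, which still satisfies $|\tilde a(w)-a(w)|\le\tau$; otherwise it repeats. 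Each round uses one real-valued SQ query; if only Boolean queries are allowed, the Bshouty--Feldman reduction adds an $O(\log(1/\tau))$ factor.

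It remains to bound the error and the query count. The per-round acceptance probability of $\BBB$ is $\tilde A:=\sum_w q(w)\tilde a(w)$, which differs from the ideal $A:=\sum_w q(w)a(w)=e^{-\eps}$ by at most $\tau$; assuming $\tau\le e^{-\eps}/2$ we get $\tilde A\ge e^{-\eps}/2$, so the loop halts almost surely and makes $1/\tilde A\le 2e^{\eps}$ queries in expectation — thus $O(t e^{\eps})$ over all $m\le t$ randomizers, matching the stated bound. The conditional output distribution is $\tilde p(w)=q(w)\tilde a(w)/\tilde A$, and since $p(w)=q(w)a(w)/A$, splitting $|\tilde p(w)-p(w)|\le q(w)|\tilde a(w)-a(w)|/\tilde A + q(w)a(w)|1/\tilde A-1/A|$ and summing over $w$ gives $\|\tilde p-p\|_1\le 2\tau/\tilde A\le 4\tau e^{\eps}$. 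Because both transcripts are product distributions over the $m\le t$ independent coordinates, the statistical distance between $(\tilde w_1,\dots,\tilde w_m)$ and $(w_1,\dots,w_m)$ is at most $2t\tau e^{\eps}$, and applying $\Phi$ only decreases it; taking $\tau=\Theta(\beta/(t e^{\eps}))$ (the tolerance $\tau=\Theta(\beta/(t e^{2\eps}))$ demanded in the claim is smaller and works a fortiori) makes the final statistical difference at most $\beta$. Nonadaptivity is preserved because $\BBB$ never needs a previous answer to form $g_w$ for a newly sampled $w$.

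The main obstacle I anticipate is in the middle and last steps: choosing the rescaling $g_w=h_w/(e^{\eps}q)$ so that a \emph{single} SQ query of constant range and tolerance $\tau$ both estimates the acceptance probability faithfully (using $\eps$-differential privacy to bound $h_w/q$ away from $0$ and $\infty$) and then propagating the additive SQ error through the rejection sampler — in particular keeping $\tilde A$ bounded below so that the sampler terminates and the expected query count stays $O(t e^{\eps})$. Handling an infinite or continuous output space $W$ is routine: replace the sums above by integrals and $p,q$ by densities.
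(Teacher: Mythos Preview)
Your proposal is correct and follows essentially the same rejection-sampling approach as the paper: sample a proposal $w\sim R(\mathbf 0)$, use one SQ query to estimate the acceptance ratio $p(w)/(e^\eps q(w))$, and repeat until acceptance. The only cosmetic difference is the rescaling of the query function---the paper centers and normalizes to $[-1,1]$ via $g(z)=\frac{\Pr[R(z)=w]-q(w)}{q(w)(e^\eps-e^{-\eps})}$ and then reconstructs an estimate $\tilde p(w)$, whereas you query $g_w(z)=h_w(z)/(e^\eps q(w))\in[0,1]$ directly for the acceptance probability; these are affinely equivalent and lead to the same algorithm.

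One technicality the paper handles more carefully: your error calculation treats $\tilde a(w)$ as a fixed function of $w$, but an adversarial SQ oracle may return different values $v$ for the same query $g_w$ in different iterations of the loop, so $\tilde A$ and $\tilde p$ are not well-defined globally. The paper addresses this by conditioning on a \emph{given iteration} terminating and bounding the conditional output distribution there (obtaining $(1\pm 3\phi)p(w)$ in every iteration), which then yields the overall bound regardless of which iteration succeeds. Your inequalities $|\tilde a(w)-a(w)|\le\tau$ and $|\tilde A-A|\le\tau$ hold per iteration, so the same patch works and your bound $\|\tilde p-p\|_1\le 4\tau e^\eps$ survives; it is worth making this explicit. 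Incidentally, your additive analysis yields the slightly better tolerance $\tau=\Theta(\beta/(t e^\eps))$, and as you note the claim's $\tau=\Theta(\beta/(t e^{2\eps}))$ is only more stringent.
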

\begin{proof}
We show how to simulate an $\eps$-local randomizer $R$ using
statistical queries to $SQ_{\DDD}$.
Because the local algorithm is
non-interactive, we can assume without loss of generality that it
accesses each entry $z_i$ only once. (Otherwise, one can combine
different operators, used to access $z_i$, by combining their answers
into a vector).  Given $R:D\rightarrow W$, we want to sample $w\in W$
with probability:
$$p(w)=\Pr_{z_i \sim \DDD}[R(z_i)=w].$$

Two notes regarding our notation: (i) As $z_i$ is drawn i.i.d.\ from $\DDD$ we could omit the index $i$. We leave the index $i$ in our notation to emphasize that we actually simulate the application of a local randomizer $R$ to entry $i$. (ii) The semantics of $\Pr$ changes depending on whether it appears with the subscript $z_i\sim\DDD$ or not. $\Pr_{z_i \sim \DDD}$ denotes probability that is taken over the choice of $z_i \sim \DDD$ and the randomness in $R$, whereas when the subscript is dropped $z_i$ is fixed and the probability is taken only over the randomness in $R$. Using this notation, $ \Pr_{z_i\sim\DDD}[R(z_i)=w]= \E_{z_i\sim\DDD} \Pr[R(z_i)=w]$.

We construct an algorithm $\BBB_{R,\eps}$ that given $t$, $\beta$,
and access to the SQ oracle, outputs $w\in W$, such that the statistical difference between the output probability distributions
of $\BBB_{R,\eps}$ and the simulated randomizer $R$ is at most $\beta/t$. Because the local algorithm makes $t$ queries, the overall
statistical distance between the output
distribution of the local algorithm and the distribution resulting
from the simulation is at most $\beta$, as desired.

\alg{An SQ algorithm $\BBB_{R,\eps}(t, \beta, SQ_\DDD)$ that simulates an
$\eps$-local randomizer $R:D\rightarrow W$.}
{\begin{enumerate}
\item Sample $w\sim R(\textbf{0})$. Let $q(w)=\Pr[R(\textbf{0})=w]$.
\item Define $g:D\to[-1,1]$ by $g(z_i)=\dfrac{\Pr[R(z_i)=w]-q(w)}{q(w)(e^\epsilon-e^{-\epsilon})}$,
  and let $\tau=\frac{\beta}{3e^{2\eps}t}$.
\item Query the SQ oracle $v=SQ_\DDD(g,\tau)$, and let
$\widetilde{p}(w)=vq(w)(e^\epsilon-e^{-\epsilon})+q(w)$.
\item With probability $\frac{\widetilde{p}(w)}{q(w)(1+\frac \beta {3t})e^\eps}$, output $w$.\\
With the remaining probability, repeat from Step 1.
\end{enumerate}}

We now show that the statistical distance between the output of $\BBB_{R,\eps}(t, \beta, SQ_\DDD)$ and the distribution $p(\cdot)$ is at most $\beta/t$.
As mentioned above, our initial approximation
$\widetilde{p}(\cdot)$ of $p(\cdot)$ in Step 1 is obtained by applying $R$ to some
arbitrary input (namely, $\textbf{0}$) in the domain $D$  and sampling $w\sim R(\textbf{0})$.  Since
$R$ is $\eps$-differentially private, $q(w)=\Pr[R(\textbf{0})=w]$ approximates $p(w)$
within a multiplicative factor of $e^{\epsilon}$.

However, to carry out the rejection sampling strategy, we need to get a much better estimate of $p(w)$. Steps 2 and 3 compute such an estimate, $\widetilde{p}(w)$, satisfying (with probability 1)
\begin{equation}
  \label{eq:tp}
  \widetilde{p}(w) \in \left(1\pm \phi\right)p(w)\quad\text{where}\quad \phi = \tfrac{\beta}{3t}\,.
\end{equation}

We establish the inclusion \eqref{eq:tp} below. For now, assume it holds on every iteration. Step 4 is a rejection sampling step which ensures that the output will follow a distribution close to $\widetilde{p}(\cdot)$. Inclusion \eqref{eq:tp} guarantees that $\frac{\widetilde{p}(w)}{q(w)(1+\frac \beta {3t})e^\eps}$ is at most 1, so the probability in Step 4 is well defined. The difficulty is that the quantity $\widetilde{p}(w)$ is not a well-defined function of $w$: it depends on the SQ oracle and may vary, for the same $w$, from iteration to iteration.

Nevertheless, $\widetilde{p}$ is fixed for any given iteration of the algorithm. In the given iteration, any particular element $w$ gets output with probability $q(w)\times \frac{\widetilde{p}(w)}{q(w)(1+\phi)e^\eps} = \frac{\widetilde{p}(w)}{(1+\phi)e^\eps}$.
The probability that the given iteration terminates (i.e., outputs some $w$) is then $p_{terminate}=\sum_w \frac{\widetilde{p}(w)}{(1+\phi)e^\eps}$. By \eqref{eq:tp}, this probability is in $\frac{1\pm \phi}{(1+\phi)e^\eps}$. Thus, \emph{conditioned on the iteration terminating}, element $w$ is output with probability $\frac{\widetilde{p}(w)}{(1+\phi)\cdot e^\eps\cdot p_{teminate}} \in \frac{1\pm\phi}{1\pm \phi}\cdot p(w)$. Since $\phi\leq 1/3$, we can simplify this to get
$$\Pr\left[w \text{ output in a given iteration} \big| \text{iteration produces output}\right] \in (1\pm 3\phi) p(w)\,.$$
This implies that no matter which iteration produces output, the statistical difference between the distribution of $w$ and $p(\cdot)$ will be at most $3\phi = \frac \beta t$, as desired.

Moreover, since each iteration terminates with probability at least $\frac{1-\phi}{1+\phi}\cdot e^{-\eps}$, the expected number of iterations is at most $\frac{1+\phi}{1-\phi} \cdot e^\eps \leq 2 e^\eps$. Thus, the total expected SQ query complexity of the simulation is $O(t\cdot e^{\eps})$.

It remains to prove the correctness of $\eqref{eq:tp}$. To estimate $p(w)$ given $w$,
 we set up the statistical query $g(z_i)$.
This is a valid query since $\Pr[R(z_i)=w]$ is a function of $z_i$, and furthermore $g(z_i) \in [-1,1]$ for all $z_i$ as $\Pr[R(z_i)=w]/\Pr[R(\mathbf{0})=w] \in e^{\pm \epsilon}$.
The SQ query result $v$ lies within $\E_{z_i \sim \DDD}[g(z_i)] \pm \tau $, where $\tau$ is the tolerance parameter for the statistical query, and so
$$\E_{z_i \sim \DDD}[g(z_i)] = \frac{\E_{z_i\sim\DDD}
  \Pr[R(z_i)=w]-q(w)}{q(w)(e^\epsilon-e^{-\epsilon})} =
\frac{p(w)-q(w)}{q(w)(e^\epsilon-e^{-\epsilon})}.$$
Plugging in the bounds for $v$ and $q(w)$ we get that
$\widetilde{p}(w) \in (1 \pm \tau')p(w)$ where $\tau'=e^{2\eps}\tau = \frac \beta {3t}$. This establishes \eqref{eq:tp} and concludes the proof.
%
% %shiva-explain that making t queries increases SD by factor of t.
% To conclude the proof, set $\tau' \leq \beta/t$ (where $t$ is the number of queries of algorithm $\AAA$). This guarantees that the statistical difference between distributions $p$ and $\widetilde{p}$ is at most $\beta/t$, and hence the statistical difference between $\BBB$'s and $\AAA$'s output distributions is at most~$\beta$.
% The expected number of SQ queries required to simulate each local randomizer via rejection sampling is $e^\eps$, and so the total expected query complexity of the simulation is $t \cdot e^\eps$.
\end{proof}

\begin{claim} \label{claim:sim2-a} For every interactive local algorithm $\AAA$ making $t$ queries to $LR_{\mathrm{z}}$, there exists an adaptive statistical query algorithm $\BBB$ that in expectation makes $O(t\cdot e^{\eps})$ queries $SQ_\DDD$ with accuracy $\tau = \Theta(\beta/(e^{2\eps}t))$, such that the statistical difference between $\BBB$'s and $\AAA$'s output distributions is at most $\beta$.
\end{claim}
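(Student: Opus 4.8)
The plan is to extend the rejection-sampling simulation of Claim~\ref{claim:sim2-na} so that it maintains, for every database entry $z_i$ that has been accessed, the transcript $(R_1,w_1),\dots,(R_{j-1},w_{j-1})$ of all local randomizers previously applied to $z_i$ together with the values the simulation returned for them. When the interactive local algorithm issues a new query $LR_{\mathrm{z}}(i,R_j)$ (where $R_j$ may itself depend on $w_1,\dots,w_{j-1}$, which is precisely why $\BBB$ must be adaptive), the simulation must sample $w_j$ from the posterior predictive distribution
\[
p_j(w_j)=\Pr_{z\sim\DDD}\!\big[R_j(z)=w_j \,\big|\, R_1(z)=w_1,\dots,R_{j-1}(z)=w_{j-1}\big],
\]
which by Bayes' rule equals $\E_{z\sim\DDD}[\Pr[R_j(z)=w_j]\,\rho(z)]/\E_{z\sim\DDD}[\rho(z)]$, where $\rho(z)=\prod_{l<j}\Pr[R_l(z)=w_l]$ and the inner probabilities are over the coins of the randomizers only. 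By the chain rule, drawing each fresh response from this conditional reproduces exactly the joint distribution $\AAA$ would see if $z_i$ were a genuine i.i.d.\ draw from $\DDD$; distinct entries are handled with independent transcripts, matching the independence of the $z_i$.

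The key observation is that $p_j(w_j)$ is a \emph{ratio} of two expectations over $z\sim\DDD$ that, after dividing by a known normalizing constant, land in a narrow range. Set $q_l(w_l)=\Pr[R_l(\textbf{0})=w_l]$ for $l\le j$, and define
\[
G(z)=\prod_{l\le j}\frac{\Pr[R_l(z)=w_l]}{q_l(w_l)},\qquad
H(z)=\prod_{l<j}\frac{\Pr[R_l(z)=w_l]}{q_l(w_l)}.
\]
Since each $R_l$ is $\eps_l$-differentially private and the local-algorithm budget gives $\sum_{l\le j}\eps_l\le\eps$, both $G$ and $H$ map $D$ into $[e^{-\eps},e^{\eps}]$, so they are valid (rescaled) SQ query functions. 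Because $\E_z[\prod_{l\le j}\Pr[R_l(z)=w_l]]=\big(\prod_{l\le j}q_l(w_l)\big)\E_z[G(z)]$ and likewise for $H$, one gets the clean identity $p_j(w_j)/q_j(w_j)=\E_z[G(z)]/\E_z[H(z)]$. Thus $\BBB$ issues one SQ query for $\E_z[H(z)]$ per randomizer and one for $\E_z[G(z)]$ per proposed $w_j$, each with tolerance $\tau=\Theta(\beta/(e^{2\eps}t))$; since the denominator is at least $e^{-\eps}$, this yields a multiplicative $(1\pm O(\beta/t))$-approximation $\widetilde{p_j}(w_j)$ of $p_j(w_j)$. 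I would then run the same accept/reject step as in Claim~\ref{claim:sim2-na}: propose $w_j\sim R_j(\textbf{0})$ and accept with probability $\widetilde{p_j}(w_j)/\big(q_j(w_j)(1+O(\beta/t))e^{\eps}\big)$, otherwise resample; this probability is well defined because $p_j(w_j)\le e^{\eps_j}q_j(w_j)\le e^{\eps}q_j(w_j)$.

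The error and complexity accounting mirrors the noninteractive case. Each accepted response is drawn from a distribution within statistical distance $O(\beta/t)$ of $p_j(\cdot)$, so over all $t$ queries the output distribution of $\BBB$ is within $\beta$ of that of $\AAA$; each randomizer's rejection loop terminates with probability at least $\tfrac{1-O(\beta/t)}{1+O(\beta/t)}e^{-\eps}$, hence uses $O(e^{\eps})$ SQ queries in expectation, for a total of $O(t\,e^{\eps})$. The adaptivity of $\BBB$ is unavoidable and benign: the query functions $G,H$ and the proposal randomizer $R_j$ depend on $w_1,\dots,w_{j-1}$, which were produced from earlier SQ answers, so $\BBB$ is adaptive exactly when $\AAA$ is interactive.

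The main obstacle is getting the conditioning right: one must recognize that correlating repeated randomizer outputs on a single entry amounts to sampling from the Bayesian posterior predictive distribution over that entry, and then verify that this posterior predictive probability can be written as a ratio of SQ-estimable expectations whose scaling is controlled \emph{solely} by the differential privacy of the randomizers (through the division by the known constants $q_l(w_l)$), so that a constant-tolerance SQ oracle suffices. Once that structural point is established, the privacy-budget bookkeeping ($\sum_l\eps_l\le\eps$), the error propagation, and the expected-query-count estimate are routine adaptations of the argument for Claim~\ref{claim:sim2-na}.
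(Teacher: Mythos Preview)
Your proposal is correct and follows essentially the same approach as the paper: both recognize that simulating a new randomizer output on entry $z_i$ requires sampling from the posterior predictive distribution conditioned on the transcript, express this conditional probability as a ratio of two expectations over $z\sim\DDD$, normalize each factor by its value at $\textbf{0}$ so that the resulting functions are bounded in $[e^{-\eps},e^{\eps}]$ (by the $\sum_l \eps_l\le\eps$ budget), estimate numerator and denominator via SQ queries, and feed the resulting multiplicative approximation into the same rejection-sampling loop as in the noninteractive case. Your normalization $G(z),H(z)$ is exactly the paper's $r_1(z_i)/r_1(\textbf{0})$ and $r_2(z_i)/r_2(\textbf{0})$; the paper then linearly rescales to $[-1,1]$ while you work directly with $[e^{-\eps},e^{\eps}]$-valued queries, but this is cosmetic.
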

\begin{proof}
  As in the previous claim, we show how to simulate the output of the
  local randomizers during the run of the local algorithm. A
  difference, however, is that because an entry $z_i$ may be accessed
  multiple times, we have to condition our sampling on the outcomes of
  previous (simulated) applications of local randomizers to
  $z_i$. 

  More concretely, let $R_1,R_2,...$ be the sequence of randomizers that access the entry $z_i$. To simulate $R_k(z_i)$, we must take into account the answers $a_1,\ldots,a_{k-1}$ given by the simulations of % from the $k-1$  previous simulations of
 $R_1(z_i),\ldots,R_{k-1}(z_i)$. 
% %
% % consider the $k$th time that $z_i$ is accessed by one of the randomizers, and let $R_k$ denote this randomizer. 
% When we
%   simulate $R_k(z_i)$ we have to sample from the distribution
%   conditioned on the answers $a_1,\ldots,a_{k-1}$ given by the simulations of % from the $k-1$  previous simulations of
%  $R_1(z_i),\ldots,R_{k-1}(z_i)$. \rnote{Is
%     this clear?}\anote{tried to clarify. old version in tex comments.}
% % When we
% %   simulate $R_k(z_i)$ we have to condition the distribution
% %   over the answers $a_1,\ldots,a_{k-1}$ resulting from our
% %   previous  $k-1$ simulations of $R_1(z_i),\ldots,R_{k-1}(z_i)$.
%
We  show how to do this
%simulate the output of $R_k$ 
using adaptive statistical queries to $SQ_{\DDD}$. 
The notation is the same as in Claim~\ref{claim:sim2-na}. We want to output $w\in W$ with probability $$p(w) = \Pr_{z_i \sim \DDD}[R_k(z_i) = w \, | \, R_{k-1}(z_i)=a_{k-1},R_{k-2}(z_i)=a_{k-2},\dots,R_{1}(z_i)=a_{1}], $$
where $R_{j}$ ($1 \leq j \leq k-1$) denotes the $j$th randomizer applied to $z_i$.

As before, we start by sampling $w\sim R(\textbf{0})$. Let $q(w)=\Pr[R_k(\textbf{0})=w]$. Note that $q(w)$ approximates $p(w)$ within a multiplicative factor of $e^{\epsilon}$ because $R_1,\ldots,R_k$ are respectively $\eps_1$-,$\ldots,\eps_k$-differentially private, and $\eps_1+\ldots+\eps_k \leq \eps$. Hence, we can use the rejection sampling algorithm as in Claim~\ref{claim:sim2-na}. Rewrite~$p(w)$:
\begin{eqnarray*}
p(w)&=&\frac{\Pr_{z_i \sim \DDD}[R_k(z_i)=w \wedge R_{k-1}(z_i)=a_{k-1} \wedge \dots \wedge R_1(z_i)= a_1]}{\Pr_{z_i \sim \DDD}[R_{k-1}(z_i)=a_{k-1} \wedge \dots \wedge R_1(z_i)= a_1]} \\
&=&\frac{\E_{z_i \sim \DDD}[\Pr[R_k(z_i)=w \wedge R_{k-1}(z_i)=a_{k-1} \wedge\dots \wedge R_{1}(z_i)=a_1]] }{\E_{z_i \sim \DDD}[\Pr[R_{k-1}(z_i)=a_{k-1} \wedge \dots \wedge R_{1}(z_i)=a_1]]} 
\end{eqnarray*}
Conditioned on a particular value of $z_i$, the probabilities in the last expression depend only the coins of the randomizers. The outputs of the randomizers are independent conditioned on $z_i$, and therefore we can simplify the expression above:
$$
p(w)=\frac{\E_{z_i \sim \DDD}\left [\Pr[R_k(z_i)=w] \cdot \prod_{j=1}^{k-1}\Pr[R_j(z_i)=a_j]\right ]}{\E_{z_i \sim \DDD}\left [\prod_{j=1}^{k-1}\Pr[R_j(z_i)=a_j] \right]}\label{eq:pw2}
$$
% The transition from Equation~(\ref{eq:pw1}) to
% Equation~(\ref{eq:pw2}) is possible because the probabilities are taken only
% over the independent randomness of the local randomizers.
Let $p_1$ and $p_2$ denote the numerator and denominator, respectively, in the right hand side of the equation above.
Let $r_1(z_i)$ and $r_2(z_i)$ denote the values inside the expectations that define $p_1$ and $p_2$, respectively. Namely,  $$r_1(z_i)= \Pr[R_k(z_i)=w] \cdot
\prod_{j=1}^{k-1}\Pr[R_j(z_i)=a_j] \ \ \ \ \mbox{ and }
\ \ \ \ r_2(z_i)=\prod_{j=1}^{k-1}\Pr[R_j(z_i)=a_j]\,.$$ For estimating
$p_1 = \E_{z_i\sim\DDD}[ r_1(z_i)]$ we use the statistical query
$g_1(z_i)$, and for estimating $p_2 = \E_{z_i\sim\DDD} [r_2(z_i)]$ we use
the statistical query $g_2(z_i)$ defined as follows:
$$g_1(z_i)=\frac{r_1(z_i)-r_1(\textbf{0})}{r_1(\textbf{0})(e^\eps-e^{-\eps})} \; \ \ \ \ \mbox{ and } \;  \ \ \ \ g_2(z_i)=\frac{r_2(z_i)-r_2(\textbf{0})}{r_2(\textbf{0})(e^\eps-e^{-\eps})}.$$
As in Claim~\ref{claim:sim2-na}, one can estimate $p_1$ and $p_2$ to within a multiplicative factor of $(1\pm\tau')$ where $\tau' = e^{2\eps}\tau$ and $\tau$ is the accuracy of the statistical queries. The ratio of the estimates for $p_1$ and $p_2$ gives an estimate $\tilde p(w)$ for $p(w)$ to within a multiplicative factor $ (1\pm 3\tau')$, for $\tau' \leq \frac13$. The estimate $\tilde p(w)$ can then be used with rejection sampling to sample an output of the randomizer.

Let $t$ be the number of queries made by $\AAA$. Setting $\tau' \leq \frac\beta{3t}$ guarantees that the statistical difference between distributions $p$ and $\widetilde{p}$ is at most $\frac \beta t$, and hence the statistical difference between $\BBB$'s and $\AAA$'s output distributions is at most~$\beta$. As in Claim~\ref{claim:sim2-na}, the expected number of SQ queries for rejection sampling is $O(t \cdot e^\eps)$.
\end{proof}
%shiva-added this line to the full version. Again I believe this should be 2 lemma's rather than 1 lemma with two sub-claims inside it.
Claims~\ref{claim:sim2-na} and~\ref{claim:sim2-a} imply Lemma~\ref{lem:sim2-a}.
\end{proof}
\fi

\ifnum\full=1  %shiva-need to add more here for the full version.
Note that the efficiency of the constructions in Lemma~\ref{lem:sim2-a} depends on the efficiency of computing the functions submitted to the SQ oracle, \eg the efficiency of computing the probability $\Pr[R(z_i)=w]$. We discuss this issue in the next section.
\fi

\subsection{Implications for Local Learning}\label{sec:locallearn}
%{SQ Learning \texorpdfstring{$\equiv$}{=} Local Learning} \label{app:local=sq}
%Sofya: changed defs below (local learning did not mention eps)
In this section, we define learning in the local and SQ models. The equivalence of the two models follows from the simulations described in the previous sections. An immediate but important corollary is that local learners are strictly less powerful than general private learners.

\begin{definition}[Local Learning] \emph{Locally learnable}
is defined identically to privately PAC learnable (\defref{private-general}),
except for the additional requirement that for all $\eps>0$, algorithm $\mathcal{A(\eps,\cdot,\cdot,\cdot)}$ is $\epsilon$-local and invokes $LR_{\mathrm{z}}$ at most $poly(d,\mathrm{\it size}(c),
1/\eps,1/\alpha,\log(1/\beta))$ times.
\ifnum\full=1
Class $\CCC$ is efficiently locally learnable if both: (i) the running time of $\mathcal{A}$ and (ii) the time to evaluate each query that $\AAA$ makes are bounded by some polynomial in $d,\mathrm{\it size}(c),
1/\eps, 1/\alpha$, and $\log(1/\beta)$.
\fi
\end{definition}

%shiva-added this line. In the full version make it a defn.
Let $\XXX$ be a distribution over an input domain~$X$. Let $SQ_{c,\XXX}$ denote the statistical query oracle that takes as input a function $g: X \times \{+1,-1\} \rightarrow \{+1,-1\}$ and a tolerance parameter $\tau \in (0,1)$ and outputs $v$ such that: $|v-\E_{x \sim \XXX}[g(x,c(x))]| \leq \tau$.

\begin{definition}[SQ Learning\footnote{\label{foot:sqerror}The standard definition of SQ learning does not allow for any probability of error in the learning algorithm (that is, $\beta=0$). Our definition allows for a small failure probability $\beta$. This enables cleaner equivalence statements and clean modeling of randomized SQ algorithms. One can show that differentially private algorithms must have some non-zero probability of error, so a relaxation along these lines is necessary for our results.}]\label{def:sq}
%A concept class $\CCC$ over $X$ is {\em SQ learnable} using hypothesis class $\mathcal{H}$ if there exists an algorithm $\mathcal{A}$ and a polynomial $poly(\cdot,\cdot,\cdot,\cdot)$ such that for all $d \in \N$, all concepts $c \in \CCC_d$, all distributions $\XXX$ on $X_d$, and all $\alpha,\beta \in (0,1/2)$, given access to $SQ_{c,\XXX}$ and inputs $\alpha, \beta$, algorithm $\mathcal{A}$ with probability at least $1-\beta$ outputs a hypothesis $h \in \mathcal{H}$ satisfying $\trueerror(h)\leq \alpha$.
{\em SQ learnable} is defined identically to PAC learnable (\defref{PAC}),
except that instead of having access to examples $\z$, an SQ learner
\ifnum\full=1
$\mathcal{A}$
\fi
can  make $poly(d,\mathrm{\it size}(c),
1/\alpha,\log(1/\beta))$ queries to oracle $SQ_{c,\XXX}$ with tolerance $\tau\geq 1/poly(d,\mathrm{\it size}(c),
1/\alpha,\log(1/\beta))$.
\ifnum\full=1
Class $\CCC$ is efficiently $SQ$ learnable if both: (i) the running time of $\mathcal{A}$ and (ii) the time to evaluate each query that $\AAA$ makes are bounded by some polynomial in $d,%\mathrm{\it size}(c),
1/\alpha$, and $\log(1/\beta)$.
\fi
\end{definition}

\ifnum\full=0
From the simulations in Section~\ref{sec:local-sq} and \ref{sec:sq-local} we obtain the equivalence between SQ and local learning:
%sofya: moved the following sentence to the section, not specific to learning
%In the full version of this paper~\cite{KLNRS08}, we also investigate the efficiency of these simulations.

%shiva-reworded the theorem below for a referee comment
%sofya: what was the comment?
%changed the theorem
%\begin{theorem} \label{thm:nasq-nilr}
%Let $\CCC$ be a concept class over $X$. Let $\XXX$ be a distribution over $X$. Let $z=(z_1,\dots,z_n)$ denote a database where every $z_i=(x_i,c(x_i))$ with $x_i$ drawn i.i.d.\ from $\XXX$ and $c \in \CCC$. Concept class $\CCC$ is locally learnable using $\HHH$ by an interactive (resp.\ noninteractive) local learner  with inputs $\alpha,\beta$, and with access to $LR_{\mathrm{z}}$ \emph{if and only if} $\CCC$ is SQ learnable using $\HHH$ by an adaptive (resp.\ nonadaptive) SQ learner with inputs $\alpha,\beta$, and access to $SQ_{c,\XXX}$.
%\end{theorem}
\begin{theorem} \label{thm:nasq-nilr}
A concept class is learnable by a noninteractive (resp.\ interactive) local learner \emph{if and only if} it is learnable  by a nonadaptive (resp.\ adaptive) SQ learner.
\end{theorem}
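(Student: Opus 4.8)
The plan is to read the equivalence off the two simulations already in hand: the simulation of SQ algorithms by local algorithms (Section~\ref{sec:local-sq}, built on Lemma~\ref{lem:sim1}) and the simulation of local algorithms by SQ algorithms (Section~\ref{sec:sq-local}, Lemma~\ref{lem:sim2-a}). Both simulations already map noninteractive to nonadaptive and interactive to adaptive, so the remaining work is purely to verify that, with the right choice of parameters, each simulation carries a \emph{learner} to a \emph{learner} --- i.e., that polynomial query/sample complexity, inverse-polynomial tolerance, and the success probability are all preserved.

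For the ``if'' direction I would start from a nonadaptive (resp.\ adaptive) SQ learner that makes $t=\poly(d,\mathrm{\it size}(c),1/\alpha,\log(1/\beta))$ queries of tolerance $\tau\geq 1/\poly(\cdot)$ and, for a fixed privacy parameter $\eps>0$, run the local simulation of Section~\ref{sec:local-sq}: answer each Boolean query $g:X\times\{\pm1\}\to\{\pm1\}$ (so $b=1$) by averaging the $\eps$-local randomizer $R_g(u)=g(u)+\Lap(2/\eps)$ over a fresh block of $O(\eps^{-2}\tau^{-2}\log(t/\beta))$ examples, invoking Lemma~\ref{lem:sim1} with per-query failure probability $\beta/t$. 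The total number of calls to $LR_{\mathrm z}$ is then $n=\poly(d,\mathrm{\it size}(c),1/\eps,1/\alpha,\log(1/\beta))$; each entry is touched by a single $\eps$-local randomizer, so the algorithm is $\eps$-local and, by Claim~\ref{claim:composition}, $\eps$-differentially private, and it is noninteractive whenever the SQ learner is nonadaptive. A union bound over the $t$ queries shows that with probability $\geq 1-\beta$ every answer is within tolerance, in which case the simulated run is distributed exactly as a run of the SQ learner against a real $SQ_{c,\XXX}$ oracle; hence the output error is $\leq\alpha$ with probability $\geq 1-2\beta$, and rescaling $\beta$ gives the local learner.

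For the ``only if'' direction I would take a noninteractive (resp.\ interactive) local learner, instantiate it with a constant privacy parameter (say $\eps=1$) so that it makes $t=\poly(d,\mathrm{\it size}(c),1/\alpha,\log(1/\beta))$ calls to $LR_{\mathrm z}$ and outputs a good hypothesis with probability $\geq 1-\beta$, and apply Lemma~\ref{lem:sim2-a} with error parameter $\beta$. This yields a nonadaptive (resp.\ adaptive) SQ algorithm whose output distribution is within statistical distance $\beta$ of the local learner's, making $O(te^{\eps})=O(t)$ queries \emph{in expectation} at tolerance $\tau=\Theta(\beta/(e^{2\eps}t))=1/\poly(\cdot)$. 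To satisfy the worst-case query bound required by Definition~\ref{def:sq}, I would truncate the simulation after $M=O(t/\beta)$ queries and output a fixed default hypothesis if it has not terminated; by Markov's inequality this truncation adds at most $\beta$ to the failure probability. Closeness in statistical distance then transfers the learning guarantee up to an additive $2\beta$, and rescaling $\beta$ completes the argument.

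The individual steps are short; the part that needs care is the parameter accounting in the second direction. Two features of the local-to-SQ simulation must be reconciled with Definition~\ref{def:sq}: its tolerance requirement $\tau=\Theta(\beta/(e^{2\eps}t))$ shrinks with both $t$ and $1/\beta$ --- acceptable only because the paper's SQ-learning definition permits inverse-polynomial tolerance and a nonzero failure probability (footnote~\ref{foot:sqerror}) --- and its query complexity is controlled only in expectation, which is exactly what forces the Markov-truncation step. I would also note that in the interactive/adaptive case the correlations among repeated randomizer applications to a single entry $z_i$ are already handled inside the proof of Lemma~\ref{lem:sim2-a} (via conditional rejection sampling), so that case needs no extra argument beyond invoking the lemma.
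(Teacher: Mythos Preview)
Your proposal is correct and matches the paper's approach: both directions are read off the two simulation results (Lemma~\ref{lem:sim1} for SQ-to-local and Lemma~\ref{lem:sim2-a} for local-to-SQ), and the paper's own proof is essentially a one-line citation of these lemmas. Your parameter accounting --- fixing $\eps$ to a constant in the local-to-SQ direction and handling the expected-versus-worst-case query bound via Markov truncation --- is more careful than what the paper itself spells out.
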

\fi

\ifnum\full=1

In order to state the equivalence between SQ and local learning, we require the following
efficiency condition for a local randomizer.

\begin{definition}[Transparent Local Randomizer] \label{def:transparent} Let $R:D \rightarrow W$ be an $\eps$-local randomizer. The randomizer is {\em transparent} if both: (i) for all inputs $u \in D$, the time needed to evaluate $R$; and (ii) for all inputs $u \in D$ and outputs $w\in W$ the time taken to compute the probability $\Pr[R(u)=w]$, are polynomially bounded in the size of the input and $1/\eps$.
\end{definition}

As stated, this definition requires {\em exact} computation of probabilities. This may not make sense on a finite-precision machine, since for many natural randomizers the transition probabilities are irrational. One can relax the requirement to insist that relevant probabilities are computable with additive error at most $ \phi$ in time polynomial in $\log(\frac 1 \phi)$.

All local protocols that have appeared in the
literature~\cite{EGS03,AH05,AS00,AA01,EGS03,MS06,HB08} are
transparent, at least in this relaxed sense.

In the equivalences of the previous sections,  \emph{transparency} of local randomizers corresponds directly to  \emph{efficient computability} of the function $g$ in an SQ query. To see why, consider first  the simulation of SQ algorithms by local algorithms:
if the original SQ algorithm is efficient (that is, query $g$ can be
evaluated in polynomial time) then the local randomizer $R(u)=g(u) +
\eta$ can also be evaluated in polynomial time for all $u \in
D$. Furthermore, it is simple to estimate for all inputs $u \in D$ and
outputs $w\in W$ the probability $\Pr[R(u)=w]$ since $R(u)$ is a Laplacian random
variable with known parameters. Second, in the SQ simulation of a local algorithm, the functions $g(z_i)=\frac{\Pr[R(z_i)=w]-q(w)}{q(w)(e^\epsilon-e^{-\epsilon})}$ that are constructed can be evaluated efficiently precisely when the local randomizers are transparent.

We can now state the main result of this section, which follows from Lemmas~\ref{lem:sim1} and~\ref{lem:sim2-a}, along
with the correspondence  between transparent randomizers and efficient SQ queries.

\begin{theorem}\label{thm:nasq-nilr}
Let $\CCC$ be a concept class over $X$. Let $\XXX$ be a distribution over $X$. Let $z=(z_1,\dots,z_n)$ denote a database where every $z_i=(x_i,c(x_i))$ with $x_i$ drawn i.i.d.\ from $\XXX$ and $c \in \CCC$. Concept class $\CCC$ is locally learnable using $\HHH$ by an interactive local learner  with inputs $\alpha,\beta$, and with access to $LR_{\mathrm{z}}$ \emph{if and only if} $\CCC$ is SQ learnable using $\HHH$ by an adaptive SQ learner with inputs $\alpha,\beta$, and access to $SQ_{c,\XXX}$.

Furthermore, the simulations guarantee the following additional properties:
(i)  an \emph{efficient} SQ learner is simulatable by an \emph{efficient} local learner that uses only \emph{transparent} randomizers;
(ii) an \emph{efficient} local  learner  that uses only \emph{transparent} randomizers is simulatable by an \emph{efficient} SQ learner;
(iii) a \emph{nonadaptive} SQ (resp.\ noninteractive local) learner is simulatable by a \emph{noninteractive} local (resp.\ nonadaptive SQ) learner.
\end{theorem}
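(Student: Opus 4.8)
The plan is to derive the theorem by composing the two simulations already in hand: Theorem~\ref{thm:local-sim-of-sq} (which packages Lemma~\ref{lem:sim1}) for the ``if'' direction, and Lemma~\ref{lem:sim2-a} for the ``only if'' direction. In each direction the only work is bookkeeping --- checking that the simulating algorithm meets the resource constraints in the definition of the target model (polynomially many queries of inverse-polynomial tolerance for SQ; polynomially many samples and $LR_\z$-calls for local), that it keeps the same hypothesis class, and that it inherits the right structural label (adaptive/interactive or not, and, in the efficient regime, transparency/efficiency). Two simplifications will be used repeatedly: we operate in the regime $e^{\eps}=O(1)$ (recall $\eps\to 0$, or a small constant), so the $e^{\eps}$ blowups in Lemma~\ref{lem:sim2-a} are harmless; and for a learning SQ oracle the query functions $g\colon X\times\{+1,-1\}\to\{+1,-1\}$ have range $b=1$.

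For the ``if'' direction, suppose $\CCC$ is SQ learnable using $\HHH$ by an adaptive (resp.\ nonadaptive) learner $\AAA_{\mathrm{SQ}}$ making $t=\poly(d,\mathrm{\it size}(c),1/\alpha,\log(1/\beta))$ queries of tolerance $\tau\ge 1/\poly(\dots)$. For each $\eps>0$, run $\AAA_{\mathrm{SQ}}$ at confidence $\beta/2$ and feed it through Theorem~\ref{thm:local-sim-of-sq} with simulation error $\beta/2$. The result is an $\eps$-local (hence $\eps$-differentially private) algorithm that uses $n=O(t\log(t/\beta)/(\eps^{2}\tau^{2}))$ i.i.d.\ examples --- polynomial in $1/\eps,d,\mathrm{\it size}(c),1/\alpha,\log(1/\beta)$ --- invokes $LR_\z$ once per example, reproduces $\AAA_{\mathrm{SQ}}$'s output (an element of $\HHH$) with probability $1-\beta$, and is noninteractive when $\AAA_{\mathrm{SQ}}$ is nonadaptive. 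Hence $\CCC$ is (interactively, resp.\ noninteractively) locally learnable using $\HHH$. Efficiency and transparency are immediate: the randomizer $R_g(u)=g(u)+\eta$ with $\eta\sim\Lap(2/\eps)$ is poly-time evaluable whenever $g$ is, and its output density at $w$ is a shifted Laplacian density, computable in the relaxed sense of Definition~\ref{def:transparent}.

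For the ``only if'' direction, suppose $\CCC$ is locally learnable using $\HHH$; fix $\eps=1$ and let $\AAA$ be the resulting $1$-local learner, which calls $LR_\z$ at most $t=\poly(\dots)$ times. Here a direct application of Lemma~\ref{lem:sim2-a} is not quite enough, because that lemma gives only an \emph{expected} query bound and a statistical-distance guarantee that requires tolerance $\tau=\Theta(\beta/(e^{2\eps}t))$, i.e.\ polynomially small in $1/\beta$, whereas SQ learning demands a \emph{worst-case} poly query bound and tolerance only $1/\poly(\dots,\log(1/\beta))$. So the plan is: (i)~run $\AAA$ at confidence $\frac13$ and simulate it via Lemma~\ref{lem:sim2-a} with statistical distance $\frac1{12}$, getting an adaptive (resp.\ nonadaptive) SQ algorithm with expected query count $O(t e^{\eps})=O(t)$ and tolerance $1/\poly$; (ii)~truncate it after $O(t)$ queries (Markov), losing $\frac1{12}$ more confidence, to get a worst-case-$\poly$-query SQ algorithm that outputs an $\alpha$-good hypothesis from $\HHH$ with probability $\ge\frac12$; (iii)~amplify by the standard SQ procedure --- run $k=O(\log(1/\beta))$ independent copies (targeting error $\alpha/2$), estimate each $\trueerror(h_j)=\E_{x\sim\XXX}[\mathbf 1[h_j(x)\ne c(x)]]$ to within $\alpha/4$ with one query to $SQ_{c,\XXX}$, and output the best. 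This yields an SQ learner using $\poly$ many queries of tolerance $\min\{1/\poly,\alpha/4\}$, correct with probability $1-\beta$, adaptive exactly when $\AAA$ is interactive, and efficient whenever $\AAA$ is efficient with transparent randomizers (transparency is exactly what makes the functions $g(z_i)=\frac{\Pr[R(z_i)=w]-q(w)}{q(w)(e^{\eps}-e^{-\eps})}$ from Lemma~\ref{lem:sim2-a} poly-time computable).

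The hard part, then, is not any single estimate but getting the reverse-direction parameters to line up with the SQ-learning definition: reconciling the expected query bound and the $1/\beta$-dependent tolerance of Lemma~\ref{lem:sim2-a} with the worst-case, $\log(1/\beta)$-only requirements, which is what forces the ``simulate at constant confidence, then amplify'' detour and the accompanying check that amplification is itself an SQ procedure that preserves adaptivity and efficiency. Everything else --- preservation of $\HHH$, of the (non)adaptivity/(non)interactivity labels, and the transparency $\leftrightarrow$ efficient-$g$ correspondence --- follows directly from the constructions in Lemmas~\ref{lem:sim1} and~\ref{lem:sim2-a} and the paragraph preceding the theorem.
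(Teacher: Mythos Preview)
Your approach is the same as the paper's --- invoke Theorem~\ref{thm:local-sim-of-sq} (packaging Lemma~\ref{lem:sim1}) for the SQ-to-local direction and Lemma~\ref{lem:sim2-a} for the local-to-SQ direction, and read off the transparency/efficient-$g$ correspondence from the discussion preceding the theorem. You in fact go further than the paper, which simply asserts that the theorem ``follows from Lemmas~\ref{lem:sim1} and~\ref{lem:sim2-a}'' without working through the confidence bookkeeping: you correctly notice that a direct application of Lemma~\ref{lem:sim2-a} demands tolerance $\Theta(\beta/t)$, polynomially small in $1/\beta$ rather than in $\log(1/\beta)$ as Definition~\ref{def:sq} requires, and you propose the natural fix of simulating at constant confidence and then amplifying.

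There is, however, a gap in your handling of part~(iii). Your amplification runs $k$ independent copies and then queries $SQ_{c,\XXX}$ with the functions $g_{h_j}(x,y)=\mathbf{1}[h_j(x)\ne y]$ to pick the best candidate. But each $h_j$ is computed from the \emph{answers} returned to the queries of the $j$th copy, so the error-estimation queries $g_{h_j}$ cannot be prepared before any answers are received. The amplified learner is therefore two-round adaptive even when every copy is nonadaptive, and your assertion ``adaptive exactly when $\AAA$ is interactive'' is not justified. (The paper does not address this point either; it simply does not carry out the $\beta$-bookkeeping you attempt.) If you want a fully rigorous noninteractive-to-nonadaptive reduction, you need either a genuinely nonadaptive amplification step or an argument that the $\beta$-dependence in the tolerance of Lemma~\ref{lem:sim2-a} is acceptable under the intended reading of Definition~\ref{def:sq}.
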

\fi

Now we can use lower bounds for SQ learners for \parity\ (see, \eg
\cite{Kearns98,BFJKMR-1994,Yang-2002}) to demonstrate limitations of
local learners. The lower bound of \cite{BFJKMR-1994} rules out SQ
learners for \parity\ that use at most $2^{d/3}$ queries of tolerance
at least $2^{-d/3}$, even (a) allowing for unlimited computing time,
(b) under the restriction that examples be drawn from the uniform
distribution and (c) allowing a small probability of error (see
Footnote~\ref{foot:sqerror}). Since \parity\ is (efficiently)
privately learnable (\thmref{parity-in-PPAC}), and since local
learning is equivalent to SQ learning, we obtain:
\begin{corollary}
\label{cor:sep1}
Concept classes learnable by local learners are a strict subset of concept classes PAC learnable privately. This holds both with and without computational restrictions.
\end{corollary}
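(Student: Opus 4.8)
The plan is to combine three ingredients already in hand: (i) that \emph{locally learnable} is, by definition, \emph{privately PAC learnable} with extra constraints; (ii) Theorem~\ref{thm:parity-in-PPAC}, which places \parity\ in the privately learnable class (efficiently, and distribution-free); and (iii) the equivalence of local and SQ learning (Theorem~\ref{thm:nasq-nilr}) together with the classical SQ lower bound for parity.

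\emph{Containment.} First I would observe that the inclusion is essentially definitional: a locally learnable concept class is privately PAC learnable with the additional requirement that the learner be $\eps$-local and make only polynomially many calls to $LR_{\mathrm{z}}$; dropping those extra requirements leaves exactly the definition of private PAC learnability. (One can recall here that $\eps$-local algorithms are $\eps$-differentially private by Claim~\ref{claim:composition}, and that each $LR_{\mathrm{z}}$-call reads a single entry, so the sample complexity is automatically polynomial.) The same remark applies verbatim in the computationally restricted setting.

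\emph{Strictness.} Next I would exhibit \parity\ over $\zo^d$ as a separating class. That \parity\ is (even efficiently) privately PAC learnable is Theorem~\ref{thm:parity-in-PPAC}. To show \parity\ is \emph{not} locally learnable, I would invoke Theorem~\ref{thm:nasq-nilr}: local learnability is equivalent to SQ learnability, so it suffices to prove that \parity\ is not SQ learnable, even allowing unbounded computation. This is the SQ lower bound of~\cite{BFJKMR-1994} (see also~\cite{Kearns98}): any SQ algorithm that learns \parity\ over $\zo^d$ under the uniform distribution, even to constant error and with a small failure probability, must either make more than $2^{d/3}$ queries or use a query of tolerance below $2^{-d/3}$; either way it violates the polynomial-query-count and inverse-polynomial-tolerance requirements of Definition~\ref{def:sq}. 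Hence \parity\ is not SQ learnable, and by the equivalence it is not locally learnable. Combined with containment, the inclusion is strict. Finally, since \parity\ is privately learnable under \emph{every} distribution but fails SQ (hence local) learnability already under the uniform distribution, the separation is distribution-free; and since \parity\ is \emph{efficiently} privately learnable while not locally learnable even without computational limits, the separation holds with computational restrictions as well.

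\emph{Main obstacle.} There is no substantial obstacle: the technical content lives in Theorems~\ref{thm:parity-in-PPAC} and~\ref{thm:nasq-nilr} and in the cited SQ lower bound. The two points that need a moment's care are (a) quoting a form of the SQ lower bound that allows a small failure probability $\beta>0$, to match our relaxed SQ model (Footnote~\ref{foot:sqerror})---the standard Fourier-analytic argument is robust to this---and (b) applying Theorem~\ref{thm:nasq-nilr} in the contrapositive direction, ``not SQ learnable $\Rightarrow$ not locally learnable'', which is the implication we actually use.
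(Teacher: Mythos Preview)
Your proposal is correct and matches the paper's own argument essentially line for line: containment via the observation that $\eps$-local algorithms are $\eps$-differentially private (Claim~\ref{claim:composition}), and strictness via \parity\ using Theorem~\ref{thm:parity-in-PPAC} together with the SQ lower bound of~\cite{BFJKMR-1994} transported through the equivalence of Theorem~\ref{thm:nasq-nilr}. The only addition is that you spell out the containment direction explicitly, whereas the paper leaves it implicit; your care with points (a) and (b) is exactly what the paper flags in the paragraph preceding the corollary.
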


\subsection{The Power of Interaction in Local Protocols}\label{sec:mparity}

To complete the picture of locally learnable concept classes, we consider how interaction changes the power of local learners (and, equivalently, how adaptivity changes SQ learning). \rnote{Removed sentence fragment; do we want to say more?}
%adaptivity gives more power in the SQ model. 
As mentioned in the introduction, interaction is very costly in typical applications of local algorithms.
We show that this cost is sometimes necessary,  by giving a concept class that an interactive algorithm can learn efficiently with a polynomial number of examples drawn from the uniform distribution, but for which any noninteractive algorithm requires an exponential number of examples under the same distribution.

Let \mparity\ be the class of functions $c_{r,a}:\zo^{d} \times \zo^{\log d} \times \zo \rightarrow \oo$ indexed by $r\in\zo^d$ and $a\in \zo$:
$$
c_{r,a}(x,i,b)=
\begin{cases}
(-1)^{ r \odot x +a} & \text{if } b=0,\\
(-1)^{r_i} & \text{if $b=1$,}
\end{cases}
$$
where $ r \odot x $ denotes the inner product of $r$ and $x$ modulo $2$, and $r_{i}$ is the $i$th bit of $r$.
This concept class divides the domain into two parts (according to the last bit, $b$). When $b=0$, the concept $c_{r,a}$ behaves either like the $\parity$ concept indexed by $r$, or like its negation, according to the bit $a$ (the ``mask'').
When $b=1$, the concept essentially ignores the input example and outputs some bit of the parity vector $r$.

Below, we consider the learnability of  $\mparity=\{c_{r,a}\}$ when the examples are drawn from the uniform distribution  over the domain $\{0,1\}^{d+\log d+1}$. In \secref{admparity}, we give a  {\em adaptive} SQ learner for $\mparity$ under the uniform distribution. The adaptive learner uses two rounds of communication with the SQ oracle: the first, to learn $r$ from the $b=1$ half of the input, and the second, to retrieve the bit $a$ from the $b=0$ half of the input via queries that depend on $r$.

In \secref{nasqnotsq}, we show that no \emph{nonadaptive} SQ learner which uses $2^{o(d)}$ examples can consistently produce a hypothesis that labels significantly more than  $3/4$ of the domain correctly. 
The intuition is that as the queries are prepared nonadaptively, any information about $r$ gained from the $b=1$ half of the inputs cannot be used to prepare queries to the $b=0$ half. Since information about $a$ is contained only in the $b=0$ half, in order to extract $a$, the SQ algorithm is forced to learn \parity, which it cannot do with few examples. Our separation in the SQ model directly translates to a separation in the local model (using Theorem~\ref{thm:nasq-nilr}).

The following theorem summarizes our results.

% \thmref{nasqnotsq} below summarizes our results. In order to state it precisely, we need some addition terminology.

\begin{theorem}\label{thm:nasqnotsq}{~}
  \begin{enumerate}
  \item There exists an efficient adaptive  SQ learner for
    $\mparity$ over the uniform distribution.

  \item No nonadaptive SQ learner can  learn $\mparity$ (with a
    polynomial number of queries) even under the uniform
    distribution on examples. Specifically, there is an SQ oracle $\cal O$ such that any nonadaptive SQ learner that
    makes $t$ queries to $\mathcal{O}$ over the uniform distribution, all with tolerance at least $2^{-d/3}$,  satisfies
    the following: if the concept $c_{\bar{r},\bar{a}}$ is drawn
    uniformly at random from the set of $\mparity$ concepts, then, with probability at least $\frac
    1 2 - \frac t {2^{d/3+2}}$ over $c_{\bar r, \bar a}$, the output hypothesis $h$ of the
    learner has $\trueerror(c_{\bar{r},\bar{a}},h) \geq \frac 1 4$.
  \end{enumerate}
\end{theorem}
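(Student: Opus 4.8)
The plan is to build a two-round SQ learner that reconstructs the target $c_{\bar r,\bar a}$ exactly. In the first round, for each coordinate $i\in[d]$ issue the Boolean query $g_i$ that returns the label $y$ on examples with $b=1$ and index $i$, and returns a fixed mean-zero parity (say $(-1)^{x_1}$) on all other examples; under the uniform distribution $\E[g_i]=\tfrac1{2d}(-1)^{\bar r_i}$, so with tolerance $\tfrac1{4d}$ the sign of the answer reveals $\bar r_i$, and these $d$ nonadaptive queries recover all of $\bar r$. In the second round, now knowing $\bar r$, issue the query $g^\ast$ that returns $y\cdot(-1)^{\bar r\odot x}$ on examples with $b=0$ and a mean-zero parity elsewhere; then $\E[g^\ast]=\tfrac12(-1)^{\bar a}$, so tolerance $\tfrac14$ recovers $\bar a$. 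Output the hypothesis $c_{\bar r,\bar a}$, whose error is $0$. All $d+1$ queries are $\{+1,-1\}$-valued and polynomial-time computable, the tolerances are $1/\poly(d)$, and the learner runs in polynomial time, giving an efficient adaptive SQ learner; it uses genuine adaptivity since $g^\ast$ depends on $\bar r$.

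\textbf{Part 2 (nonadaptive lower bound), setup.} Fix any nonadaptive SQ learner; since its queries do not depend on the answers, we may fix the sequence $g_1,\dots,g_t$ it issues (conditioning on its internal coins, and averaging at the end). The point, as in Kearns' SQ lower bound for \parity, is that nonadaptivity prevents the $b=0$ part of the queries from depending on $\bar r$, so they cannot extract $\bar a$. Accordingly I would take $\mathcal O$ to be the SQ oracle for the distribution $\DDD'_{\bar r}$ that is uniform on examples, labels the $b=1$ half honestly from $\bar r$, and relabels the $b=0$ half with a \emph{freshly re-randomized} mask $a'\sim\{0,1\}$ (equivalently, uniform random $\pm1$ labels there). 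Every answer of $\mathcal O$ is then manifestly independent of $\bar a$, so a nonadaptive learner talking to $\mathcal O$ outputs the same hypothesis $h$ whether the target is $c_{\bar r,0}$ or $c_{\bar r,1}$. But $c_{\bar r,0}$ and $c_{\bar r,1}$ are negations of each other on the $b=0$ half --- exactly half the domain --- so $h$ disagrees with at least one of them on at least a $\tfrac14$ fraction of the domain; hence, over a uniform $\bar a$, $\trueerror(c_{\bar r,\bar a},h)\ge\tfrac14$ with probability at least $\tfrac12$, \emph{provided} $\mathcal O$'s answers are within tolerance $\tau=2^{-d/3}$ of the true expectations under $c_{\bar r,\bar a}$.

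\textbf{Part 2, the key estimate.} It remains to show this tolerance condition holds for all but a $t/2^{d/3+2}$ fraction of $\bar r$. For a fixed query $g_j$, the difference between $\mathcal O$'s answer (the expectation under $\DDD'_{\bar r}$) and the true expectation under $c_{\bar r,\bar a}$ works out to $\tfrac12\bigl|\E_x[(-1)^{\bar r\odot x}D_j(x)]\bigr|$, where $D_j(x)=\tfrac12\E_i\bigl[g_j(x,i,0,+1)-g_j(x,i,0,-1)\bigr]\in[-1,1]$; this is a constant times the Fourier coefficient $\widehat{D_j}(\bar r)$ of a $[-1,1]$-valued function, and notably it does not depend on $\bar a$. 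Parseval gives $\sum_{\bar r}\widehat{D_j}(\bar r)^2\le 1$, so fewer than $2^{2d/3-2}$ values of $\bar r$ have $|\widehat{D_j}(\bar r)|>2^{1-d/3}$; a union bound over the $t$ queries leaves fewer than a $t\cdot2^{2d/3-2}/2^{d}=t/2^{d/3+2}$ fraction of ``bad'' $\bar r$. For every ``good'' $\bar r$, $\mathcal O$ is simultaneously a valid SQ oracle for $c_{\bar r,0}$ and $c_{\bar r,1}$, so by the previous paragraph $\Pr_{\bar r,\bar a}[\trueerror(c_{\bar r,\bar a},h)\ge\tfrac14]\ge(1-t/2^{d/3+2})\cdot\tfrac12\ge\tfrac12-t/2^{d/3+2}$, as claimed; the separation then transfers to the local model via Theorem~\ref{thm:nasq-nilr}.

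\textbf{Expected main obstacle.} I expect the delicate part to be Part 2: one must simultaneously (i) ensure the oracle is genuinely valid --- its answers within $\tau$ of the true expectation for the \emph{actual} target $c_{\bar r,\bar a}$, not merely for the re-randomized distribution --- (ii) keep those answers $\bar a$-oblivious, and (iii) peel off the $b=1$ contribution so that the only obstruction reduces to a parity-type Fourier coefficient controllable by a Parseval counting bound. A minor formal wrinkle is that the oracle above depends on $\bar r$ (it must, since the $b=1$ half legitimately leaks $\bar r$); this is harmless because the lower bound only gets stronger when the oracle is allowed to be maximally helpful about everything except the mask, but the statement should be read accordingly.
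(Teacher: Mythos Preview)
Your proposal is correct and follows essentially the same approach as the paper. In Part~1, both you and the paper read off the bits of $\bar r$ one at a time from the $b=1$ half in a first nonadaptive round, then use one adaptive query depending on $\bar r$ to extract $\bar a$ from the $b=0$ half; only the precise Boolean encodings of the queries differ. In Part~2, your oracle that answers according to $\DDD'_{\bar r}$ (honest on $b=1$, random mask on $b=0$) is exactly the paper's oracle $C_g+\langle f_g^1,c^1_{\bar r,\bar a}\rangle$: averaging over $a'$ on the $b=0$ half is the same as dropping the term $\langle f_g^0,c^0_{\bar r,\bar a}\rangle$. Your $D_j$ is the paper's $f_g^0$ restricted to $b=0$ and averaged over $i$, and your Parseval counting bound is the paper's orthogonality argument with the set $\{\sqrt{2}\,c^0_{r,0}\}$. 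The one cosmetic gap is that your oracle, as stated, may fail to be within tolerance for ``bad'' $\bar r$; the paper patches this by having the oracle return the exact expectation whenever $|\langle f_g^0,c^0_{\bar r,\bar a}\rangle|\ge\tau$, which you can do identically without affecting the rest of your argument.
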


\ifnum\full=1
\begin{corollary}
\label{cor:sep2}
The concept classes learnable by nonadaptive SQ learners (resp.\ noninteractive local learners) under the uniform distribution are a strict subset of the concept classes learnable by adaptive SQ learners (resp.\ interactive local learners) under the uniform distribution. This holds both with and without computational restrictions.
\end{corollary}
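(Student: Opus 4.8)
The plan is to exhibit a single witness --- the pair consisting of the concept class \mparity\ together with the uniform distribution on $\zo^{d+\log d+1}$ --- that lies in the ``adaptive'' (resp.\ ``interactive'') class but not in the corresponding ``nonadaptive'' (resp.\ ``noninteractive'') one. One direction of each inclusion is immediate: a nonadaptive SQ learner is a special case of an adaptive one, and a noninteractive local learner is a special case of an interactive one. So the only thing to certify is strictness, and it suffices to do this via the single witness above, in both the computationally bounded and the unbounded regime.

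First I would handle the SQ side. By \thmref{nasqnotsq}(1) there is an \emph{efficient} adaptive SQ learner for \mparity\ under the uniform distribution, so $(\mparity, \mbox{uniform})$ is adaptively --- indeed efficiently adaptively --- SQ learnable. By \thmref{nasqnotsq}(2), any nonadaptive SQ learner making $t$ queries of tolerance at least $2^{-d/3}$ produces, on a uniformly random target concept, a hypothesis with $\trueerror \geq \tfrac14$ with probability at least $\tfrac12 - t/2^{d/3+2}$; for $t = \poly(d)$ this lower bound tends to $\tfrac12$, exceeding the failure probability $\beta$ allowed by the definition of SQ learning for any fixed $\beta < \tfrac12$ --- and this holds even without bounding the learner's running time and even allowing inefficiently computable query functions. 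Hence $(\mparity, \mbox{uniform})$ is not nonadaptively SQ learnable, with or without computational restrictions, which establishes the strict inclusion on the SQ side.

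Next I would transport this to the local model via \thmref{nasq-nilr}, which says that (for a fixed example distribution) a concept class is adaptively SQ learnable iff it is interactively locally learnable and nonadaptively SQ learnable iff it is noninteractively locally learnable, with the simulations preserving efficiency modulo the transparency condition on randomizers. Transparency is not an obstacle here, since the Laplace-noise randomizer used to simulate an SQ query is transparent. Applying the two directions of the equivalence to the witness, $(\mparity, \mbox{uniform})$ is interactively locally learnable --- efficiently --- but not noninteractively locally learnable even without computational restrictions, which establishes the strict inclusion on the local side.

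The genuine difficulty lies not in this corollary, which is a routine combination of \thmref{nasqnotsq} and \thmref{nasq-nilr}, but in \thmref{nasqnotsq} itself: the easy half is the two-round adaptive learner (recover $r$ from the $b=1$ block via one pass of ``parity-reading'' queries, then use $r$ to phrase queries against the $b=0$ block that reveal the mask bit $a$), and the hard half is the Fourier-analytic lower bound showing that a nonadaptively fixed family of query functions cannot simultaneously correlate with enough distinct parities to pin down $a$ using only polynomially many examples. The one subtlety worth spelling out in the corollary is the quantifier matching between the learning definition (which requires success probability $\geq 1-\beta$ for \emph{every} $\beta \in (0,\tfrac12)$) and the ``$\tfrac12 - o(1)$'' failure bound of \thmref{nasqnotsq}(2); these are compatible, but one should note explicitly that a polynomial query budget drives the failure probability above any fixed $\beta < \tfrac12$.
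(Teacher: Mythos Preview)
Your proposal is correct and matches the paper's (implicit) approach: the corollary is stated without proof in the paper, and follows exactly as you describe by combining \thmref{nasqnotsq} (which supplies \mparity\ under the uniform distribution as the separating witness) with \thmref{nasq-nilr} (which transports the separation to the local model, preserving nonadaptivity/noninteractivity and efficiency). Your extra remarks on transparency and on the quantifier matching between the learning definition and the $\tfrac12-o(1)$ failure bound are accurate and useful elaborations of details the paper leaves to the reader.
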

\fi

\paragraph{Weak vs. Strong Learning.}
% We say that a (SQ) learning algorithm is a {\em weak} (SQ) learning algorithm if it produces a hypothesis whose error on the target concept is noticeably less than 1/2 (and not necessarily any $\epsilon > 0$). More precisely, weak learning (SQ) algorithm produces a hypothesis $h$ that with probability at least $1/poly(d,\mathrm{\it size}(c))$ satisfies $\trueerror(c,h) \leq 1/2-1/poly(d,\mathrm{\it size}(c))$ for a fixed polynomial $poly(\cdot,\cdot)$. To distinguish from this notion we will refer to our original definition of SQ learning (Definition~\ref{def:sq}) as {\em strong} SQ learning.

The learning theory literature distinguishes between {\em strong} learning, in which the learning algorithm is required to produce hypotheses with arbitrarily low error (as in \defref{PAC}, where the parameter $\alpha$ can be arbitrarily small), and {\em weak} learning, in which the learner is only required to produce a hypothesis with error bounded below $1/2$ by a polynomially small margin. The separation proved in this section (\thmref{nasqnotsq}) applies only to {\em strong} learning: although no nonadaptive SQ learner can produce a hypothesis with error much better than $1/4$,  it is simple to design a nonadaptive weak SQ learner for $\mparity$ under the uniform distribution with error exactly 1/4.

In fact, it is impossible to obtain an analogue of our separation for weak  learning.  The characterization of SQ learnable classes in terms of ``SQ dimension'' by Blum \etal\cite{BFJKMR-1994} implies that adaptive and nonadaptive SQ algorithms are equivalent for weak learning.
This
is not explicit in \cite{BFJKMR-1994}, but follows from the fact that the weak learner constructed for classes with low SQ dimension is non-adaptive. (Roughly, the learner works by checking if the concept at hand is approximately equal to one of a polynomial number of alternatives; these alternatives depend on the input distribution and the concept class, but not on the particular concept at hand.)
% NOT TRUE:
% Because boosting (see, \eg \cite{Kearns94}) allows one to convert any distribution-free weak learner into a distribution-free strong learner in a black-box fashion\anote{check that privacy does not interfere with this}, one also gets that no separation is possible between {\em distribution-free} adaptive and nonadaptive SQ learning.
%shiva-8/3/08: Remarkably, I found out from a recent paper {On The Power of Membership Queries in Agnostic Learning) that membership queries doesn't help in distribution free agnostic learning. So privacy modeling with membership queries is not that interesting.
%ads 2/24/09: Following up on Homin's question: boosting is indeed, itself, adaptive. So the remark above is not true.

\paragraph{Distribution-free vs Distribution-specific Learning} The results of this section concern the learnability of $\mparity$ under the uniform distribution. The class $\mparity$ does not separate adaptive from nonadaptive {\em distribution-free} learners, since $\mparity$ cannot be learned by any SQ learner under the distribution which is uniform over examples with $b=0$ (in that case, learning $\mparity$ is equivalent to learning $\parity$ under the uniform distribution). Separating adaptive from nonadaptive {\em distribution-free} SQ learning remains an open problem.\anote{Is it really open?}

\subsubsection{An Adaptive Strong SQ Learner for \texorpdfstring{\mparity}{MASKED-PARITY} over the Uniform Distribution} \label{sec:admparity}
Our adaptive learner for $\mparity$ uses two rounds of communication with the SQ oracle: first, to learn $r$ from the $b=1$ half of the input, and second, to retrieve the bit $a$ from the $b=0$ half of the input via queries that depend on $r$. \thmref{nasqnotsq}, part (1), follows from the proposition below.
\alg{Adaptive SQ Learner $\AAA_{\sf MP}$ for $\mparity$ over the Uniform Distribution}{
\begin{enumerate}
\item For $j=1,\dots, d$ (in parallel)
\begin{enumerate}
\item Define $g_j:D\to \zo$ by $$g_j(x,i,b,y) = (i=j)\;\wedge\;(b=1)\;\wedge\;(y=-1)\, ,$$
where   $x\in\zo^d$, $i\in\zo^{\log d}$, $b\in\zo$, and $y=c_{r,a}(x,i,b)\in\oo$.
\item  $answer_j\gets SQ_\DDD(g_j,\tau),$ where $\tau=\frac 1 {4d+1}$, and
  $\hat{r}_j \gets
\begin{cases}
1 & \text{if } answer_j>\frac 1 {4d};\\
0 & \text{otherwise.}
\end{cases}$
\end{enumerate}
\item
\begin{enumerate}
\item  $\hat{r}\gets \hat{r_1}\dots\hat{r_d}\in\zo^d$

\item Define $g_{d+1}:D\to \zo$ by $$g_{d+1}(x,i,b,y)= (b=0)\;\wedge\;(y\not=(-1)^{ \hat{r} \odot x })\,.$$
where $x\in\zo^d$, $i\in\zo^{\log d}$, $b\in\zo$, and $y=c_{r,a}(x,i,b)\in\oo$.
\item  $answer_{d+1}\gets SQ_\DDD(g_{d+1},\frac 1 5).$, and  $\hat{a} \gets
\begin{cases}
1 & \text{if } answer_{d+1}>\frac 1 {4};\\
0 & \text{otherwise.}
\end{cases}$
\item Output $c_{\hat{r},\hat{a}}.$
\end{enumerate}
\end{enumerate}
}

\begin{prop}[\thmref{nasqnotsq}, part (1), in detail]
  The algorithm $\AAA_{\sf MP}$  {\em efficiently} learns $\mparity$ (with probability 1)  in 2 rounds using $d+1$ SQ queries computed over the uniform distribution with minimum tolerance $\frac 1{4d+1}$.
\end{prop}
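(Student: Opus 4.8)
The plan is to prove the two assertions of the proposition separately: \emph{correctness} (for every target concept and every valid sequence of oracle responses, $\AAA_{\sf MP}$ outputs the target, hence a hypothesis of error $0$), and the \emph{resource bounds} (exactly $d+1$ queries over $2$ rounds, minimum tolerance $\tfrac1{4d+1}$, and $\poly(d)$ running time). Fix an arbitrary target $c_{r,a}\in\mparity$ and let $\DDD$ be the uniform distribution on $\zo^d\times\zo^{\log d}\times\zo$ labeled by $c_{r,a}$; all expectations below are over $\DDD$.

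For round $1$, I would compute $\E[g_j]$ exactly. For a uniform example $(x,i,b)$ the events $\{i=j\}$, $\{b=1\}$ and the string $x$ are mutually independent, with $\Pr[i=j]=1/d$ and $\Pr[b=1]=1/2$; and on the event $b=1$ the label equals $(-1)^{r_i}$, so conditioned on $i=j\wedge b=1$ the predicate ``$y=-1$'' holds exactly when $r_j=1$. Hence $\E[g_j]=\tfrac1{2d}$ if $r_j=1$ and $\E[g_j]=0$ if $r_j=0$. Since $answer_j\in[\E[g_j]-\tau,\E[g_j]+\tau]$ with $\tau=\tfrac1{4d+1}$, and since the constants are chosen so that $\tfrac1{2d}-\tau>\tfrac1{4d}>\tau$ (which reduces to the trivial inequality $\tfrac1{4d}>\tfrac1{4d+1}$), thresholding at $\tfrac1{4d}$ yields $\hat r_j=r_j$ for every $j$, whatever value in the tolerance interval the oracle returns. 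Thus $\hat r=r$ with probability $1$.

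For round $2$ I would substitute $\hat r=r$ into $g_{d+1}$: when $b=0$ the label is $(-1)^{r\odot x+a}$, so the predicate ``$y\neq(-1)^{\hat r\odot x}$'' is equivalent to $a=1$, and therefore $\E[g_{d+1}]=\Pr[b=0\wedge a=1]=\tfrac12$ if $a=1$ and $0$ if $a=0$. With tolerance $\tfrac15$ we get $answer_{d+1}\ge\tfrac12-\tfrac15=\tfrac3{10}>\tfrac14$ when $a=1$ and $answer_{d+1}\le\tfrac15<\tfrac14$ when $a=0$, so thresholding at $\tfrac14$ gives $\hat a=a$. Hence the output $c_{\hat r,\hat a}$ equals $c_{r,a}$, so $\trueerror(c_{r,a},c_{\hat r,\hat a})=0$ and the learner succeeds with probability $1$ (in fact deterministically, for every valid run of the oracle).

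For the resource bounds, each $g_j$ with $j\le d$ is a conjunction of equality tests on $O(\log d)$-bit objects, computable in time $\poly(d)$; $g_{d+1}$ additionally evaluates the inner product $\hat r\odot x$, also $\poly(d)$. The algorithm issues the $d$ queries of round $1$ in parallel and then a single round-$2$ query depending only on $\hat r$, for $d+1$ queries in $2$ rounds, with minimum tolerance $\min\{\tfrac1{4d+1},\tfrac15\}=\tfrac1{4d+1}$; all remaining bookkeeping is $O(d)$. I do not anticipate a genuine obstacle: the only delicate point is the choice of constants, namely checking that $\tau=\tfrac1{4d+1}$ is simultaneously small enough to keep $\tfrac1{2d}$ strictly above and $0$ strictly below the threshold $\tfrac1{4d}$ for all $d\ge1$, together with the analogous (and looser) check for the constant $\tfrac15$ in round $2$.
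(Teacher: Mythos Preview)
Your proposal is correct and follows essentially the same approach as the paper's own proof: compute $\E[g_j]$ exactly in each case, show the chosen tolerance separates the two possible values around the threshold, and verify the efficiency claims. If anything, you are slightly more explicit than the paper in checking the inequalities $\tfrac{1}{2d}-\tau>\tfrac{1}{4d}>\tau$ and the analogous round-2 separation.
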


\begin{proof}
  Consider the $d$ queries in the first round. If $r_j=1$, then
$$\E_{(x,i,b,y)\gets\DDD}[g_j(x,i,b,y)]=\Pr_{i\in_u\{0,1\}^{\log d}, b\in_u\{0,1\}}
[(i=j)\;\wedge\;(b=1)]=\frac{1}{2d}\,.$$ If $r_j=0$, then
$\E[g_j(x,i,b,y)]=0$. %(the probability is taken over the random choice of $(x,i,b)$ from $\DDD$).
Since the tolerance $\tau$ is less than $\frac 1 {4d}$, each query $g_j$
reveals the $j$th bit of $r$ exactly. Thus, the estimate $\hat{r}_j$ is exactly $r_j$, and $\hat r=r$.

Given that $\hat r$ is correct, the second round query $g_{d+1}$ is always 0 if $a=0$. If $a=1$, then $g_{d+1}$ is 1 exactly when $b=0$. Thus  $\E[g_{d+1}(x,i,b,y)]=\frac a 2$ (where $a\in\zo$). Since the tolerance is less than $\frac 1 4$, querying $g_{d+1}$ reveals~$a$: that is, $\hat{a}=a$, and so the algorithm outputs the target concept.
%The two stage adaptive SQ learner is:
%
%\begin{CompactEnumerate}
%\item Let $y=c_{r,a}(x,i,b)$. For $0 \leq j < d$ define $g_j(x,i,b,y)$, where $x\in\zo^d$, $i\in\zo^{\log d}$, $b\in\zo$, $y\in\oo$, to be the function $(j=i)\;\wedge\;(b=1)\;\wedge\;(y=-1)$.
%Then $\E[g_j(x,i,b,y)]=\frac{1}{2d}$ if $r_j=1$ and,
%otherwise, $\E[g_j(x,i,b,y)]=0$ (the probability is taken over the random choice of $(x,i,b)$ from $\DDD$).
%Hence, using queries $g_0,\ldots,g_{d-1}$ and setting tolerance $\tau < 1/4d$ allows us to learn $r$ exactly.
%
%\item Once we know $r$, we define the function $g_d(x,i,b,y)$ to be $(b=0)\;\wedge\;(y\not=(-1)^{ r \odot x })$. Then $\E[g_d(x,i,b,y)]=a/2$, and hence by issuing the query $g_d$ with error parameter less than $1/4$, we learn~$a$.
%\end{CompactEnumerate}

Note that the functions $g_1,\ldots,g_{d+1}$ are all computable in time $O(d)$, and the computations performed by $\AAA_{\sf MP}$ can be done in time $O(d)$, so the SQ learner is efficient. %Therefore, $\mparity$ is efficiently interactively locally learnable when the entries of the database are drawn from the uniform distribution (Theorem~\ref{thm:nasq-nilr}).
\end{proof}

% \ifnum\full=0
% We consider the concept class $\mparity=\{c_{r,a}\}$ when the underlying distribution $\XXX$ is uniform over binary strings of length $d+\log d+1$.
% Our adaptive learner for $\mparity$ uses two rounds of communication with the SQ oracle: first, to learn $r$ from the $b=1$ half of the input, and second, to retrieve the bit $a$ from the $b=0$ half of the input via queries that depend on $r$.
% The impossibility result (Theorem~\ref{thm:nasqnotsq}) for nonadaptive learners uses ideas from statistical query lower bounds (see, \eg \cite{Kearns98,BFJKMR-1994,Yang-2002}). The intuition is that as the queries are prepared nonadaptively, any inforddmation about $r$ gained from the $b=1$ half of the input cannot be used to prepare queries to the $b=0$ half. Since information about $a$ is contained only in the $b=0$ half, in order to extract $a$, the SQ algorithm is forced to learn \parity.  Our separation in the SQ model directly translates to a separation in the local model (using Theorem~\ref{thm:nasq-nilr}).\fi

\subsubsection{Impossibility of non-adaptive SQ learning for $\mparity$}
\label{sec:nasqnotsq}

The impossibility result (\thmref{nasqnotsq}, part (2)) for nonadaptive learners uses ideas from statistical query lower bounds (see, \eg \cite{Kearns98,BFJKMR-1994,Yang-2002}). 

\begin{proof}[Proof of \thmref{nasqnotsq}, part (2)]
Recall that the distribution $\DDD$ is uniform over $D=\zo^{d + \log(d)+1}$. For functions $f,h:\zo^{d+\log d + 1}\rightarrow\oo$, recall that $\trueerror(f,h) =\Pr_{x \sim \DDD}[f(x)\not=h(x)]$. Define the inner product of $f$ and $h$ as:
$$\langle f,h\rangle = \frac{1}{|D|}\sum_{x\in D}f(x)h(x) = \E_{x \sim \DDD}[f(x)h(x)].$$
% was: $$\langle f,h\rangle = \frac{1}{2^{d+\log d +1}}\sum_{x\in\zo^{d+\log d +1}}f(x)h(x) = \E[f(x)h(x)].$$
The quantity $\langle f,h\rangle=\Pr_{x \sim \DDD}[f(x)=h(x)]-\Pr_{x \sim \DDD}[f(x)\not=h(x)]=1-2\cdot \trueerror(f,h)$ measures the correlation between $f$ and $h$ when $x$ is drawn from the uniform distribution~$\DDD$.

Let the target function $c_{\bar{r},\bar{a}}$ be chosen uniformly at random from the set $\{c_{r,a}\}$. Consider a nonadaptive SQ algorithm that makes $t$ queries $g_1,\dots,g_t$.
\anote{minor edits based on Sofya's comments.}
%While the learner's final hypothesis need not be independent of $\bar{r}$ and $\bar{a}$,
The queries $g_1\through g_t$ \emph{must be independent of $\bar{r}$ and $\bar{a}$ since the learner is nonadaptive}. The only information about $\bar{a}$ is in the outputs associated with the $b=0$ half of the inputs (recall that $c_{\bar{r},\bar{a}}(x,i,b)=(-1)^{r_i}$ when $b=1$).

The main technical part of the proof follows the lower bound on SQ learning of $\parity$. Using Fourier analysis, we split the true answer to a query into three components: a component that depends on the query $g$ but not the pair $(\bar{r},\bar{a})$, a component that depends on $g$ and $\bar{r}$ (but not $\bar a$), and a component that depends on $g,\bar{r}$, and $\bar{a}$ (see Equation (\ref{eq:3terms}) below). We show that for most target concepts $c_{\bar{r},\bar{a}}$ the last component can be ignored by the SQ oracle. That is, a very close approximation to the correct output to the SQ queries made by the learner can be computed solely based on $g$ and $\bar{r}$.  Consequently, for most target concepts $c_{\bar r,\bar a}$, the SQ oracle can return answers that are independent of $\bar{a}$, and hence $\bar{a}$ cannot be learned.

% was: Following the lower bound on SQ learning of $\parity$, we decompose the queries in the Fourier basis over $\zo^{d+\log d+1}$ and show that for most target concepts, the exactly correct outputs to the SQ queries all lie very close to their average value, taken over all concepts. Hence, for most target concepts $c_{\bar r,\bar a}$, the SQ oracle can simply return constants independent of $\bar{r}$ and $\bar{a}$.
\snote{Made changes to this para and following equations.} \rnote{Needs editing}
Consider a statistical query $g:\zo^d \times \zo^{\log d} \times \zo \times \oo \rightarrow \oo$. 
For some $(x,i,b)\in D$, the value of $g(x,i,b,\cdot)$ depends on the label (i.e., $(g(x,i,b,+1)\not=g(x,i,b,-1))$) and otherwise $g(x,i,b,\cdot)$ is insensitive to the label (i.e., $(g(x,i,b,+1)=g(x,i,b,-1))$). 
Every statistical query $g(\cdot,\cdot,\cdot,\cdot)$ can be decomposed into a label-independent and 
label-dependent part. This fact was first implicitly noted by Blum~\emph{et al.}~\cite{BFJKMR-1994} 
and made explicit by Bshouty and Feldman~\cite{BshoutyFeldman-2001} (Lemma 30). 
We adapt the proof presented in~\cite{BshoutyFeldman-2001} for our purpose.

Let $$f_g(x,i,b) = \frac{g(x,i,b,1)-g(x,i,b,-1)}{2} \qquad \text{and}\qquad  C_g = \frac{1}{2}\E[g(x,i,b,1)+g(x,i,b,-1)]\, .$$
We can rewrite the expectation of $g$ on any concept $c_{\bar{r},\bar{a}}$ in terms of these quantities:\anote{cite this from [17]?}
$$\E[g(x,i,b,c_{\bar{r},\bar{a}}(x,i,b))] =  C_g + \langle f_g, c_{\bar{r},\bar{a}} \rangle\,.$$
% \begin{eqnarray*}
% \lefteqn{\E[g(x,i,b,c_{\bar{r},\bar{a}}(x,i,b))]} \\  & =& \E\left[g(x,i,b,-1)\frac{1-c_{\bar{r},\bar{a}}(x,i,b)}{2} + g(x,i,b,1)\frac{1+c_{\bar{r},\bar{a}}(x,i,b)}{2} \right ] \\
% & = & \frac{1}{2}\E[g(x,i,b,1)c_{\bar{r},\bar{a}}(x,i,b)] - \frac{1}{2}\E[g(x,i,b,-1)c_{\bar{r},\bar{a}}(x,i,b)] + \frac{1}{2} \E[g(x,i,b,1)] +\frac{1}{2}\E[g(x,i,b,-1)] \\
% & = &  \frac{1}{2}\E[(g(x,i,b,1)-g(x,i,b,-1))c_{\bar{r},\bar{a}}(x,i,b)] + C_g \\
% % \ \ \ \ \hfill{(\mbox{where } C_g = \frac{1}{2}\E[g(x,i,b,1)+g(x,i,b,-1)])} \\
% & = &\frac{1}{2}\E[2 \cdot f_g(x,i,b) \cdot c_{\bar{r},\bar{a}}(x,i,b)]+C_g \\
% % \ \ \ \ \ \ \ \ \ \ \ \ \ \ \ \ \ \ \ \ \ \ \hfill{ \left (\mbox{where } f_g(x,i,b) = \frac{g(x,i,b,1)-g(x,i,b,-1)}{2} \right )} \\
% & = &\langle f_g, c_{\bar{r},\bar{a}} \rangle + C_g\,.
% \end{eqnarray*}
Note that $C_g$ depends on the statistical query $g$, but not on the target function.
We now wish to analyze the second term, $\langle f_g,c_{\bar{r},\bar{a}} \rangle$, more precisely. To this end, we define the following functions parameterized by $s \in \zo$:
\begin{eqnarray}
c_{\bar{r},\bar{a}}^s(x,i,b)= \left\{
\begin{array}{ll}
0 & \textrm{if $b \neq s$,}\\
c_{\bar{r},\bar{a}}(x,i,b)& \textrm{if $b=s$},
\end{array}\right.
\quad \mbox{and} \quad
f_g^s(x,i,b)= \left\{
\begin{array}{ll}
0 & \textrm{if $b \neq s$,}\\
f_g(x,i,b)& \textrm{if $b=s$}.
\end{array}\right.
\label{eq:def_c_f}
\end{eqnarray}

Recall that $\langle f_g,c_{\bar{r},\bar{a}} \rangle$  is a sum over tuples $(x,i,b)$. We can separate the sum into two pieces: one with tuples where $b=0$ and the other with tuples where $b=1$. Using the functions $c_{\bar{r},\bar{a}}^s, f_g^s$ just defined,  we
 can write $\langle f_g,c_{\bar{r},\bar{a}} \rangle = \langle f_g^0,c_{\bar{r},\bar{a}}^0 \rangle + \langle f_g^1,c_{\bar{r},\bar{a}}^1 \rangle$. Hence,
 \begin{equation}
\E[g(x,i,b,c_{\bar{r},\bar{a}}(x,i,b))] = C_g + \langle f_g^0,c_{\bar{r},\bar{a}}^0 \rangle + \langle f_g^1,c_{\bar{r},\bar{a}}^1 \rangle.\label{eq:3terms}
\end{equation}

The inner product $\langle f_g^1,c_{\bar{r},\bar{a}}^1 \rangle$ depends on the statistical query $g$ and on $\bar{r}$, but not on $\bar{a}$. Thus only the middle term on the righthand side of \eqref{eq:3terms} depends on $\bar a$.

Consider an SQ oracle $\mathcal{O}={\mathcal{O}}_{c_{\bar{r},\bar{a}},\DDD} $ that responds to every query $(g,\tau)$ as follows (recall that $\DDD$ is the uniform distribution):
$${\mathcal{O}}_{c_{\bar{r},\bar{a}},\DDD} (g,\tau) = \left \{ \begin{array}{ll}
C_g + \langle f_g^1,c_{\bar{r},\bar{a}}^1 \rangle & \mbox{ if } |\langle f_g^0,c_{\bar{r},\bar{a}}^0 \rangle | < \tau, \\
\E[g(x,i,b,c_{\bar{r},\bar{a}}(x,i,b))] & \mbox{ otherwise}.
\end{array}
\right. $$
If the condition $|\langle f_g^0,c_{\bar{r},\bar{a}}^0 \rangle | < \tau$ is met for all the queries $(g,\tau)$ made by the learner, then the SQ oracle $\mathcal{O}$ never replies with a quantity that depends on $\bar{a}$. We now show that this is typically  the case.

Extend the definition of $c^s_{\bar{r},\bar{a}}$ (Equation~\ref{eq:def_c_f}) to any $(r,a)\in \zo^{d}\times \zo$ by defining
$$c_{r,a}^0(x,i,b)= \left\{ \begin{array}{ll}
0 & \textrm{if $b=1$,}\\
c_{r,a}(x,i,b) \left(=(-1)^{\langle r,x\rangle+a}\right) & \textrm{if $b=0$.}
\end{array}\right. $$
Note that for $r,r'\in\zo^{d}$ and $a\in \zo$,
$$ \langle c^0_{r,a}, c^0_{r',a}\rangle = \left\{\begin{array}{ll}
1/2 & \mbox{if $r=r'$,} \\
0 & \mbox{if $r\not=r'$.} \end{array} \right. $$
We get that $\{c^0_{r,0}\}_{r\in\zo^d}$ is an orthogonal set of functions, and similarly with $\{c^0_{r,1}\}_{r\in\zo^d}$.
The $\ell_2$ norm of $c_{r,0}^0$ is $\|c_{r,0}^{0}\|= \sqrt{\langle  c_{r,0}^{0},c_{r,0}^{0} \rangle}=1/\sqrt{2}$, so the set
$\{\sqrt{2} \cdot c^0_{r,0}\}_{r\in\zo^d}$ is orthonormal. A similar argument holds for $\{\sqrt{2} \cdot c^0_{r,1}\}_{r\in\zo^d}$.

Expanding the function $f_g^0$ in the orthonormal set $\{\sqrt{2} \cdot c^0_{r,0}\}_{r\in\zo^d}$, we get:
$$\sum_{r \in \zo^d} \langle f_g^0,\sqrt{2}\cdot c^0_{r,0} \rangle ^2 \leq \|f_g^0\|^2 = \langle f_g^0,f_g^0 \rangle \leq 1/2 \,.$$
(The first inequality is loose in general because the set $\{\sqrt{2} \cdot c^0_{r,0}\}_{r\in\zo^d}$ spans a subset of dimension $2^d$ whereas $f_g^0$ is taken from a space of dimension $2^{d+\log d + 1}$).
Similarly,
$$\sum_{r \in \zo^d} \langle f_g^0,\sqrt{2}\cdot c^0_{r,1} \rangle ^2 \leq \|f_g^0\|^2 = \langle f_g^0,f_g^0 \rangle \leq 1/2.$$
Summing the two previous equations, we get
$$\sum_{(r,a) \in \zo^{d} \times \zo} 2 \cdot \langle f_g^0,c_{r,a}^0 \rangle^2 \leq 1\,. $$

Hence, at most $2^{2d/3-1}$ functions $c_{r,a}$ can have $| \langle  f_g^0,c_{r,a}^0 \rangle | \geq 1/2^{d/3}$.  Since $\bar r,\bar a$ was chosen uniformly at random we can restate this: for any particular query $g$, the probability that $c^0_{\bar r, \bar a}$ has inner product more than $1/2^{d/3}$ with $f_g^0$ is at most $2^{2d/3-1}/2^{d+1} = 2^{-d/3}$. This is true regardless of $a$: since $c_{r,0}^0=-c_{r,0}^0$, we have
 $| \langle  f_g^0,c_{r,0}^0 \rangle | = | \langle  f_g^0,c_{r,1}^0 \rangle |$, so the event that $| \langle  f_g^0,c_{\bar r,\bar a}^0 \rangle | \geq  1/2^{d/3}$ happens with probability at most  $2^{-d/3}$ over $\bar r$, for $\bar a=0,1$.

% was: For all $r \in \zo^{d}$, the correlation between $c^0_{r,0}$ and $c^0_{r,1}$ is $-1$, and for all $r,r' \in \zo^{d}$ s.t.\ $r\not=r'$ and all $a,a' \in \zo$, the correlation between $c^0_{r,a}$ and $c^0_{r',a'}$ is zero. We get that the set of functions $\{c^0_{r,0}\}$ (where $r\in\zo^d$) are orthogonal to each other, and similarly the set of functions $\{c^0_{r,0}\}$

% was:
%For any fixed $t \in \zo^d$, the norm of $c_{t_0}^0$ is defined as  $\|c_{t,0}^{0}\|= \sqrt{\langle  c_{t,0}^{0},c_{t,0}^{0} \rangle}=1/\sqrt{2}$.
%Therefore, all the functions in the set  $S=\{\sqrt{2}c^0_{r,0}\}$ are orthonormal. For the function $f_{g^0}$,
%$$\sum_{r \in \zo^d} \langle f_g^0,\sqrt{2}c^0_{r,0} \rangle ^2 \leq \|f_{g^0}\|^2 = \langle f_{g^0},f_{g^0} \rangle \leq 1/2.$$
%Similarly, the functions in the set $\{\sqrt{2}c^0_{r,1}\}$ are orthonormal. Therefore,
%$$\sum_{r \in \zo^d} \langle f_{g^0},\sqrt{2}c^0_{r,1} \rangle ^2 \leq \|f_{g^0}\|^2 = \langle f_{g^0},f_{g^0} \rangle \leq 1/2.$$
%This implies that
%$\sum_{(r,a) \in \zo^{d} \times \zo} 2 \langle f_{g^0},c_{r,a}^0 \rangle^2 \leq 1. $
%Hence, at most $2^{2d/3-1}$ functions $c_{r,a}$ can have $| \langle  f_{g^0},c_{r,a}^0 \rangle | \geq 1/2^{d/3}$.

Recall that the learner makes $t$ queries, $g_1\through g_t$. Let $Good$ be the event that $|\langle f^0_{g_i},c_{\bar{r},\bar{a}} \rangle | \leq 1/2^{d/3}$ for all $i \in [t]$ (i.e., the oracle can answer each of the queries independently of $\bar a$).
%Since $c_{\bar{r},\bar{a}}$ was chosen uniformly at random, we have that
Taking a union bound over queries, we have $\Pr[Good] \geq %1-t\cdot(2^{2d/3-1}/2^{d+1}) =
 1-t/2^{d/3+2}$ (where the probability is taken only over $\bar r$).

% The learner can nonadaptively learn $\bar{r}$ (as explained in Section~\ref{sec:admparity}).  So we can assume w.l.o.g.\ that the learner learns $\bar{r}$. We noted above that $\langle f_g^1,c_{\bar{r},\bar{a}}^1 \rangle$ is independent of $a$. Now, if all the queries are  made with tolerance at least  $1/2^{d/3}$, then conditioned on event $Good$ the oracle $\mathcal{O}$ returns an answer $C_{g_i} + \langle f_{g_i}^1,c_{\bar{r},0}^1 \rangle$ (= $C_{g_i} + \langle f_{g_i}^1,c_{\bar{r},1}^1 \rangle$) for every query $g_i$. Therefore, conditioned on $Good$ the oracle answers are independent of $a$. We will be crucially relying on this fact.

% Let $h$ be the output hypothesis of the learner.  {\em Weak} learning $\mparity$ nonadaptively over the uniform distribution is simple.
% After learning $\bar{r}$, the learner can easily guarantee that $h(x,i,1)=c_{\bar{r},\bar{a}}(x,i,1)$ holds for all inputs for which $b=1$ (that is, half of all inputs). This is because $c_{\bar{r},\bar{a}}(x,i,b)$ is independent of $\bar{a}$ when $b=1$. For the other half of the inputs ($b=0$), the weak learner can just set
% $h(x,i,0)=c_{\bar{r},0}(x,i,0)$ with probability $\frac 1 2$ and $h(x,i,0)=c_{\bar{r},1}(x,i,0)$ with
% remaining probability $\frac 1 2$.

We argued above that there is a valid SQ oracle which, conditioned on $Good$, can be simulated using $\bar r$ but without knowledge of $\bar a$, as long as all queries are made with tolerance $\tau \geq   1/2^{d/3}$ (as in the theorem statement).
To conclude the proof, we now argue that no nonadaptive {\em strong} learner exists for $\mparity$ over the uniform distribution. For that we concentrate on the $b=0$ half of the inputs, where the outcome of $c_{\bar{r},\bar{a}}(\cdot)$ depends on $a$. Let $h$ be the output hypothesis of the learner.
% Let $p_1$ be the fraction of inputs $(x,i,0)$ on which $c_{\bar{r},0}(x,i,0)$
% makes a mistake (i.e. disagrees with $c_{\bar{r},\bar a}(x,i,0)$), and let $p_2$ be fraction of inputs $(x,i,0)$ on which $c_{\bar{r},1}(x,i,0)$
% makes a mistake.
For any input $(x,i,0)$ we have $c_{\bar{r},0}(x,i,0)=-c_{\bar{r},1}(x,i,0)$. Thus either $c_{\bar{r},0}(x,i,0) \neq h(x,i,0)$ or $c_{\bar{r},1}(x,i,0) \neq h(x,i,0)$, and so some choice of $\bar a$ causes the error of $h$ to be at least $1/4$.
% In other words, for $p=p_1+p_2=\frac 1 2$ fraction of all $(x,i,b)$ inputs either $c_{\bar{r},0}^0(x,i,b) \neq h(x,i,b)$ or $c_{\bar{r},1}^0(x,i,b) \neq h(x,i,b)$.

Let $A$ be the event that $\trueerror(h,c_{\bar{r},\bar a}) \geq 1/4$. Because $Good$ depends only on $\bar r$, we can think of $\bar a$ as being selected after the learner's hypothesis $h$ whenever $Good$ occurs. Thus,
%
%From the previous discussions, it follows that
$\Pr[A \,| \,Good] \geq 1/2$. Using $\overline{Good}$ to denote the complement of the event $Good$, we get
\begin{eqnarray*}
\Pr[A] & = & \Pr[A \wedge Good] + \Pr[A \wedge \overline{Good}] \\
& \geq & \Pr[A \,|\, Good] \Pr[Good]  + 0 \geq \frac{1}{2}(1-t/2^{d/3+2}).
\end{eqnarray*}
Therefore,  $\Pr[\trueerror(h,c_{\bar{r},\bar{a}}) \geq 1/4] \geq  \frac{1}{2}(1-t/2^{d/3+2})$, as desired.
\end{proof}
\fi

%shiva-added L and R in BLR.
%sofya: moved this only to the full version: the paper was already written by the time we talked to them.
%shiva-we still use the BLR result. So I disagree.
%sofya: shiva, if you had a discussion with them that helped you with actual results in this paper (not explaining the implications of their paper), feel free to ack. them.
\ifnum\full=0
paragraph{Acknowledgments.} We thank Enav Weinreb for many discussions related to the local model, Avrim Blum and Rocco Servedio for discussions about related work in learning theory, and Katrina Ligett and Aaron Roth for discussions about~\cite{BLR08}.
\else
\section*{Acknowledgments}
We thank Enav Weinreb for many discussions related to the local model, Avrim Blum and Rocco Servedio for discussions about related work in learning theory, and Katrina Ligett and Aaron Roth for discussions about~\cite{BLR08}. We also thank an anonymous reviewer for useful comments on the paper and, in particular, for the simple proof of \thmref{vcoccam}.

\bibliographystyle{acm}
\bibliography{pacparity}

\appendix
\section{Concentration Bounds} \label{sec:appchern}
We need several standard tail bounds in this paper.
%two versions of the Chernoff bound in this paper: the standard multiplicative form an The first one is the standard additive Chernoff bound and the second is Hoeffding's extension to Chernoff bounds.

% \begin{theorem}[Chernoff Bounds~\cite{chern}] \label{thm:chern}
% Let $X_1,\dots,X_n$ be $n$ independent identical Bernoulli random variables with $\E[X_i]=\mu$. Then for every  $\delta >0$, $$\Pr \left [ \frac{\sum_i X_i}{n} \geq \mu + \delta \right ] \leq \exp(-D(\mu + \delta || \mu)n)$$  and $$Pr \left [ \frac{\sum_i X_i}{n} \leq \mu -\delta \right ] \leq \exp(-D(\mu - \delta || \mu)n),$$ where $$D(x||y) = x \log \frac x y + (1-x)\log \frac {1-x}{1-y}$$ is the Kullback-Leibler divergence between Bernoulli distributed random variables with parameters $x$ and $y$ respectively.
% \end{theorem}

\begin{theorem}[Multiplicative Chernoff Bounds (e.g.~\cite{chern,AV79})] \label{thm:chern}
Let $X_1,\dots,X_n$ be i.i.d.\,Bernoulli random variables with $\Pr[X_i=1]=\mu$. Then for every  $\phi \in (0,1]$,
$$\Pr \left [ \frac{\sum_i X_i}{n} \geq (1+\phi)\mu\right ] \leq \exp\left(-\frac{\phi^2 \mu n}{3}\right)$$  and
$$Pr \left [ \frac{\sum_i X_i}{n} \leq (1-\phi)\mu \right ] \leq \exp\left(-\frac{\phi^2\mu n}{2}\right)\,.$$ \end{theorem}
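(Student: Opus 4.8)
The plan is to prove both inequalities by the textbook Chernoff (exponential-moment) method, and then reduce each of the two resulting exponents to an elementary estimate on the logarithm. Write $S=\sum_{i=1}^n X_i$, so $\E[S]=\mu n$. For the upper tail, fix a parameter $t>0$ and apply Markov's inequality to the nonnegative random variable $e^{tS}$:
\[
\Pr[S\ge(1+\phi)\mu n]=\Pr\big[e^{tS}\ge e^{t(1+\phi)\mu n}\big]\le e^{-t(1+\phi)\mu n}\,\E[e^{tS}].
\]
By independence and the fact that each $X_i$ is Bernoulli$(\mu)$, $\E[e^{tS}]=\prod_{i}\E[e^{tX_i}]=\big(1+\mu(e^t-1)\big)^n\le\exp\!\big(\mu n(e^t-1)\big)$, using $1+x\le e^x$. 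Substituting and minimizing the exponent $\mu n\big(e^t-1-t(1+\phi)\big)$ over $t>0$ (the minimum is at $t=\ln(1+\phi)$) gives
\[
\Pr[S\ge(1+\phi)\mu n]\le\exp\!\big(\mu n\,(\phi-(1+\phi)\ln(1+\phi))\big).
\]
It then remains to show $(1+\phi)\ln(1+\phi)-\phi\ge\phi^2/3$ for all $\phi\in(0,1]$.

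The lower tail is symmetric. For $t>0$,
\[
\Pr[S\le(1-\phi)\mu n]=\Pr\big[e^{-tS}\ge e^{-t(1-\phi)\mu n}\big]\le e^{t(1-\phi)\mu n}\prod_i\E[e^{-tX_i}]\le\exp\!\big(\mu n(e^{-t}-1+t(1-\phi))\big),
\]
and minimizing over $t>0$ (optimum at $t=\ln\tfrac1{1-\phi}$) yields
\[
\Pr[S\le(1-\phi)\mu n]\le\exp\!\big(\mu n\,(-\phi-(1-\phi)\ln(1-\phi))\big),
\]
so here I need $-\phi-(1-\phi)\ln(1-\phi)\le-\phi^2/2$ on $[0,1)$. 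The two scalar inequalities would be handled by one-variable calculus: for the lower-tail one, put $g(\phi)=(1-\phi)\ln(1-\phi)+\phi-\phi^2/2$, observe $g(0)=g'(0)=0$ and $g''(\phi)=\phi/(1-\phi)\ge0$, so $g'$ is nondecreasing, hence $g'\ge0$ and $g\ge0$; for the upper-tail one, put $f(\phi)=(1+\phi)\ln(1+\phi)-\phi-\phi^2/3$, so $f(0)=0$ and $f'(\phi)=\ln(1+\phi)-\tfrac23\phi$.

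The only step that needs genuine care — and the place a careless computation could slip — is this last scalar inequality for the upper tail. The function $f'$ is \emph{not} monotone on $[0,1]$: $f''(\phi)=\tfrac1{1+\phi}-\tfrac23$ is positive on $[0,\tfrac12)$ and negative on $(\tfrac12,1]$, so $f'$ rises then falls. Hence one cannot conclude $f'\ge0$ merely from $f'(0)=0$; instead the argument is that $f'$ is unimodal on $[0,1]$ with $f'(0)=0$ and $f'(1)=\ln2-\tfrac23>0$, which forces $f'\ge0$ throughout, and therefore $f$ is nondecreasing from $f(0)=0$. Everything else — the exponential Markov bound, the factorization of the moment generating function, and the one-variable optimization of the exponent — is entirely routine.
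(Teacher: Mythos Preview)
Your proof is correct; the exponential-moment/Markov argument, the optimization of the exponent, and the two scalar inequalities are all handled properly (in particular, your treatment of the non-monotone $f'$ in the upper-tail case is the right fix for a common gap). Note, however, that the paper does not actually prove this theorem: it is stated in the appendix on concentration bounds purely as a cited result (the ``e.g.~\cite{chern,AV79}'' in the header is the entirety of the justification), so there is no proof in the paper to compare against. Your write-up is the standard textbook derivation and would serve perfectly well if one wanted the argument spelled out.
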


\begin{theorem}[Real-valued Additive Chernoff-Hoeffding Bound~\cite{hoeff}] \label{thm:hoeff}
Let $X_1,\dots,X_n$ be i.i.d.\,random variables with $\E[X_i]=\mu$ and $a \leq X_i \leq b$ for all $i$. Then for every $\delta > 0$, $$\Pr \left [ \left | \frac{\sum_i X_i}{n} -  \mu \right | \geq \delta  \right ] \leq 2\exp\left(\frac{-2\delta^2 n}{(b-a)^2}\right).$$
\end{theorem}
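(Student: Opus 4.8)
The plan is to prove Theorem~\ref{thm:hoeff} by the standard Chernoff (exponential moment) method: reduce the two-sided statement to a one-sided tail, bound the moment generating function of each centered variable via Hoeffding's lemma, tensorize over the independent coordinates, and optimize the free parameter in the exponent.

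First I would observe that it suffices to prove the one-sided bound $\Pr[\frac1n\sum_i X_i - \mu \geq \delta] \leq \exp(-2\delta^2 n/(b-a)^2)$: applying the same bound to the variables $-X_1,\dots,-X_n$, which have mean $-\mu$ and take values in the interval $[-b,-a]$ of the same length $b-a$, controls the lower tail, and a union bound over the two events supplies the factor of $2$. For the one-sided bound, set $Y_i = X_i - \mu$, so that $\E[Y_i]=0$ and $a-\mu \leq Y_i \leq b-\mu$. For any $t>0$, Markov's inequality applied to the nonnegative variable $\exp(t\sum_i Y_i)$ gives $\Pr[\sum_i Y_i \geq n\delta] \leq e^{-tn\delta}\,\E[\exp(t\sum_i Y_i)] = e^{-tn\delta}\prod_{i=1}^n \E[e^{tY_i}]$ by independence.

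The heart of the argument is Hoeffding's lemma: a zero-mean random variable $Y$ supported in an interval of length $L$ satisfies $\E[e^{tY}]\leq e^{t^2L^2/8}$. I would prove it by convexity of $y\mapsto e^{ty}$ on the interval $[a-\mu,\,b-\mu]$, writing $e^{ty} \leq \frac{(b-\mu)-y}{L}\,e^{t(a-\mu)} + \frac{y-(a-\mu)}{L}\,e^{t(b-\mu)}$, taking expectations using $\E[Y]=0$, expressing the resulting bound as $e^{\psi(t)}$, and checking that $\psi(0)=\psi'(0)=0$ while $\psi''(t)\leq L^2/4$ — the last inequality reduces, after the natural reparametrization, to the elementary fact that $q(1-q)\leq \tfrac14$ for $q\in[0,1]$ — so that Taylor's theorem yields $\psi(t)\leq t^2L^2/8$. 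Plugging this in with $L=b-a$ gives $\Pr[\sum_i Y_i\geq n\delta]\leq \exp(-tn\delta + nt^2(b-a)^2/8)$, and minimizing the exponent over $t>0$ (at $t=4\delta/(b-a)^2$) produces $\exp(-2n\delta^2/(b-a)^2)$, completing the one-sided bound and hence the theorem.

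The only step that is more than routine bookkeeping is Hoeffding's lemma, and within it the estimate $\psi''(t)\leq L^2/4$; everything else — the symmetrization to reduce to one tail, Markov's inequality, the product form from independence, and the final one-variable optimization — is standard and short.
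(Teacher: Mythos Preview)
Your proof is correct and is the standard derivation of Hoeffding's inequality via the exponential moment method and Hoeffding's lemma. Note, however, that the paper does not actually prove this theorem: it is stated in the appendix as a known concentration bound with a citation to Hoeffding's original paper, so there is no ``paper's own proof'' to compare against. Your argument is exactly the classical one and would serve as a complete proof if one were required.
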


\begin{lemma}[Sums of Laplace Random Variables]\label{lem:laplace}
Let $X_{1},...,X_{n}$ be i.i.d.\,random variables drawn from  $\Lap(\lambda)$ (i.e., with probability density $h(x) = \frac{1}{2\lambda}\exp\left(-\frac{|x|}{\lambda}\right)$). Then for every $\delta>0$,
$$\Pr\left [  \left| \frac{\sum_{i=1}^n X_i}{n}  \right | \geq  \delta\right ] = \exp\left (-\frac{\delta^{2}n}{4\lambda^2}\right )\,.$$
\end{lemma}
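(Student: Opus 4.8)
The plan is to use the standard exponential-moment (Chernoff) method, which applies here because $X_i/\lambda$ is sub-exponential and $\Lap(\lambda)$ has a closed-form moment generating function (MGF). (I read the displayed ``$=$'' as ``$\le$'', which is what is meant and what is used downstream.) First I would record, for $X\sim\Lap(\lambda)$ and $|t|<1/\lambda$, the MGF
\[
  \E[e^{tX}] \;=\; \int_{-\infty}^{\infty} e^{tx}\cdot\frac{1}{2\lambda}e^{-|x|/\lambda}\,dx \;=\; \frac{1}{1-\lambda^2 t^2}\,,
\]
and then derive a quadratic (``sub-Gaussian-type'') upper bound valid when $t$ is not too large: since $\tfrac{1}{1-u}\le e^{2u}$ for $u\in[0,\tfrac12]$, we get $\E[e^{tX}]\le \exp(2\lambda^2 t^2)$ whenever $\lambda^2 t^2\le\tfrac12$.

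The core of the argument is the usual tensorize-and-optimize step, parallel to the proof of Theorem~\ref{thm:chern}. By independence, $\E\big[\exp(t\sum_{i=1}^n X_i)\big]=\prod_i\E[e^{tX_i}]\le\exp(2n\lambda^2 t^2)$, so Markov's inequality applied to $e^{t\sum_i X_i}$ gives, for every admissible $t\ge 0$,
\[
  \Pr\!\Big[\textstyle\sum_{i=1}^n X_i \ge n\delta\Big] \;\le\; \exp\!\big(-tn\delta + 2n\lambda^2 t^2\big)\,.
\]
Optimizing (here $t=\delta/(4\lambda^2)$, which satisfies $\lambda^2 t^2\le\tfrac12$ as long as $\delta$ is at most a constant multiple of $\lambda$ — the regime in which the lemma is actually invoked in Lemma~\ref{lem:sim1}) yields an exponent of order $-n\delta^2/\lambda^2$. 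Running the same argument for $-X_i$ and taking a union bound over the two one-sided events gives the stated two-sided bound on $\big|\tfrac1n\sum_i X_i\big|$, of the form $\exp(-\Theta(n\delta^2/\lambda^2))$.

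The one genuinely delicate point — and the only place I would actually compute rather than bookkeep — is pinning down the exact constant $\tfrac14$ in the exponent. Because $\Lap(\lambda)$ is only sub-exponential and not globally sub-Gaussian, the clean exponent $-n\delta^2/(4\lambda^2)$ is the correct leading-order behavior in the small-deviation regime, and recovering precisely that constant (and absorbing the factor $2$ from the union bound) calls for using the exact MGF $\tfrac{1}{1-\lambda^2 t^2}$ in place of the quadratic relaxation: minimize $\phi(t)=-t\delta-\ln(1-\lambda^2 t^2)$ over $t\in(0,1/\lambda)$, whose minimizer is $t^\star=(\sqrt{\lambda^2+\delta^2}-\lambda)/(\delta\lambda)$, and expand $\phi(t^\star)$ for $\delta$ small relative to $\lambda$. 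This is routine calculus. I would close by noting that since the lemma is applied with $\delta$ small compared to $\lambda$, the small-deviation form is exactly what is needed, so no attention to the exponential (large-$\delta$) tail is required.
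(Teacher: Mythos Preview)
Your approach is essentially identical to the paper's: compute the Laplace MGF $1/(1-\lambda^2 t^2)$, bound it above by an exponential in $(\lambda t)^2$, apply Markov to $e^{tS}$, and plug in a specific $t$ of order $\delta/\lambda^2$; the paper simply sets $t=\delta/(2\lambda^2)$ rather than optimizing, so your asymptotic analysis of the exact minimizer is more work than the original does. The paper does not fuss over the constant either---it writes $(1-(\lambda t)^2)^{-1}<\exp((\lambda t)^2)$ for $(\lambda t)^2<\tfrac12$ (which actually points the wrong way; your bound $\tfrac{1}{1-u}\le e^{2u}$ is the correct replacement), so your bookkeeping is already at least as careful as the original and yields the same $\exp(-\Theta(n\delta^2/\lambda^2))$ conclusion that is all Lemma~\ref{lem:sim1} needs.
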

The proof of this lemma is standard; we include it here since we were unable to find an appropriate reference.
\begin{proof} Let $S=\sum_{i=1}^n X_i$. By the Markov inequality, for all $t>0$,
 $$\Pr[S > \delta n] = \Pr[e^{tS} > e^{t\delta n}] \leq  \frac{\E[e^{tS}]}{e^{t\delta n}} =
 \frac{m_S(t)}{e^{t\delta n}},$$ where $m_S(t) = \E[e^{tS}]$ is the moment generating function of $S$.
 To compute $m_S(t)$, note that the moment generating function of $X\sim\Lap(\lambda)$ is $m_X(t) = \E[e^{tX}] = \frac{1}{1-(\lambda t)^{2}}$, defined for $0 < t < \frac 1 \lambda$. Hence $m_S(t) = (m_X(t))^n = (1-(\lambda t)^{2})^{-n} < \exp(n(\lambda t)^{2})$, where the last inequality holds for $(\lambda t)^{2} < \frac 1 2$.
We get that $\Pr[S > \delta n] \leq \exp(n((\lambda t)^{2} -
t\delta))$. To complete the proof, set $t = \frac \delta 2\lambda^2$ (note
that if $\delta<1$ and $\lambda > 1$ then $(\lambda t)^{2} =
(\frac \delta 2\lambda)^2 < \frac 1 2$). We get that $\Pr[S > \delta n] \leq
\exp\left(n\left(\left(\frac \delta 2\lambda\right)^{2} - \frac{\delta^2}2 \lambda\right)\right) =
\exp\left(-n\frac{\delta^2}4\lambda^2\right)$, as desired.
\end{proof}

\end{document}